  \renewenvironment{thebibliography}[1]{%
    \begin{oldthebibliography}{#1}%
      \setlength{\parskip}{0ex}%
      \setlength{\itemsep}{0ex}%
  }%
  {%
    \end{oldthebibliography}%
  }
 \newtheorem{thm}{Theorem}
\newtheorem{cor}{Corollary}
\newcolumntype{f}{>{$}l<{$}}
\newcolumntype{n}{l}
\newcolumntype{N}{>{\scriptsize}l}
\newcolumntype{v}[1]{>{\raggedright\hspace{0pt}}p{#1}}
\newcolumntype{V}[1]{>{\scriptsize\raggedright\hspace{0pt}}p{#1}}
\newcolumntype{B}[1]{>{\boldmath\DC@{.}{,}{#1}}l<{\DC@end}}
\newcolumntype{d}[1]{>{\DC@{.}{,}{#1}}l<{\DC@end}}
\newcolumntype{i}[1]{>{\DC@{.}{,}{#1}\mathnormal\bgroup}l<{\egroup\DC@end}}
\newcolumntype{s}[1]{>{\DC@{.}{,}{#1}\mathsf\bgroup}l<{\egroup\DC@end}}
\newcolumntype{R}[1]{%
  >{\begin{turn}{90}\begin{minipage}{#1}\scriptsize\raggedright\hspace{0pt}}l%
  <{\end{minipage}\end{turn}}%
}
\newcolumntype{x}{>{\scriptsize\raggedright\hspace{0pt}}X}
\definecolor{coquelicot}{rgb}{1.0, 0.22, 0.0}
\newcommand{\jelena}[1]{\textcolor{coquelicot}{\bf Jelena: #1}}
\long\def\symbolfootnote[#1]#2{\begingroup
\def\thefootnote{\fnsymbol{footnote}}\footnote[#1]{#2}\endgroup}
\newcommand{\Prob}{\mathbb{P}}
\newcommand{\realR}{\mathbb{R}}
\newcommand{\E}{\mathbb{E}}
\newcommand{\1}{\mathbbm{1}}
\newcommand{\cref}[1]{Corollary \ref{corollary:#1}}
\newcommand*{\addheight}[2][.5ex]{%
  \raisebox{0pt}[\dimexpr\height+(#1)\relax]{#2}%
}
\newtheorem*{condition1}{Assumption 1}
\newtheorem*{condition2}{Assumption 2}
\title{  Boosting in the presence of outliers: 
\\
adaptive classification with non-convex loss
functions }
\author{
 Alexander  Hanbo Li$^{\dag}$ and Jelena Bradic$^{\dag}$ \\ [1em]
\it  Department of Mathematics\\
\it University of California at San Diego$^{\dag}$ \\
}
\date{}
\begin{document}
\maketitle

\begin{abstract}
 This paper  examines  the  role  and efficiency of  the non-convex loss functions   for binary classification problems. In particular, we investigate how to design a  simple and   effective boosting algorithm that is robust to the outliers in the data.  The analysis of  the role of a particular non-convex loss for prediction accuracy varies depending on the diminishing  tail properties  of the gradient of the loss -- the ability of the loss to efficiently adapt to  the outlying data,  the local convex properties of the loss and the proportion of the contaminated data.   In order to use these properties efficiently, we propose a new family of non-convex losses  named $\gamma$-robust losses.  Moreover, we present a new  boosting framework, {\it Arch Boost},  designed  for augmenting the existing work   such that its corresponding classification algorithm is significantly more adaptable to the   unknown data contamination. 
 Along with the Arch Boosting framework, the non-convex losses lead to
 the new class of boosting algorithms, named adaptive, robust, boosting (ARB).
 Furthermore, we present theoretical examples that demonstrate the robustness properties of the proposed algorithms. In particular, we develop a new  breakdown point analysis and a new    influence function analysis  that demonstrate gains in robustness.   Moreover, we   present new theoretical results, based only on local curvatures, which may be used to establish statistical and optimization properties of the proposed Arch boosting algorithms with highly non-convex loss functions. Extensive numerical calculations are used to illustrate these theoretical properties and  reveal advantages over the existing boosting methods when data exhibits a number of outliers.

   \end{abstract}

\section{Introduction} \label{introduction}
 Recent advances  in technologies for cheaper and faster data acquisition and storage  have led to an explosive growth of data complexity in a variety of research areas such as high-throughput genomics, biomedical imaging, high-energy physics, astronomy and  economics. As a result,  noise accumulation, experimental variation and data inhomogeneity have become substantial. Therefore, developing  classification methods that are highly efficient and accurate in such settings,  is a   problem that is  of great practical importance.  However,
classification in such settings is known to poses many statistical challenges and calls for new methods and theories.
For binary classification problems, we assume  the presence of separable, noiseless data that belong to two classes and in which an adversary has corrupted a number of observations from both classes independently.  
There are a number of setups that belong to this general framework. 
A random flipped label design,  in which the   labels of the class membership  were randomly flipped, is one example that can occur very frequently, as labeling is prone to a number of errors, human or otherwise.
  Another example   is the presence of outliers in the observations, in which   a small number of observations from both classes have a variance that is larger than the noise of the rest of the observations. Such situations may naturally occur with the new era of big and heterogeneous data,  in which data are corrupted (arbitrarily or maliciously) and subgroups may behave differently; a subgroup might
only be one or a few individuals in  small studies that   would appear to be outliers within class data.  

Considerable effort has therefore been focused on finding methods that adapt to the relative error in the data.
 Although this has resulted in algorithms, e.g. \cite{GD04}, that achieve provable guarantees \citep{NDRT:13,K07} when contamination model \citep{Scott:13} is known or when multiple noisy copies of the data are available \citep{CSS+:11}, good generalization errors in the test set are by no means guaranteed. This problem is compounded when the contamination model is unknown, where outliers need to be detected automatically. Despite progress on outlier-removing algorithms, significant practical challenges (due to exceedingly restrictive conditions imposed therein) remain.
In this paper, we concentrate on the ensemble algorithms.
Among these,    AdaBoost   \citep{FreundSchapire:97} has 
  proven to be simple and effective in solving classification problems of many different kinds.
 The aesthetics and simplicity of AdaBoost and other forward greedy algorithms, such as LogitBoost
\citep{FHT:00}, also facilitated a tacit defense from overfitting, especially when
combined with early termination of the algorithm \citep{ZY05}.     
  \cite{FHT:00} developed a powerful statistical perspective,
which  views  AdaBoost as a gradient-based incremental search for a good
additive model using the exponential loss.
 The gradient boosting \citep{Friedman:99} and
AnyBoost \citep{Mason:99}   have used this approach to generalize the boosting
idea to wider families of problems and loss functions.  This criterion was motivated by the fact that the exponential loss is a convex surrogate of the hinge  or $0-1$ loss. Nevertheless, in the presence of label noise and/or outliers, the performance of all of them deteriorates rapidly \citep{D00}.  Although  algorithms like LogitBoost,  MadaBoost \citep{DomingoWatanabe:00},  Log-lossBoost \citep{CollinsSchapireSinger:02} are able to better tolerate noise than AdaBoost, they are still not insensitive to outliers. Hence, they are efficient when the data is observed with little or no noise. However, \cite{LS:10} pointed out that any boosting algorithm with convex loss functions is highly susceptible to a random label noise model. They constructed a simple example, from hereon denoted  Long/Servedio problem, that cannot be ``learned'' by the boosting algorithms above.  
  
Center to our analysis is the work by
\cite{Freund:09}. He proposed a  robust boosting algorithm based on  the Boost by majority (BBM) \citep{Freund:95}  and BrownBoost \citep{Freund:01} algorithm, which uses a non-convex loss function. Instead of  maximizing the margin, the algorithms achieve robustness by allowing  a preassigned $\theta$ error of margin maximization \citep{Servedio:03}. Moreover, in each step, the algorithms update and  solve a differential equation and update the preassigned remaining time $c$  or   the target error $\epsilon$. As the loss function changes with each iteration of the algorithm,  they do not agree with the general boosting interpretation  through additive models. Furthermore, with at least two preassigned parameters, each of them  is  difficult to implement and is highly inconsistent  with respect to minor changes in the settings.   Surprisingly, statistical and convergence properties of these two algorithms are still unknown. 
This leads to a natural question: how do we     develop  a simple but   effective boosting algorithm that has a non-convex loss function, that preserves boosting interpretation and that is robust to the noise in the data?  In this paper, we address this question and propose a fully automatic estimator, with no tuning parameters to be chosen, that has provable guarantees. 

To design the new framework we will explore and amend the drawbacks of the AdaBoost algorithm in the contaminated data setting. We successfully  identify that the
 AdaBoost's   sensitivity to outliers  comes from   the unbounded   weight assignment of the   misclassified observations. As outliers are more likely to be misclassified,  they are very likely to be assigned large weights and will be repeatedly refitted in the following iterations. This refitting will deteriorate seriously   the generalization performance of the algorithm, as the algorithm ``learns'' incorrect data distribution.  To achieve robustness, the algorithm should be able to abandon observations that are  on the extreme, incorrect side of the boundary.   Here, we theoretically and computationally investigate the applicability of non-convex loss functions for this purpose.     We  illustrate  that  the best weight updating rule is to assign a weight of $-\phi^{'}(y_i F(x_i))$ to each data point $(x_i,y_i)$, in which $\phi$ is the appropriate non-convex loss function.  
 We use a tilting argument, with non-convex losses. It is shown 
 that, if we use a non-convex loss, sufficiently tilted, i.e. such that  $-\phi^{'}(v)$ is small for all $v \ll 0$, then the outliers are eliminated successively. Hence, constant tilting or ``trimming'' is not sufficient for outlier removal.  In tilting the loss function, we are effectively preserving as much fidelity to the data as possible, while redistirbuting emphasis to different observations.
%
%
%
%
%
We propose a new {\it Arch Boosting} framework that  implements  the above tilting method and adjusts for optimality by a new search of the optimal weak hypothesis.  Moreover, we show that  the new framework  avoids overfitting much similar to the AdaBoost.  We  propose  a sufficient set of conditions   needed for a  loss function   to  allow for    good properties of the ArchBoost. We show that not every non-convex function satisfies such conditions; an example is the  sigmoid loss. 
However, we propose a family of loss functions that  balances  both the benefits of non-convexity and the empirical risk interpretation of boosting. 


 Properties  of  the boosting algorithms based on convex loss functions have been extensively studied (e.g. \cite{K03,ZY05}). Comparatively little is known for the non-convex losses, as the existing techniques  do not apply.  
We show that local convexity properties are sufficient for statistical consistency.
Furthermore, even though the proposed loss function is shown to enjoy the aforementioned local  convexity, it is largely unknown whether numerical algorithms can identify this local minimizer. Moreover, as our algorithm is not defined as a  gradient descent algorithm, we require a new approach for the proof of numerical convergence.
We develop a  new  sufficient optimality condition based on the ``hardness condition''   in the technical proofs.
By hardness property, we mean the orthogonality of the  new reweighted classifier   and the class membership vector.
Furthermore, we address the robustness and efficiency of the proposed method, with respect to the outliers. 
Although it is straightforward to provide such analysis for parametric linear models,   computations for classification with the nonparametric boundaries are far more challenging. We provide a novel  analysis, for which we propose a new finite sample breakdown point theory \citep{H68} and  show that    the influence function \citep{Hampel:74} is bounded  for appropriate class of classification problems.
To the best of our knowledge, this is the first result regarding  the robustness properties of the boosting algorithms, with respect to the presence of outliers. Our analysis allows for both  convex and non-convex loss function. 
We finalize the analysis with a proof of statistical consistency of the proposed method that includs many non-convex losses; we do so by  exploring local curvatures of the loss.

 In essence, this paper investigates the effects of  non-convex losses on a variety of boosting   algorithms in the  presence of unknown contamination of the data. In particular, we focus on how to design a new boosting framework  in order to improve the prediction accuracy of classification methods for the data with outliers.    
 The rest of the paper is organized as follows.
 We present a new {\it Arch boosting} framework  in Section \ref{Arch boost}; it is  designed  for augmenting the  boosting framework  such that its corresponding classification algorithm is significantly more adaptable to outliers. Section \ref{sec:3}  outlines a new family of loss functions that explores non-convexity  and present sufficient conditions for non-convex loss to be robust.  We present   theoretical analysis in Section \ref{theory}:  the  numerical and statistical convergence of the proposed algorithms are discussed in \ref{sec:4} and \ref{sec:5}, respectively. Moreover, we show theoretical robust properties in the Section \ref{sec:robust} with the breakdown point discussed in Section \ref{sec:bp} and influence function in Section \ref{sec:if}. Section \ref{experiments} contains numerical experiments on a number of examples of the loss functions belonging to the introduced family:  $\gamma$-robust loss, least squares, logistic, exponential  and truncated exponential loss.    We demonstrate both how to use these methods in practice  and compare them to the alternative of applying the non-augmented AdaBoost algorithm to the noisy data. The subsection \ref{gamma}     varies the  $\gamma$ parameter and  considers  examples of a   contaminated Gaussian distribution. All examples clearly illustrate  that the methods outlined in Section \ref{sec:3} are  far  more successful than the existing boosting methods.  Section \ref{ls} deals with the more complex situation of  Long/Servedio data for which we show that our Arch Boost method  outperforms the robust boosting method of  \cite{Freund:09}.  We also discuss outlier detection examples in Section \ref{outliers}, and apply our methods to three real datasets in Section \ref{sec:realdata}.

\section{Arch Boost} \label{Arch boost}

We consider a binary classification problem, with $\mathcal{X}$ denoting the domain of the $d$-dimensional variable $X$, and $\mathcal{Y}$ denoting the class label set that equals $Y \in \{-1,1 \}$. 
We aim to estimate a function $F(X):\mathbb R^p \to \mathbb R$ and assume that the training data $\{(X_i,Y_i), i=1,\dots,n \}$ are i.i.d. copies of $(X,Y)\in \mathcal{X} \times \mathcal{Y}$ with unknown distribution. The data consists of samples from the contaminated distribution that is composed of the true (uncontaminated) data and a fixed and unknown number of outliers in each of the classes.

From now on, we  let $\phi$ denote any differentiable loss function. Note that   $\phi$ does not need to be convex necessarily.  For such $\phi$, we define the $\phi-$risk $R_{\phi}$ and the empirical $\phi-$risk $\hat{R}_{\phi,n}$   as
\begin{eqnarray} \label{eq:3}
R_{\phi}(F) = \E_{(X,Y)\sim \mathbb{P}} [\phi(YF(X))],\; \;
\hat R_{\phi,n}(F) = \frac{1}{n} \sum_{i=1}^n \phi(Y_iF(X_i)),
\end{eqnarray}
where $n$ is the sample size, $\mathcal{S}_n=\{(X_i,Y_i)\}_{i=1}^n$ are i.i.d observations, and $\mathbb{P}$ is the true probability distribution on $\mathcal{X} \times \mathcal{Y}$. For simplicity of notation, we   write $R_{\phi}(F) = \E[\phi(YF(X))]$.  Note that  the observed samples $(X_i,Y_i)$ can come from a contaminated distribution, i.e.,  they are   i.i.d. samples from $ \delta \PP  + (1-\delta) \Delta$ for small, but positive contamination $\delta>0$.

We view boosting as a method that iteratively builds  an additive model,
 \[
 F_T(x)=\sum_{t=1}^T \alpha_t h_{j_t}(x)
 \]
where $ h_{j_t}$ belongs to a large (but we will assume finite VC-dimension) space of weak hypotheses, denoted with $\mathcal{H}$. The use of our framework in the presence of countably infinite features, also known as the task of
feature induction, can easily be established. 
%
%
Next, we  introduce  the new framework of the boosting in the presence of the noise, which we call an {\it Arch Boosting} framework. 

We design the Arch Boosting framework as  a stage-wise, iterative, minimization of the $\phi$-risk \eqref{eq:3}. However, when   $\phi$ is  non-convex this cannot be done by simply applying the well known Friedman's Gradient Boosting  \citep{Friedman:99}. The explicit updates are usually unavailable and standard numerical methods like Newton-Raphson,  which are suitable only for convex functions, are used. Instead,  we constrain the stage-wise minimization of the $\phi$-risk   to keep one additional important property of the boosting algorithms, namely  the {\it hardness condition} of   \cite{FreundSchapire:97}.
It is shown that hardness condition can easily distinguish between the outliers and the center of the data, when non-convex loss is used. 
 Moreover, it allows for  a fine tuning of the appropriate  optimal hypothesis  assignments   such that the minimization of the $\phi$-risk is approximately kept. Therefore, hardness condition
  allows us to simultaneously  escape the non-convexity in the   minimization  and to use non-convexity to separate the outliers from the  inliers.
This property,  from iteration $t$ to $t+1$,
is defined as
\begin{eqnarray} \label{eq:2}
\E_{w_{t+1}} [Yh_t(X)|X=x] = 0,
\end{eqnarray}
where $\E_{w}$ is defined as a weighted conditional expectation  
\begin{eqnarray} \label{eq:expectation}
\E_w[g(X,Y)|X=x]=\frac{\E[w(X,Y) g(X,Y)|X=x]}{\E[w(X,Y)|X=x]}.
\end{eqnarray}
The equation \eqref{eq:2} can be explained as progressing from step $t$ to $t+1$; the weights are updated from $w_t$ to $w_{t+1}$, such that $h_t(X)$  is orthogonal to  $Y$  with respect to  the inner product   defined on the reweighed data $w_{t+1}$.  In a certain sense, the weights $w_{t+1}$ are chosen as the most difficult for the weak hypothesis $h_t$. For the weight vector $w_{t+1}$ the hypothesis $h_t$ is not better than a random guess.   

Recall that  our goal is to find the optimal $F^* \in \mathcal{F}$, which minimizes the $\phi$-risk $R_{\phi}(F)$ for a suitable class of measurable functions $\mathcal{F}$. In the binary case, each instance $X_i$ is associated with a label  $Y_i \in \{-1,1 \}$ and the goal of learning is then to find a classifier $F_T$, such that the sign of $F_T(X_i)$ is equal to $Y_i$. In order to minimize $R_{\phi}(F)=\E[\phi(YF(X))]$, we minimize it at every point $x \in \mathcal{X}$ -- that is, given any $x \in \mathcal{X}$, considering $F(x)$ as a parameter  and denoting $\Phi(F(x)) = \E[\phi(YF(X))|X = x] = \E_Y [\phi(YF(x))]$, the problem is to find
\begin{eqnarray} \label{eq:4}
F^*(x) = \argmin_{F\in \mathcal{F}} \Phi(F(x)).
\end{eqnarray}
Table \ref{tab:1} contains a list of commonly used loss functions $\phi$ and the corresponding optimal classifiers $F^*$ when $\mathcal{F} = \mathcal{M}$, which is the family of all measurable functions.

  \begin{table}
    \centering
    \footnotesize
    \caption{The list of commonly used loss functions and its corresponding $F^*$}
   \label{tab:1}
    \begin{tabular}{@{}nd{11.1}*{5}{d{31.2}}d{3.1}d{3.2}@{}}
      \toprule
        \multicolumn{1}{@{}N}{Classification Method} &
        \multicolumn{3}{N@{}}{Population parameters} &
        \\
      \cmidrule(lr){2-3}
        &
        \multicolumn{1}{V{4.5em}}{Loss function $\phi(v)$} &
        \multicolumn{1}{V{6.5em}}{Optimal Minimizer $F^*(x)$} 
         \\
      \cmidrule(lr){2-2}\cmidrule(lr){3-3}   
        Logistic    &  \log(1+e^{-v}) & {    \left(  \log  {\PP(y=1|x)} - \log {\PP(y=-1|x)} \right) }     \\
        Exponential  &  e^{-v} & \frac{1}{2} \left(  \log  {\PP(y=1|x)} - \log {\PP(y=-1|x)} \right)          \\
        Least Squares   &  (v-1)^2   &     \PP(y=1|x) - \PP(y=-1|x)     \\
        Modified Least Squares   &  [(1-v)_+]^2   &  \PP(y=1|x) - \PP(y=-1|x)   \\
      \bottomrule
    \end{tabular}
  \end{table}

Provided that $\mathcal{F} = \mathcal{M}$ and for any $x \in \mathcal{X}$, $\Phi(F(x))$ has only one critical point at $F^*(x)$ that is the global minimum, we can find $F^*(x)$ by the first order optimality condition
\begin{equation}\label{eq:5-a}
\frac{d}{dF(x)} \Phi|_{F=F^*} (F(x))= 0.
\end{equation}
Expanding on the above first order optimality conditions, we obtain
 $$\E[Y\phi^{'} (YF^*(X)) | X = x] = 0,$$
 where $\phi'$ is defined as the first order derivative $\frac{d}{dv} \phi(v)$. In classification problems, the input $v$ of loss function $\phi$ is $v=YF(X)$ -- that is, the margin of a classifier $F$ applied to  a data point $(X,Y)$.
Rewriting  the expectation in terms of the class probabilities, we obtain the following representation of the first order optimality conditions
\begin{eqnarray} \label{eq:5}
\frac{\phi^{'}(-F^*(x))}{\phi^{'}(F^*(x))} = \frac{\Prob(Y=1|X=x)}{\Prob(Y=-1|X=x)}.
\end{eqnarray}
From now on, without confusion, we   write $\PP(Y=1|x)$ instead of $\PP(Y=1|X=x)$. We aim to mimic  equation ebove in each of the iteration steps of the proposed framework. In more details,   at iteration $t$, with the  current estimate $F_{t-1}(x) = h_1(x) + \cdots + h_{t-1}(x)$ at hand,
 we wish to find a new weak hypothesis $h_{t} \in \mathcal{H}$, such that $F_t(x) = F_{t-1}(x) + h_{t}(x)$ with   $h_t(x)$ that solving the following equation
\begin{eqnarray} \label{eq:6}
\frac{\phi^{'}(-F_{t-1}(x)-h_{t}(x))}{\phi^{'}(F_{t-1}(x)+h_{t}(x))} = \frac{\Prob(Y=1|x)}{\Prob(Y=-1|x)}.
\end{eqnarray}
If we can always accurately find such a weak hypothesis (i.e. $\mathcal{H}$ is rich enough and we know the true distribution), then this process will terminate in just one step. However, complications arise from two aspects. First, we only have a restricted family $\mathcal{H}$ of weak hypotheses. Therefore, at each iteration, an approximated weak hypothesis, which is the closest approximation in $\mathcal{H}$, will be found.
Secondly, the equation \eqref{eq:6} alone  cannot be efficiently utilized  since $\mathbb{P}(Y=1|x)$, which is the ultimate goal of classification problems, is unknown to us. However, we propose to solve   approximated equation, where we replace $\mathbb{P}(Y=1|x)$  with   the weighted conditional probability $\mathbb{P}_{w_t}(Y=1|x)$ at each iteration $t$ of the proposed algorithm. Hence,   $\PP_{w_t}(Y=1|x) = \E_{w_t}[\1_{[Y=1]}|X=x]$ with the weighted conditional expectation  defined in \eqref{eq:expectation}.

In more details, at each iteration $t$, for a given  $\mathbb{P}_{w_t}(Y=1|x)$, we find $h_t(x)$ such that it solves the estimating equation 
\begin{eqnarray} \label{eq:6a}
\frac{\phi^{'}(-F_{t-1}(x)-h_{t}(x))}{\phi^{'}(F_{t-1}(x)+h_{t}(x))} = \mathbb{C}_{t-1}(x) \ \frac{\Prob_{w_t}(Y=1|x)}{\Prob_{w_t}(Y=-1|x)},
\end{eqnarray}
for a suitable function $ \mathbb{C}_{t-1}(x)$ to be specified later.
In order to do this, we will first discuss how to define the weights ${w_t}$  at each iteration  $t$ and postpone the method  of finding  $h_t$ for later.

Since $h_{t}$ is only an approximation to the optimal increment at step $t$, instead of adding itself to $F_{t-1}$ we multiply it by a constant $\alpha_{t}$ and search for the best constant $\alpha_t$. The best $\alpha_t$ should be such that the updating classifier  $\alpha_t h_t(x)$ 
approaches the optimal one, defined in equation \eqref{eq:5}.    Hence, we 
  define the optimal $\alpha_t$ as the solution to the following optimization problem  
\begin{eqnarray} \label{eq:7}
\alpha_{t} = \argmin_{\alpha \in \mathbb{R}} \E  \left[ \phi(Y(F_{t-1}(X)+\alpha h_t(X))) \right].
\end{eqnarray}
Similarly to the AdaBoost, at each iteration $t$, the data will be reweighed according to the weights $w_t$. We explore relation \eqref{eq:7} further to find the optimal weights $w_t$. 
For that purpose, we observe that for a given $F_{t-1}$ (from a previous iteration) and $h_t$ (solving \eqref{eq:6a}) the optimal $\alpha_t$ then satisfies 
\begin{eqnarray}\label{eq:7a}
\E \left[ -\phi^{'}(YF_{t-1}(X)+Y\alpha_{t} h_t(X)) \cdot \alpha_{t} Yh_{t}(X) \right] = 0.
\end{eqnarray}
Then, we recall the hardness condition of the AdaBoost algorithm:  
  the weights $w_t$ should be updated such that the weighted misclassification error of the most recent weak hypothesis is about $50\%$.  According to \eqref{eq:2},
  this can be achieved by defining  the weights $w_{t+1}(X,Y)$ such that 
  \begin{eqnarray} \label{eq:8}
\E \left[ w_{t+1}(X,Y) \cdot Y\alpha_{t}h_{t}(X) \right] = 0.
\end{eqnarray}
    Now, in the  Arch Boost framework the most recent weak hypothesis is $\alpha_{t}h_{t}$. 
 Hence, by contrasting the last two equations, \eqref{eq:7a} and \eqref{eq:8},  we define the weights to be
\begin{eqnarray*}
w_{t+1}(X,Y) = -\phi^{'}(YF_{t-1}(X)+Y\alpha_{t} h_t(X)) = -\phi^{'}(YF_{t}(X))
\end{eqnarray*}
such that both the    hardness condition  and the optimality of the updating hypothesis are satisfied.

Having defined the weight updating rule, we go back to equation \eqref{eq:6}  to define the optimal weak hypothesis $h_t(x)$.  We do so by finding a relationship between ${\Prob_{w_t} (Y=-1|x)}$ and ${\Prob  (Y=-1|x)}$ and    defining the function $ \mathbb{C}_{t-1}$ of \eqref{eq:6a}. We multiply \eqref{eq:6} with $\frac{\phi^{'}(F_{t-1}(x))}{\phi^{'}(-F_{t-1}(x))}$ on both sides to obtain
\begin{align} \label{eq:9}
\frac{\phi^{'}(F_{t-1}(x))\phi^{'}(-F_{t-1}(x)-h_t(x))}{\phi^{'}(-F_{t-1}(x))\phi^{'}(F_{t-1}(x)+h_t(x))} = \frac{\phi^{'}(F_{t-1}(x))\Prob(Y=1|x)}{\phi^{'}(-F_{t-1}(x))\Prob(Y=-1|x)}.
\end{align}
Then, it is easy to observe that 
\begin{eqnarray*}
\phi^{'}(F_{t-1}(x))\Prob(Y=1|x) &=& \E[\1_{[Y=1]}\phi^{'}(YF_{t-1}(X))|X=x] \\
&=& -\E[\1_{[Y=1]}w_t(X,Y) |X=x] = -\E_{w_t}[\1_{[Y=1]}|X=x] \\
&=& -\PP_{w_t}(Y=1|x).
\end{eqnarray*}
Hence, we have 
\begin{align} \label{eq:9a}
\frac{\phi^{'}(F_{t-1}(x))\phi^{'}(-F_{t-1}(x)-h_t(x))}{\phi^{'}(-F_{t-1}(x))\phi^{'}(F_{t-1}(x)+h_t(x))}
=
\frac{\Prob_{w_t} (Y=1|x)}{\Prob_{w_t} (Y=-1|x)},
\end{align}
that is, equation \eqref{eq:6a} is true for 
$$\mathbb{C}_{t-1}(x) = \phi^{'}(-F_{t-1}(x))/\phi^{'}(F_{t-1}(x)).$$
Observe that the right hand side of \eqref{eq:9a} can be estimated by a weak learner at each iteration $t$. For reference, we list the weak hypothesis $h(x)$ for several commonly used loss functions in the Table \ref{tab:2}. 
  \begin{table}[h]
    \centering
    \footnotesize
    \caption{The list of commonly used loss functions and their weak hypotheses $h$}
   \label{tab:2}
    \begin{tabular}{@{}nd{11.1}*{4}{d{38.2}}d{3.1}d{3.2}@{}}
      \toprule
        \multicolumn{1}{@{}N}{Classification Method} &
        \multicolumn{3}{N@{}}{Population parameters} &
        \\
      \cmidrule(lr){2-3}
        &
        \multicolumn{1}{V{4.5em}}{Loss function $\phi(v)$} &
        \multicolumn{1}{V{7.5em}}{Optimal weak hypotheses $h(x)$} 
         \\
      \cmidrule(lr){2-2}\cmidrule(lr){3-3}   
        Logistic    &  \log(1+e^{-v}) & {     \log  {\PP_w(Y=1|x)} - \log {\PP_w(Y=-1|x)} }     \\
        Exponential  &  e^{-v} &  \frac{1}{2} \left(  \log  {\PP_w(Y=1|x)} - \log {\PP_w(Y=-1|x)} \right)         \\
        Least Squares   &  (v-1)^2   &   C(1-F(x))(1+F(x))/(CF(x) + 1)  \\
        Modified Least Squares   &  [(1-v)_+]^2   &  C(1-F(x))(1+F(x))/(CF(x) + 1)  \\
      \bottomrule
      * $C = \Prob_{w}(Y=1|x)-\Prob_{w}(Y=-1|x)$
    \end{tabular}
  \end{table}

Note that the weak hypotheses of the least squares loss   and modified least squares loss  depend  on the current estimate $F(x)$ and the weighted conditional probability  $\mathbb{P}_{w_t}(Y=1|x)$, which is different from that of Gradient boosting \citep{Friedman:99}.
Lastly, we summarize the above procedure in the following Algorithm \ref{boosting}, which we call {\it Arch Boost}. The assumption that $\Phi(F(x))$ has only one critical point does not require the loss function $\phi$ to be convex and hence the Arch Boost algorithm can be applied to many non-convex loss functions. We illustrate this in  Section \ref{sec:3}  by proposing a family of  robust boosting algorithms based on the Arch Boosting framework.

In the step (b) of the Algorithm \ref{boosting},  any classifier can be used; one example is a decision tree for which case $\PP_{w_t}(Y=1|x)$ is the proportion of  the training samples with label $1$ in the terminal node, where $x$ ends up. For instance, in terminal region $R$, let $\mathcal{I}_+$ stands for the index set of data points with positive label, then $\PP_{w_t}(Y=1|x) = \sum_{i \in \mathcal{I}_+}w_t(x_i,y_i) / \sum_{j 
\in R} w_t(x_j,y_j)$.

Note that in the step $(c)$, whenever $\Prob_{w_t}(Y=1|x)\Prob_{w_t}(Y=-1|x)=0$, then $h_t(x)$ can be $\infty$ or $-\infty$. In order to deal with this problem, one method is to only update the points with $-\infty < F_t(x) < \infty$. If $F_t(x)$ becomes infinity after some step $t$, then we keep it to be infinity and do not update further. Finally, we just set $sign(\infty) = 1$ and $sign(-\infty)=-1$. Another method is simply applying a map $v \mapsto a v + \frac{1-a}{2}$ to the class probability estimations where $a<1$ is a constant very close to 1 (e.g. $a=0.9999$).

\begin{algorithm} [h]
\caption{Arch Boost ($\phi$)} 
\label{boosting}
\begin{algorithmic}[3]
\Procedure{Arch Boost($\phi$)}{}
    \State Given: $(x_1,y_1), \dots, (x_n, y_n)$
    \State Initialize the vector of weights $w_0$, e.g. $w_0(x_i,y_i) = 1/n$
    
    \For {$t = 1, \dots, T$}
        \State (a) Normalize the weight by assigning  $ w_t \gets \ {w_{t}}/{\sum_i w_{t}(x_i,y_i)}$  
        \State (b) Fit the classifier to obtain a class probability estimate $\Prob_{w_t}(Y=1|x) \in [0,1]$
        \State using current weights $w_t$ on the training data.
        \State (c) Set $h_t(x)$ to be the solution of \eqref{eq:9a}.
        \State (d) Find $\alpha_t$ by solving \eqref{eq:7}. 
        \State (e) Set $F_{t}(x) = F_{t-1}(x)+\alpha_{t} h_{t}(x)$
	\State (f) Update the weights  $w_{t} \gets -\phi^{'}(yF_{t}(x))$
   \EndFor
   
   \State Output the classifier:
   \begin{equation*}
	sign \left( F_T(x) \right)
   \end{equation*}
\EndProcedure
\end{algorithmic}
\end{algorithm}

\section{Robust boosting algorithms} \label{sec:3}

In this section, we propose a new class of boosting algorithms especially designed to be resilient to the presence of extraneous noise in the data.  Whether this noise comes in the forms of mislabeled data points,   additional variance within class data, or malicious data, the proposed algorithm adapts to it with great success. 
We begin the section by  proposing a new loss function, which we have  named two-robust loss function. Furthermore, we characterize regularity conditions that a loss function needs to satisfy, in order to be appropriate for Arch Boosting framework.  Finally, we provide a family of loss functions that is not convex and that satisfies the newly defined regularity conditions and the corresponding family of robust boosting algorithms called ARB-$\gamma$.


\subsection{A non-convex loss}

An approach of non-convex functions has been recognized as successful in the existing literature. Both \cite{Freund:01} and \cite{Freund:09}   have utilized it to propose boosting methods that are resilient to the presence of outliers in the data. However,  both loss functions require a number of tuning parameters.  The behavior of the algorithm is hindered by the  optimal choice of such parameters. Moreover, such optimal choices are data dependent and not isolated as universal by either approach. 

Driven by the need to propose an alternative loss function that can be easily used in many situations, we propose the following   non-convex loss function:
\begin{eqnarray} \label{eq:11}
\phi(v) =  {4}{(1+e^{v})^{-2}}.
\end{eqnarray}
The proposed loss function \eqref{eq:11} has optimal $F^*$, the weak hypothesis, and the weight updating rule as presented in Table \ref{tab:3}. We call the function \eqref{eq:11} {\it two-robust loss} and it  belongs to a family of loss functions that will be discussed in Section \ref{sec:3c}. By adopting the Arch Boosting framework of Algorithm \ref{boosting} we illustrate the nice  downweighting property of the  new weight updating rule with this new loss function. 
  \begin{table} [h]
    \centering
    \footnotesize
    \caption{Properties of the two-robust loss}
   \label{tab:3}
    \begin{tabular}{@{}nd{3}*{3}{d{11.2}}d{25.1}d{13.2}@{}}
      \toprule
        \multicolumn{1}{@{}N}{ Arch 2-boosting  } &
        \multicolumn{1}{N@{}}{Optimal parameters } &
        \multicolumn{1}{N@{}}{ } 
        \\
      \cmidrule(lr){2-5}
        &
        \multicolumn{1}{V{4.5em}}{Loss function $\phi(v)$} &
        \multicolumn{1}{V{6.5em}}{Optimal minimizer $F^*(x)$} 
        &
          \multicolumn{1}{V{5.5em}}{Weak hypothesis $h(x)$} 
          &
            \multicolumn{1}{V{4.5em}}{Weight vector  $w(x,y)$} 
         \\
      \cmidrule(lr){2-2}\cmidrule(lr){3-3}   \cmidrule(lr){4-4} \cmidrule(lr){5-5}    
        Arch     &  {4}{(1+e^{v})^{-2}} & {      \log  \frac{\PP(Y=1|x)}{\PP(Y=-1|x)}  }  & {\log  \frac{\PP_{w_t}(Y=1|x)}{\PP_{w_t}(Y=-1|x)}} & {   \frac{  e^{yF_t(x)}  }{(1+e^{yF_t(x)})^{3}} }    \\
      \bottomrule
    \end{tabular}
  \end{table}
In Figure \ref{fig:1}, we plot the two-robust loss \eqref{eq:11}, together with exponential and logistic regression losses and the corresponding weight updating rules.

We observe that the two-robust loss is non-convex and bounded from above when $v \to -\infty$. Therefore, the outliers, even if large in size, will only have bounded influence on the classification. Moreover, by investigating the weight updating rule, we  observe
that the more  misclassified the data point is, the smaller  the weight updating function will be. The algorithm,   in fact, abandons the data points that are repeatedly misclassified and are far  from the Bayes boundary. This phenomenon disappears when one uses exponential loss or logistic loss and, in fact, any commonly used convex loss function. To the best of our knowledge, no existing loss function satisfies this nice property without requiring a-priori fixed tuning parameters. 
By plugging the two-robust loss into Arch Boost, we obtain new boosting algorithm that we named  { \it Adaptive Robust Boost-2}, denoted with  ARB-2 from hereon.

 \begin{figure}[h!]
\centering
\includegraphics[scale=0.4]{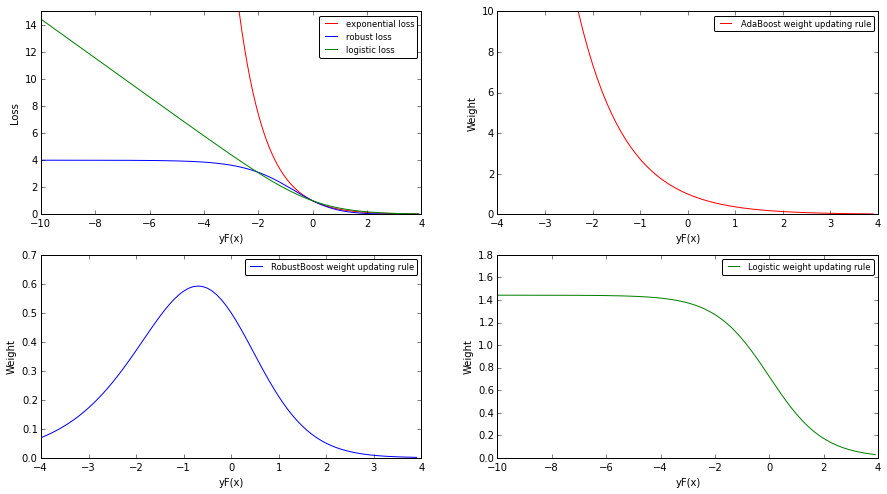}
\caption{Comparison of the two-robust loss, the exponential loss and the  logistic loss functions (left-top) together with the corresponding weight updating functions.}
\label{fig:1}
\end{figure}

Next, we  consider a simple two dimensional binary classification example consisting of two mislabeled points near the classification boundary.  The example is artificially created to illustrate the point of an adaptive classifier,  when there are outliers in the data.   We evenly put 121 points in the region $[0,3] \times [0,3]$, that is, $\mathcal{X} = \{{\mathbf x}_{i,j}: i = 0, \cdots, 10, \; j = 0, \cdots, 10 \} \subset \mathbb{R}^2$, where ${\mathbf x}_{i,j} = (0.3i, 0.3j)$. For each $ {\mathbf x}_{i,j}$, the corresponding $y_{i,j} = 1$ if $i+j \le 60$ and $y_{i,j} = -1$ otherwise. Then we flip the labels of $( {\mathbf x}_{2,4}, y_{2,4})$ and $( {\mathbf x}_{3,4}, y_{3,4})$. The step sizes, $\alpha_t$, for both algorithms are set to be constant 0.5, and the number of iterations is set as $400$. We show the difference between the Real AdaBoost and the ARB-2 in Figure \ref{fig:example}. It can be seen that the decision boundary of AdaBoost are influenced by the two blue ``abnormal'' data points. But for ARB-2, the decision boundary stays  the same as if there were no such abnormal points, hence it adapts to the outliers.

\begin{figure}[h]
\centering
\includegraphics[scale=0.35]{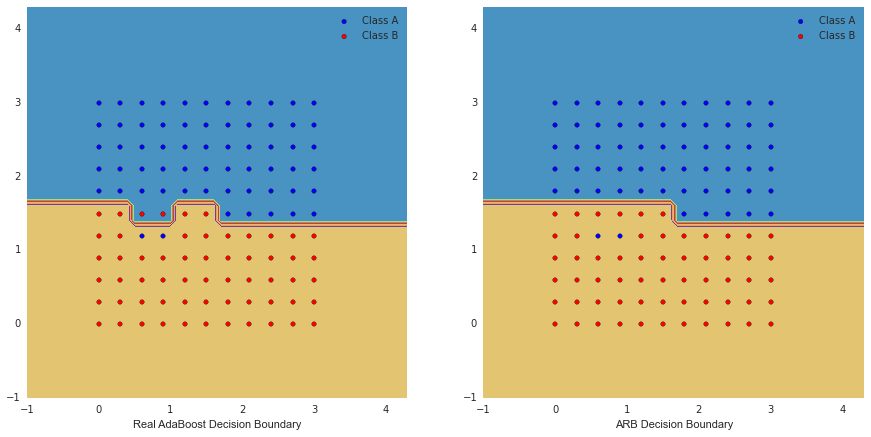}
\caption{ The classification boundary of the Real AdaBoost (left) and of the ARB-2 (right).} \label{fig:example}
\end{figure}

\subsection{Loss functions for Arch Boost}
In the previous section, we have seen an example of a robust boosting algorithm.
Nevertheless, there are plenty of other non-convex functions which leads to the question: can we use any of them as a loss function for the Arch Boost? The answer is no, if we do not impose   auxiliary  conditions. In this section, we discuss what kinds of  conditions need to be imposed so that a non-convex function becomes a suitable loss function for the Arch Boost framework \ref{boosting}.

Recall that in a binary classification problem, we want to minimize $\Phi(F(x))$,   to find
\begin{eqnarray} \label{eq:13}
  \min_{v \in \mathbb{R}} \Bigl[ \Prob(Y=1|x) \phi(v) + \Prob(Y=-1|x) \phi(-v) \Bigl].
\end{eqnarray}

Since $\Prob(Y=1|x) + \Prob(Y=-1|x) = 1$, the equation above becomes  a convex combination of $\phi(v)$ and $\phi(-v)$. Therefore, one necessary condition on loss functions is that \eqref{eq:13} has  a unique optimal solution in $\mathbb{R}$. Observe that if $\mathcal{F}$ is a class of all measurable functions, then $F(x)$ can take any   real value  for every $x \in \mathcal{X}$. This condition is not equivalent to the convexity of the loss function but rather to the local convexity around the true parameter  of interest, $F^*(x)$.  In the next section we  present a family of non-convex loss functions that possess this property.
Hence, we present the set of regularity conditions in the Definition below.

\begin{definition} \label{def:1}
A function $\phi$ is an \textbf{Arch boosting loss function} if it is differentiable and
\begin{enumerate} [(i)]

\item $\phi(v) \ge 0$ for all $v \in \realR$ and $\inf_{v\in \mathbb{R}} \phi(v) = 0$;

\item for any $0 < \alpha < 1$, $\alpha \phi(v) + (1-\alpha)\phi(-v)$ has only one critical point $v^*$ which is the global minimum;  

\item for any $0 \le \alpha \le 1$ and $\alpha \neq \frac{1}{2}$, 
$\inf \{ \alpha \phi(v) + (1-\alpha)\phi(-v): v(2\alpha-1) \le 0 \} > \inf \{ \alpha \phi(v) + (1-\alpha)\phi(-v): v \in \mathbb{R} \}$.

\end{enumerate}
\end{definition}

Conditions $(i)$ and $(iii)$ together   imply  that  $\phi$ is an upper bound of the $0$-$1$ loss up to a constant scaling. 
Condition $(iii)$ is  a ``classification calibration''  \citep{Bartlett:06} and is considered the weakest possible condition imposed on $\phi$ for which  a measurable function $F(x)$ which minimizes the $\phi-$risk $R_{\phi}(F)$  will also have the   risk close to the minimal one; in other words, close to that of  the optimal $F^*(c)$  that  creates a Bayes boundary.  
If the function $\phi$ is convex, then condition $(iii)$ is satisfied as long as $\phi$ is differentiable and $\phi^{'}(0) < 0$.  However, when considering non-convex losses, the set of regularity conditions doesn't exists in the current literature. The above  framework    includes non-convex loss functions $\phi$ and  differs from the existing literature on convex losses  in that it includes an additional condition $(ii)$   (see Section \ref{theory}).

\begin{lemma} \label{lemma:2}
A  positive function $\phi$ that is continuously differentiable, convex and such that $\phi^{'}$ is not  equal to a constant satisfies Condition (ii).
\end{lemma}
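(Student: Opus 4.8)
The plan is to fix $\alpha \in (0,1)$ and study $g(v) := \alpha\phi(v) + (1-\alpha)\phi(-v)$, showing it has exactly one critical point, which is its global minimum. First I would record that $g$ is convex: both $v \mapsto \phi(v)$ and $v \mapsto \phi(-v)$ are convex (the latter being $\phi$ precomposed with an affine map), and $g$ combines them with the positive coefficients $\alpha$ and $1-\alpha$. Convexity already supplies half of the statement for free, since for a convex differentiable function every critical point is automatically a global minimizer; it therefore remains to prove that a critical point \emph{exists} and that it is \emph{unique}.

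For existence I would differentiate, $g'(v) = \alpha\phi'(v) - (1-\alpha)\phi'(-v)$, and note that $g'$ is nondecreasing: $\phi'$ is nondecreasing by convexity, so $v \mapsto \alpha\phi'(v)$ and $v \mapsto -(1-\alpha)\phi'(-v)$ are each nondecreasing. The key input is positivity of $\phi$. Writing $L := \lim_{u\to-\infty}\phi'(u)$ and $M := \lim_{u\to+\infty}\phi'(u)$ (which exist in $[-\infty,\infty]$ by monotonicity), a convex function with $M<0$ would satisfy $\phi(u)\to-\infty$ as $u\to+\infty$, and one with $L>0$ would satisfy $\phi(u)\to-\infty$ as $u\to-\infty$; either way $\phi \ge 0$ is violated, so $L \le 0 \le M$. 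Since $\phi'$ is nondecreasing and not constant, in fact $L < M$. Consequently $\lim_{v\to-\infty} g'(v) = \alpha L - (1-\alpha)M$ and $\lim_{v\to+\infty} g'(v) = \alpha M - (1-\alpha)L$ are \emph{strictly} negative and strictly positive respectively (equality to $0$ would force $L = M = 0$, contradicting $L<M$). By the intermediate value theorem $g'$ has a finite zero, so a critical point exists.

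For uniqueness I would argue by contradiction: if $g$ had two distinct critical points $v_1 < v_2$, convexity would make $g$ constant, hence $g' \equiv 0$, on $[v_1,v_2]$; as $g'$ is a sum of two nondecreasing functions that is constant there, each summand is constant, forcing $\phi'$ to be constant on $[v_1,v_2]$ and on the reflected interval $[-v_2,-v_1]$. I expect this uniqueness step to be the \textbf{main obstacle}, because ruling it out from the bare hypothesis that $\phi'$ is merely \emph{globally} nonconstant is delicate: a nondecreasing $\phi'$ can be flat on two reflected intervals while remaining nonconstant overall. The clean way to close the argument is to upgrade "$\phi'$ not constant" to effective strict monotonicity of $\phi'$ on the relevant range (equivalently, strict convexity of $\phi$ where it matters): then $g'$ is strictly increasing and its zero is unique. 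I would therefore isolate and prove precisely this upgrade as the technical heart of the lemma; once it is in hand, existence together with strict monotonicity of $g'$ immediately yields the single critical point required by Condition (ii) of Definition~\ref{def:1}.
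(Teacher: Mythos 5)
Your existence argument is correct, and it is not the paper's argument --- it is in fact a repair of it. The paper's proof never invokes positivity of $\phi$: it picks an arbitrary $v^*$ with $\phi'(v^*)\neq\phi'(-v^*)$ and asserts that $D(v^*)D(-v^*)<0$, where $D(v)=\alpha\phi'(v)-(1-\alpha)\phi'(-v)$. That implication is false. For the logistic loss $\phi(v)=\log(1+e^{-v})$ with $\alpha=0.9$ and $v^*=1$ one computes $D(1)\approx-0.17$ and $D(-1)\approx-0.63$, so the product is positive even though $\phi'(1)\neq\phi'(-1)$; the unique zero of $D$ sits at $\log\frac{\alpha}{1-\alpha}\approx 2.2$, to the right of both test points. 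Your route --- positivity forces $L:=\lim_{u\to-\infty}\phi'(u)\le 0\le M:=\lim_{u\to+\infty}\phi'(u)$, nonconstancy of the nondecreasing $\phi'$ gives $L<M$, hence $g'$ has a strictly negative limit at $-\infty$ and a strictly positive limit at $+\infty$ --- is a sound way to obtain the sign change, and it uses the positivity hypothesis that the paper's proof silently ignores.

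The uniqueness step, where your proposal stops, is a genuine gap in your write-up, but your diagnosis of why it resists proof is exactly right, and you should push it one step further: the obstruction you describe is realizable, so the lemma as stated is false. Take $\phi(v)=1+\bigl((|v|-1)_+\bigr)^2$: it is positive, continuously differentiable, convex, and $\phi'$ is not constant; yet $\phi$ is even, so $\alpha\phi(v)+(1-\alpha)\phi(-v)=\phi(v)$ for every $\alpha$, and the critical set is the whole interval $[-1,1]$, violating Condition (ii) of Definition \ref{def:1}. The paper's proof commits precisely the error you declined to commit: from ``$D$ is monotone increasing, continuous, and changes sign'' it concludes that $D(v)=0$ has one and only one solution, but convexity makes $D$ only nondecreasing, and a nondecreasing continuous function with a sign change can vanish on an entire interval. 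Your proposed repair --- strict convexity, so that $g'$ is strictly increasing --- does complete the argument and covers the logistic, exponential, and least-squares losses; note only that the paper also applies the lemma to the modified least-squares loss $[(1-v)_+]^2$, which is not strictly convex, so the sharp additional hypothesis is the weaker one your own analysis isolates: $\phi'$ must not be constant on any reflected pair of intervals $[v_1,v_2]$ and $[-v_2,-v_1]$ with values of the same sign (both zero included).
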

By Lemma \ref{lemma:2}, we know that the logistic, the exponential, the least square and the modified least square loss   are all Arch boosting loss functions.  Differentiability of the loss function is a technical condition and is not crucial for the proposed framework.
The hinge loss $\phi_{hinge}(v) =(1-v)_+$  is not differentiable but can be shown to satisfy Conditions $(i)$-$(iii)$. However, if we plug in the hinge loss to the equation \eqref{eq:5}, we cannot easily find the optimal solution $F^*(x)$.
However, not every differentiable and non-convex loss function satisfies regularity conditions above.


\begin{remark}
The sigmoid loss $\phi_{sig}(v) =(1+e^v)^{-1}$ is differentiable and satisfies Condition (iii), but it does not satisfy Condition (ii) and hence is not an Arch Boosting loss function.
\end{remark}

Finally, we provide a more general characterization for the  loss functions, including not necessarily convex loss functions, which satisfy Condition $(ii)$.
\begin{lemma} \label{lem:iii}
Let $\phi$ be a positive, continuously differentiable loss function such that 
  $\phi^{'}(v) \neq 0$ for all $v \in \mathbb{R}$.
  Let $g:(0,\infty) \to (0,1)$, be defined as  
  $g(v) := \frac{\phi^{'}(v)}{\phi^{'}(-v)}$.
   Then $\phi$ satisfies  Condition (ii) as long as
  function $g$
  is a decreasing, bijection.
\end{lemma}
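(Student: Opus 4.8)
The plan is to unpack Condition (ii) directly. Fix $\alpha\in(0,1)$ and set $\Phi_\alpha(v):=\alpha\phi(v)+(1-\alpha)\phi(-v)$, so that I must show $\Phi_\alpha$ has a single critical point which is its global minimum. Differentiating gives $\Phi_\alpha'(v)=\alpha\phi'(v)-(1-\alpha)\phi'(-v)$, and since $\phi'$ never vanishes I may divide by $\phi'(-v)$ to see that $v$ is a critical point if and only if
\[
g(v)=\frac{\phi'(v)}{\phi'(-v)}=\frac{1-\alpha}{\alpha}.
\]
This converts the entire statement into a question about the level sets of $g$.

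The key structural observation is that $g$ extends to a continuous, strictly decreasing \emph{bijection of $\mathbb{R}$ onto $(0,\infty)$}. Indeed, $g$ is continuous as a quotient of the continuous, nonvanishing $\phi'$; it satisfies the functional relation $g(v)\,g(-v)=1$ and $g(0)=1$. Hence on $(-\infty,0)$ we have $g(v)=1/g(-v)$, which (using the hypothesis that $g$ maps $(0,\infty)$ decreasingly onto $(0,1)$) maps $(-\infty,0)$ decreasingly onto $(1,\infty)$; gluing the two pieces at $v=0$ where $g=1$ yields the claimed global bijection. Now, for fixed $\alpha\in(0,1)$ the number $(1-\alpha)/\alpha$ is a fixed element of $(0,\infty)$, so there is exactly one $v^{*}=v^{*}(\alpha)$ with $g(v^{*})=(1-\alpha)/\alpha$. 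This settles existence and uniqueness of the critical point in a single stroke (and incidentally locates it: $v^{*}>0$ for $\alpha>\tfrac12$, $v^{*}=0$ for $\alpha=\tfrac12$, $v^{*}<0$ for $\alpha<\tfrac12$).

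It remains to show $v^{*}$ is the global \emph{minimum}. Here I would factor
\[
\Phi_\alpha'(v)=\phi'(-v)\bigl[\alpha\, g(v)-(1-\alpha)\bigr].
\]
The bracket is a decreasing function of $v$ (because $g$ is decreasing) vanishing exactly at $v^{*}$, hence it is positive for $v<v^{*}$ and negative for $v>v^{*}$. Since $\phi'$ is continuous and nowhere zero it has a single sign on all of $\mathbb{R}$, and for a loss that decreases in the margin this sign is negative (as for the logistic, exponential, least-squares entries and for the two-robust loss \eqref{eq:11}). Therefore $\Phi_\alpha'<0$ on $(-\infty,v^{*})$ and $\Phi_\alpha'>0$ on $(v^{*},\infty)$, so $\Phi_\alpha$ strictly decreases then strictly increases and $v^{*}$ is its unique global minimizer, which is precisely Condition (ii).

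The main obstacle is the minimum-versus-maximum dichotomy, which is exactly where the sign of $\phi'$ enters: the monotonicity and bijectivity of $g$ pin down the location and uniqueness of the critical point but say nothing about its type, and one genuinely needs $\phi'<0$ to exclude the critical point being a maximum. I would therefore be careful to record explicitly that $\phi'$ has constant sign (by continuity and nonvanishing) and that this sign is negative for the decreasing losses under consideration. The only other delicate step is the gluing of $g$ across $v=0$ to obtain the global bijection onto $(0,\infty)$; this is where the relation $g(-v)=1/g(v)$ together with $g(0)=1$ does the essential work, and it is what guarantees a solution of $g(v)=(1-\alpha)/\alpha$ for \emph{every} admissible $\alpha$, including those with $(1-\alpha)/\alpha\ge 1$ where the root lies on the negative axis.
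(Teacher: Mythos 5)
Your proof is correct and follows essentially the same route as the paper's: both reduce Condition (ii) to the level-set equation $g(v)=(1-\alpha)/\alpha$, use bijectivity of $g$ for existence and uniqueness of the critical point, and use monotonicity of $g$ to identify it as the minimum. Two refinements in your write-up are worth noting. First, the paper splits into the cases $a\in(\tfrac12,1)$ and $a\in(0,\tfrac12)$ via the symmetry $g(-v)=1/g(v)$ and silently omits $a=\tfrac12$; your gluing of $g$ into a strictly decreasing bijection of $\mathbb{R}$ onto $(0,\infty)$ with $g(0)=1$ handles all $\alpha\in(0,1)$ uniformly. Second, and more substantively, you are right that the sign of $\phi'$ is genuinely needed and is not a cosmetic point: the paper's step ``if $v>v^*$ then $g(v)<g(v^*)$, that is $f'(v)>f'(v^*)$'' tacitly uses $\phi'<0$ in exactly the factorization $\Phi_\alpha'(v)=\phi'(-v)\bigl[\alpha g(v)-(1-\alpha)\bigr]$ you display. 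Indeed, the lemma as literally stated admits increasing losses satisfying all its hypotheses (e.g.\ $\phi'(v)=e^{-(v+1)^2}$, so $\phi$ is positive and increasing with $g(v)=e^{-4v}$ a decreasing bijection onto $(0,1)$) for which the unique critical point is a global \emph{maximum}, so Condition (ii) fails; your explicit recording of $\phi'<0$ for margin-decreasing losses patches a gap present in both the statement and the paper's proof.
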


\subsection{A family of non-convex functions and ARB-$\gamma$ algorithms} \label{sec:3c}

Recall that the optimal   classifier $F^*(x)$   satisfies equation \eqref{eq:5}. We observe that the right hand side of this equation does not depend on the loss function $\phi$ and can take values in the positive   real line, $\RR_+$. Hence, we can parameterize it with any real-valued function whose range is  $\RR_+$, as follows
\begin{eqnarray} \label{eq:14-a}
\frac{\phi^{'}(-v)}{\phi^{'}(v)} = g(v)\end{eqnarray}
for any surjective, decreasing  function $g : \RR \to \RR_+$.
The classical motivation for reparametrization \citep{MN89} -- often called link functions -- is that often  one uses a parametric representation that has a natural scale   matching the desired one.
We  choose to use the function $g (v) = e^{(\gamma-1) v}$ for any constant $\gamma > 1$, which is  a surjection. This parametrization is not unique but it  admits a solution to the following 
   differential equation
\begin{eqnarray} \label{eq:14}
\frac{\phi^{'}(-v)}{\phi^{'}(v)} = e^{(\gamma-1)v}.
\end{eqnarray}
Solving it for $\phi$ (exact steps  are presented in the  Appendix A), we  obtain a family of non-convex functions  
\begin{eqnarray} \label{eq:15}
\phi_{a,\gamma}(v) = \frac{2^\gamma}{(1+e^{av})^\gamma}, \; a>0.
\end{eqnarray}
\begin{figure}[h]
\centering
\includegraphics[scale=0.30]{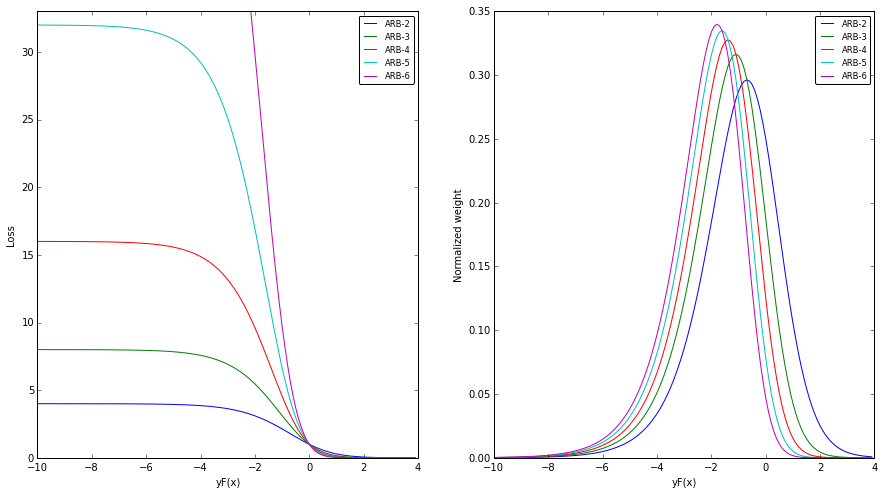}
\caption{Comparison of $\phi_{1,\gamma}$ with different $\gamma$ values and the corresponding normalized weight updating rules.}
\label{fig:2}
\end{figure}
Observe that parameters $a$ and $\gamma$ are not tuning parameters, but rather  an  index set of a family of non-convex losses much like  Huber and Tukey's biweight losses are.
 Note that when $\gamma > 1$, the right hand side of \eqref{eq:14} can take all the real values and is monotonically  increasing  and consequently the corresponding  loss $\phi$ will always give a unique solution to \eqref{eq:5}. We   name each element in the family \eqref{eq:15} as \textit{$\gamma$-robust loss function}. Later, we will show that the positive parameter $a$ is irrelevant in our algorithms, so we can fix it to be $1$. Note that for  $a=1$ and $\gamma=2$,  we obtain the non-convex  loss function \eqref{eq:11}. 
 Moreover,  each loss function $\phi_{a,\gamma}$ is a bounded function with upper bound  equal to $2^\gamma$. Therefore, the effects  of the outlier will be necessarily bounded. Moreover, the weight updating rule will also downweight the largely misclassified data points. 
We plot $\phi_{1,\gamma}$ with different $\gamma$ values and the corresponding normalized weight updating rules in the  Figure \ref{fig:2}. From Figure \ref{fig:2}, we can see that for different $\gamma$, the peak point of each of the weight updating rules shifts to the left when $\gamma$ increases. When $\gamma=1$, the weight updating curve is symmetric about $yF(x)=0$ and in fact, this is equivalent to the Sigmoid loss function $\phi(v) = 1-\tanh(\lambda v)$ when $\lambda = \frac{1}{2}$ \citep{Mason:99}. Moreover, for $a=2$ and $\gamma=2$ the loss $\phi_{a,\gamma}$ is equivalent to the Savage loss function $\phi(v) = (1 + e^{2v})^{-2}$   of  \cite{V09}, in which they used the probability elicitation technique from \cite{Savage} to design new loss functions.

\begin{lemma} \label{lemma:3.3}
For all $a>0, \gamma>1$, $\phi_{a,\gamma}$ is an  Arch Boosting loss function.
\end{lemma}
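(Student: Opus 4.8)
The plan is to verify the three requirements of Definition~\ref{def:1} in turn, leaning on Lemma~\ref{lem:iii} for the only nontrivial one. Write $\phi = \phi_{a,\gamma}$ and note first that $\phi(v) = 2^\gamma(1+e^{av})^{-\gamma} > 0$ everywhere, while $\phi(v)\to 0$ as $v\to+\infty$; since $\phi$ never vanishes, $\inf_{v} \phi(v)=0$, which is Condition (i). Differentiating gives $\phi'(v) = -\gamma a\,2^\gamma\, e^{av}(1+e^{av})^{-\gamma-1}$, which is continuous and strictly negative for every $v$, so $\phi$ is a positive, continuously differentiable loss with nonvanishing derivative --- exactly the hypotheses of Lemma~\ref{lem:iii}.

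For Condition (ii) I would compute the ratio $g(v) = \phi'(v)/\phi'(-v)$ appearing in Lemma~\ref{lem:iii}. Using the identity $(1+e^{-av})/(1+e^{av}) = e^{-av}$, the algebra collapses to $g(v) = e^{-a(\gamma-1)v}$. Since $a>0$ and $\gamma>1$, the map $v \mapsto e^{-a(\gamma-1)v}$ is strictly decreasing and sends $(0,\infty)$ bijectively onto $(0,1)$; Lemma~\ref{lem:iii} then yields Condition (ii) at once. This is of course consistent with the construction of $\phi_{a,\gamma}$ as a solution of \eqref{eq:14}.

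The remaining work, and the main obstacle, is Condition (iii). Here I would fix $\alpha\neq \tfrac12$, set $H_\alpha(v) = \alpha\phi(v)+(1-\alpha)\phi(-v)$, and locate its unique critical point $v^*$, which exists and is the global minimum by Condition (ii). Solving $H_\alpha'(v^*)=0$ amounts to $\phi'(-v^*)/\phi'(v^*) = \alpha/(1-\alpha)$, i.e. $e^{a(\gamma-1)v^*} = \alpha/(1-\alpha)$, so that $v^* = \frac{1}{a(\gamma-1)}\log\frac{\alpha}{1-\alpha}$. Crucially $v^*$ has the same sign as $2\alpha-1$, hence $v^*(2\alpha-1)>0$ and $v^*$ lies strictly outside the constraint half-line $\{v:v(2\alpha-1)\le 0\}$.

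To finish I would exploit that, because $v^*$ is the \emph{only} critical point and is a global minimum, the continuous derivative $H_\alpha'$ keeps a constant sign on each side of $v^*$; thus $H_\alpha$ is strictly decreasing on $(-\infty,v^*]$ and strictly increasing on $[v^*,\infty)$. For $\alpha>\tfrac12$ the constraint set is $\{v\le 0\}\subset(-\infty,v^*)$, on which $H_\alpha$ strictly decreases, so its infimum there is attained at the endpoint $v=0$ and equals $H_\alpha(0)=\phi(0)=1$; the symmetric argument handles $\alpha<\tfrac12$. Since $1 = H_\alpha(0) > H_\alpha(v^*) = \inf_{v\in\mathbb{R}} H_\alpha(v)$, Condition (iii) follows, completing the verification that $\phi_{a,\gamma}$ is an Arch Boosting loss function. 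The one place demanding care is the passage from ``unique critical point equals global minimum'' to strict monotonicity on each side, which is precisely what pins the constrained infimum to the boundary point $v=0$.
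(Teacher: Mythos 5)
Your proof is correct, and it follows the same overall skeleton as the paper's own argument (check the three conditions of Definition~\ref{def:1}, locate the critical point $v^* = \frac{1}{a(\gamma-1)}\log\frac{\alpha}{1-\alpha}$, and compare against the boundary value $\phi(0)=1$), but the two nontrivial steps are executed differently. For Condition (ii), the paper argues directly: it writes out $D'(v)$ for $D(v)=\alpha\phi_{a,\gamma}(v)+(1-\alpha)\phi_{a,\gamma}(-v)$, identifies the unique root $v^*$, and checks by hand that $D'<0$ to the left of $v^*$ and $D'>0$ to the right, using that $\phi_{a,\gamma}$ is decreasing; you instead invoke Lemma~\ref{lem:iii} after computing $g(v)=\phi'(v)/\phi'(-v)=e^{-a(\gamma-1)v}$, which is more modular and ties the verification back to the defining equation \eqref{eq:14}. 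For Condition (iii), the paper evaluates the unconstrained minimum value $C_\eta(\alpha^*)$ explicitly and asserts that, as a function of the class probability $\eta$, it is maximized at $\eta=\frac{1}{2}$ with value $1$ (a claim stated with a typo --- ``$\gamma=\frac{1}{2}$'' --- and left unproved), while the identification $\inf_{\alpha:\alpha(2\eta-1)\le 0}C_\eta(\alpha)=C_\eta(0)=1$ is asserted without comment. Your route is cleaner on both counts: the strict monotonicity of $H_\alpha$ on each side of $v^*$ --- which you correctly derive from uniqueness of the critical point, global minimality, and continuity of $H_\alpha'$ --- simultaneously justifies that the constrained infimum sits at the endpoint $v=0$ and delivers the strict inequality $H_\alpha(0)>H_\alpha(v^*)$ without ever evaluating $H_\alpha(v^*)$. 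One small omission you share with the paper: Condition (iii) is quantified over $0\le\alpha\le 1$, and your appeal to Condition (ii) only covers $\alpha\in(0,1)$; the endpoints are immediate, though, since for instance $H_1=\phi_{a,\gamma}$ is strictly decreasing, so $\inf_{v\le 0}\phi_{a,\gamma}(v)=\phi_{a,\gamma}(0)=1>0=\inf_{v\in\mathbb{R}}\phi_{a,\gamma}(v)$, and symmetrically for $\alpha=0$.
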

  Lemma \ref{lemma:3.3} allows us to plug $\phi_{a,\gamma}$ with $a>0, \; \gamma > 1$ into the Arch Boost framework and  obtain a family of robust boosting algorithms which we name  { \it Adaptive Robust Boost-$\gamma$} (\textit{ARB-}$\gamma$). The details are  presented  in the   Algorithm \ref{alg:3}  with computations relegated to the Appendix A.  
\begin{algorithm}[h!]
\caption{ARB-$\gamma$ ($\gamma \ge 1$)}
\label{alg:3}
\begin{algorithmic}[3]
\Procedure{ARB-$\gamma$}{}
    \State Given: $(x_1,y_1), \dots, (x_n, y_n)$
    \State Initialize the weight vector $w_0$, e.g. $w_0(x_i,y_i) = 1/n$
    \For {$t = 1, \dots, T$}
        \State (a) Normalize the weight vector  $w_t \gets {w_{t}}/{\sum_i w_{t}(x_i,y_i)}$
        \State (b)  Compute the classifier to obtain a class probability estimate $\Prob_{w_t}(Y=1|x) \in [0,1]$,
        \State using weights $w_t$ on the training data.
        \State (c) Set $h_t(x) \gets \log \frac{\Prob_{w_t}(Y=1|x)}{\Prob_{w_t}(Y=-1|x)} \in \bar{\mathbb{R}}$.  
         \State (d) Find $\alpha_t$ by solving \eqref{eq:7}. 
        \State (e) Set $F_{t}(x) \gets F_{t-1}(x)+\alpha_{t} h_{t}(x)$
	\State (f) Set $w_{t+1} \gets  {e^{yF_t(x)}}{(1+e^{yF_t(x)})^{-\gamma-1}}$
   \EndFor
   
   \State Output the classifier:
   \begin{equation} \label{eq:16}
	sign \left( F_T(x) \right)
   \end{equation}

\EndProcedure
\end{algorithmic}
\end{algorithm}
Note that the form of the weak hypothesis in Algorithm \ref{alg:3} is $h_t(x) = \log \frac{\PP_{w_t}(Y=1|x)}{\PP_{w_t}(Y=-1|x)}$ instead of the solution $\frac{1}{\gamma-1} \log \frac{\PP_{w_t}(Y=1|x)}{\PP_{w_t}(Y=-1|x)}$ to equation \eqref{eq:6a}, because we absorb the constant $\frac{1}{\gamma-1}$ into the step size $\alpha_t$ at each iteration $t$.

%

\section{Theoretical Considerations} \label{theory}

Despite the substantial
body of  existing work on  Gradient Boosting classifiers and AdaBoost in particular (e.g. \cite{Bartlett:06,Bartlett:07,Freund:95, FHT:00,Schapire:99b,SFB+:98,ZY05, Breiman:04,K03}),  research on  robust boosting classifiers  has mostly been limited to  methodological proposals with little  supporting theory (e.g., \cite{GentileLittlestone:99,KearnsLi:93,Littlestone:91}). 
  However, whereas the loss functions studied in those papers
are finite-sample versions of globally convex functions, many important robust classifiers, such as
those arising in \cite{Freund:09} and the proposed ARB-$\gamma$, only possess convex curvature over local regions 
even at the population level. 
In this paper, we present new theoretical results, based only
on local curvatures, which may be used to establish statistical and optimization
properties of the proposed  Arch boosting algorithms, with highly non-convex loss functions.

\subsection{Numerical convergence of Arch Boost algorithm } \label{sec:4}
We show that the empirical risk will always decrease and   that the Arch Boost can be viewed as the  step-wise iterative minimization of the empirical risk $\hat{R}_{\phi,n}$. 
Similar results can be found in 
 \cite{K03} and \cite{ZY05}. The main difference is that the authors  use the gradient descent rule
in the first or an  approximate minimization in the second paper, while here we use the hardness condition to select the optimal weak hypothesis $h$. 
We use  $\{w(X_i,Y_i)\}_{i=1}^n$ to denote the weights on the data such that $\sum_{i=1}^n w(X_i,Y_i) = 1$.  Recall that for any classifier $h$ and any data point $(X,Y)$, the term $Yh(X)$ always stands for the margin. For any weak hypothesis $h_t$, we denote the  {\it expected margin} as
$
\mu(h, w) = \E_w \left[Yh(X) \right],
$
and the { \it empirical margin} as
$
\hat \mu(h, w) = \sum_{i=1}^n w(X_i,Y_i) Y_i h(X_i).
$
To introduce the notation used in the theorem, for any family of weak classifiers $\mathcal{H}$, we denote
\begin{equation}\label{eq21}
\mathcal{F}^T = \left \{ F: F = \sum_{t=1}^T \alpha_t h_t, \alpha_t \in \realR, h_t \in \mathcal{H} \right \},
\end{equation} 
a set of $T$-combinations ($T \in \mathbb{N}$) of functions in $\mathcal{H}$. Let $\{\bar{f}_t\}$ be a sequence of functions with empirical risk converging to $R^*_{\phi,n}$, defined as $R_{\phi,n}^* = \inf_{F \in \cup_{T=1}^{\infty}\mathcal{F}^{T}} \hat R_{\phi,n}(F)$. Then, $\bar f_t \in \mathcal{H}_f \subset \cup_{T=1}^\infty \mathcal{F}^T  $ can be represented as $\sum_{h \in H_f}  \alpha^h h$. In this respect, we define its $l_1$ norm as $ {|H_f|^{-1}}  \sum_{h \in H_f} | \alpha^{(h)}|  $.

\begin{thm} \label{numerical convergence}
Assume $\phi$ is an Arch boosting loss function. Furthermore, assume that the weak learner is able to provide disjoint regions on the domain $\mathcal{X}$ at each iteration $t$ (e.g. decision tree). We apply the Arch Boost ($\phi$) algorithm to a sample $\mathcal{S}_n = \{ (X_1,Y_1), \cdots, (X_n,Y_n) \}$ for $T$ iterations.
\begin{enumerate} [(i)]
\item
If at each iteration $t$ the weak hypotheses $h_t \in \mathcal{H}$ satisfies $\hat \mu(h_t, w_t) > 0$, then $\hat{R}_{\phi,n}(F_T)$ will converge in $\mathbb{R}$ as $T \to \infty$.

\item
At each iteration $t$, let $\{R_t^j\}_{j=1}^J$ be the set of disjoint regions on $\mathcal{X}$ returned by the weak hypothesis $h_t$, and $p_t^j:=\mathbb{P}_{w_t}(Y=1|x\in R_t^j)$ be the class probability estimation in that region. If there exists a strictly increasing function $\theta: [0,1] \to \RR$ with $\theta(\frac{1}{2}) = 0$, and a positive constant $K>0$ such that $h_t$ satisfies the representation 
$$h_t(x) = K \theta (p_t^j)$$
for all $x\in R_t^j$, then $\hat \mu(h_t, w_t) > 0$.
\item
Using any Arch boosting loss function $\phi$ for the Arch Boost Algorithm \ref{boosting}, at any iteration $t$, there exists such a function $\theta$ that satisfies (ii).
\item
Let $\{\bar{f}_t\}$ be a sequence of functions with empirical risk converging to $R^*_{\phi,n}$ and such that  
$$||\bar{f}_t - F_t||_1 = o(\log t), \qquad ||\bar{f}_t - F_t||_2^2 \le \frac{||\bar{f}_t - F_t||_1^2}{t^{c_t}}$$ where $c_t \in (0,1)$ and $c_t \to 0$ as $t \to \infty$.
Furthermore, assume    $\phi$ is  Lipschitz differentiable, $\theta$ is  bounded and $\hat{\mu}(h_t,w_t) \to 0$ as $t \to \infty$.
Then, if a sequence of step sizes $\alpha_t$ is such that 
$$\sum_{t=1}^{\infty} \alpha_t = \infty, \ \sum_{t=1}^{\infty} \alpha_t^2 < \infty,
\sum_{t=1}^{\infty} \frac{\alpha_{t+1}\xi_t \log t }{ t^{c_t}} < \infty,$$ 
for a sequence of positive numbers  $\xi_t =o(1)$,
we have $\hat{R}_{\phi,n}(F_T) \to R^*_{\phi,n}$ as $T \to \infty$.
\end{enumerate}
\end{thm}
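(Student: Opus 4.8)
The plan is to prove the four parts in order, since each supports the next: part (i) yields convergence of the empirical risk once the margin is positive, parts (ii)--(iii) certify that the Arch Boost margin is indeed positive, and part (iv) strengthens plain convergence to convergence to the infimum $R^*_{\phi,n}$.

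For (i) I would show the sequence $\hat R_{\phi,n}(F_t)$ is non-increasing and bounded below. The lower bound $\hat R_{\phi,n}\ge 0$ is immediate from part (i) of Definition \ref{def:1}. For monotonicity, differentiate $\alpha\mapsto \hat R_{\phi,n}(F_{t-1}+\alpha h_t)$ at $\alpha=0$; by the weight identity $w_t=-\phi'(YF_{t-1})$ this derivative equals $-\hat\mu(h_t,w_t)$ up to the positive normalizing factor $\sum_i w_t$. Since $\hat\mu(h_t,w_t)>0$, the direction $h_t$ is a strict descent direction, and because $\alpha_t$ minimizes the empirical risk along $h_t$ (the finite-sample analogue of \eqref{eq:7}), we obtain $\hat R_{\phi,n}(F_t)\le \hat R_{\phi,n}(F_{t-1})$; a bounded monotone sequence converges in $\mathbb{R}$.

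For (ii) I would split the empirical margin over the disjoint regions, $\hat\mu(h_t,w_t)=\sum_j K\theta(p_t^j)\sum_{X_i\in R_t^j}w_t(X_i,Y_i)Y_i$. Writing $W_j$ for the total weight in $R_t^j$, the definition $p_t^j=\mathbb{P}_{w_t}(Y=1\mid x\in R_t^j)$ gives $\sum_{X_i\in R_t^j}w_t Y_i=W_j(2p_t^j-1)$, so region $j$ contributes $K\,W_j\,\theta(p_t^j)(2p_t^j-1)$. Since $\theta$ is strictly increasing with $\theta(1/2)=0$, the factor $\theta(p_t^j)(2p_t^j-1)$ has the sign of $(2p_t^j-1)^2\ge 0$, so every term is nonnegative and strictly positive unless $p_t^j=1/2$; summing gives $\hat\mu(h_t,w_t)>0$ (barring the degenerate case of perfectly balanced regions). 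For (iii) I would read $h_t$ off the estimating equation \eqref{eq:9a}: on a region where $F_{t-1}$ is constant, \eqref{eq:9a} defines $h_t$ implicitly as a function of $p_t^j$; substituting $p_t^j=1/2$ makes the right-hand side equal $1$ and $h_t=0$ solves it, so $\theta(1/2)=0$, while monotonicity of the implicit solution follows from Condition (ii) of Definition \ref{def:1} (equivalently from the bijective, decreasing $g$ of Lemma \ref{lem:iii}). The one subtlety is the dependence of $h_t$ on $F_{t-1}$; I would resolve it by passing to the common refinement of the partitions, where the decision-tree hypothesis makes $F_{t-1}$ constant on each $R_t^j$.

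The substantive part is (iv), and here I expect the main difficulty. The strategy is a two-sided comparison. From the Lipschitz-differentiability of $\phi$ I would derive a smoothness inequality $\hat R_{\phi,n}(F_t)\le \hat R_{\phi,n}(F_{t-1})-c\,\alpha_t\hat\mu(h_t,w_t)+\tfrac{L}{2}\alpha_t^2\|h_t\|_2^2$; boundedness of $\theta$ keeps $\|h_t\|_2$ bounded, so $\sum\alpha_t^2<\infty$ makes the second-order terms summable, and telescoping against the lower bound forces $\sum_t\alpha_t\hat\mu(h_t,w_t)<\infty$. Separately I would expand $\hat R_{\phi,n}(\bar f_t)-\hat R_{\phi,n}(F_t)$ to first order around $F_t$: the linear term is $-\sum_{h\in H_f}\alpha^h\hat\mu(h,w_{t+1})$, bounded by $(\sup_h|\hat\mu(h,w_{t+1})|)\sum_h|\alpha^h|$, where the supremum is $O(\xi_t)$ because the maximal-margin weak learner has $\hat\mu(h_{t+1},w_{t+1})\to 0$, and the quadratic remainder is at most $\tfrac{L}{2}\|\bar f_t-F_t\|_2^2$. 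The hypotheses $\|\bar f_t-F_t\|_1=o(\log t)$ and $\|\bar f_t-F_t\|_2^2\le\|\bar f_t-F_t\|_1^2/t^{c_t}$, together with the third step-size sum $\sum_t \alpha_{t+1}\xi_t\log t/t^{c_t}<\infty$, are exactly tuned so that both the linear and quadratic error contributions are summable. Combining the two comparisons yields $\limsup_T \hat R_{\phi,n}(F_T)\le \lim_t \hat R_{\phi,n}(\bar f_t)=R^*_{\phi,n}$, and since $\hat R_{\phi,n}(F_T)\ge R^*_{\phi,n}$ and converges by (i), we conclude $\hat R_{\phi,n}(F_T)\to R^*_{\phi,n}$. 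The crux is that non-convexity forbids a direct convexity comparison between $F_t$ and $\bar f_t$, so all closeness must be squeezed out of the smoothness bound in tandem with the vanishing margin; balancing these competing rates against the engineered norm and step-size conditions is where the real work lies.
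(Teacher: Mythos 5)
Your parts (i)--(iii) are correct and follow essentially the same route as the paper: for (i) the identity $\langle -\nabla\hat R_{\phi,n}(\mathbf{F}_{t-1}),\mathbf{h}_t\rangle=\hat\mu(h_t,w_t)$ (up to the positive weight normalization) plus line search and $\phi\ge 0$; for (ii) the region-wise decomposition $\hat\mu(h_t,w_t)=\sum_j K\,\theta(p_t^j)\,(2p_t^j-1)\sum_{i:X_i\in R_t^j}w_t(X_i,Y_i)\ge 0$; for (iii) the estimating equation \eqref{eq:9a} combined with the monotone bijection $g$ of Lemma \ref{lem:iii} (your remark about refining the partition so that $F_{t-1}$ is constant on each region is a useful detail the paper leaves implicit).

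Part (iv), however, has a genuine gap, located exactly at your concluding step. Your plan is a \emph{direct} comparison: bound $\hat R_{\phi,n}(F_t)-\hat R_{\phi,n}(\bar f_t)$ by a linear term $\sup_h|\hat\mu(h,w_{t+1})|\cdot\|\bar f_t-F_t\|_1$ plus a quadratic term $\tfrac{L}{2}\|\bar f_t-F_t\|_2^2$, and then claim the hypotheses make these contributions small. They do not. Under the stated assumptions the linear term is only $o(\xi_t\log t)$, which need not tend to zero (take $\xi_t=(\log t)^{-1/2}$), and the quadratic term is only $o(\log^2 t/t^{c_t})$, which can even diverge because no lower bound is imposed on the rate at which $c_t\to 0$: with $c_t=1/\log t$ one has $t^{c_t}=e$, a constant. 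The only control the theorem provides is $\sum_t\alpha_{t+1}\xi_t\log t/t^{c_t}<\infty$, i.e., control of these quantities \emph{after} multiplication by $\alpha_{t+1}/t^{c_t}$; since $\alpha_{t+1}$ is square-summable (hence can be very small), this gives no pointwise, nor even subsequential, smallness of the error terms themselves, so the inequality $\limsup_T\hat R_{\phi,n}(F_T)\le\lim_t\hat R_{\phi,n}(\bar f_t)$ does not follow. A second, smaller leap is identifying $\sup_h|\hat\mu(h,w_{t+1})|$ with $O(\xi_t)$: the theorem assumes only that the \emph{selected} margins $\hat\mu(h_t,w_t)$ vanish, and $\xi_t$ is not defined as a uniform margin bound.

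This is precisely why the paper never compares the risks at $\bar f_t$ and $F_t$ directly. It keeps the step size attached throughout: the Taylor expansion of $\hat R_{\phi,n}(F_t+\alpha_{t+1}s^h h)$, the near-optimality of $h_{t+1}$, the asymptotic orthogonality $\langle\nabla \hat R_{\phi,n}(F_t)-\nabla \hat R_{\phi,n}(\bar f_t),\bar f_t-F_t\rangle\to 0$ (deduced from $\hat\mu(h_t,w_t)\to 0$ via part (ii)), and the Lipschitz bound are assembled into a \emph{recursive} inequality for the gap $\hat R_{\phi,n}(F_{t+1})-\hat R_{\phi,n}(\bar f_t)$, in which all error terms carry the coefficient $\alpha_{t+1}\xi_t\log t/t^{c_t}$ after dividing through by $\|\bar f_t-F_t\|_2^2$. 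Lemma 4.2 of \cite{ZY05} then converts $\sum_t\alpha_t=\infty$ together with summability of those coefficients into convergence of the gap to zero, even though no individual error term vanishes. To repair your argument you would need to recast your two comparisons in this recursive (Robbins--Siegmund type) form rather than as a direct limit comparison; your telescoping bound $\sum_t\alpha_t\hat\mu(h_t,w_t)<\infty$, which your write-up derives but never uses, is the kind of ingredient that only becomes effective inside such a recursion.
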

 Theorem \ref{numerical convergence} suggests that for any ARB-$\gamma$ algorithm, the weak hypothesis $h_t$ at each iteration $t$  has a corresponding function $\theta$ that satisfies $(ii)$. Therefore,   $h_t$ satisfies  $\hat \mu(h_t, w_t) > 0$ and with it, by $(i)$,  the ARB-$\gamma$ algorithm   converges when the number of iterations increases. The conditions in part (iv) are somewhat different compared to the equivalent one obtained for the gradient boosting with convex losses \citep{ZY05}. The reference sequence $\bar f_t$ needs to be in a local neighboorhood of $F_T$ in the sence that $\bar{f}_t - F_t$ cannot blow up too rapidly.  Moreover, the   size  of $\mathcal{H} \supset \mathcal{H}_t \ni \bar f_t$ cannot be smaller than $t^{c_t}$ implying that the   size of $\mathcal{H}$ needs to converge to $\infty$ faster than a polynomial of $T$.
 Additionally, the choice of $\alpha_t$ depends on $c_t$ and $\xi_t$. The classical    conditions that are guarding against infinitely small step sizes are now supplemented with an additional constraint $\sum_{t=1}^{\infty} \frac{\alpha_{t+1}\xi_t \log t }{ t^{c_t}} < \infty$.  For example, if $\xi_t = O(\frac{1}{t})$, then we can choose $\alpha_t = O(\frac{1}{t^c})$ where $c \in (\frac{1}{2},1)$ and $c_t$ can converge to 0 at any speed. However, if $\xi_t = O(\frac{1}{\log t})$, then we need $c_t \to 0$ slowly (e.g. $O(\frac{1}{\log \log t})$) and $\alpha_t$ can be chosen as $O(\frac{1}{t})$. The additional constraint in the step size choice   acts as a penalty on  allowing non-convex loss functions. However, unlike   existing results Theorem \ref{numerical convergence} does not require any additional algorithmic tuning parameters (see Theorem 3.1 of  \cite {ZY05} and choices of $\varepsilon_t$, $\Lambda_t$). Results in \cite{Bartlett:07} (e.g., Theorem 6) provide similar bounds under   an
assumption of an unbounded step size of the boosting algorithm  and assume a positive lower bound on  the hessian of the empirical risk (a condition strictly violated for non-convex losses).
 
 We provide   examples of the function $\theta$ in the Table \ref{tab:4}. 
 Note that this distribution $\PP$ may potentially be a contaminated   distribution -- the proof is not affected by the contamination.
  \begin{table}[h!]
    \centering
    \footnotesize
    \caption{$\theta$ functions for several loss functions}
   \label{tab:4}
    \begin{tabular}{@{}nd{11.1}*{5}{d{31.2}}d{3.1}d{3.2}@{}}
      \toprule
        \multicolumn{1}{@{}N}{Classification Method} &
        \multicolumn{3}{N@{}}{Population parameters} &
        \\
      \cmidrule(lr){2-3}
        &
        \multicolumn{1}{V{4.5em}}{Loss function $\phi(v)$} &
        \multicolumn{1}{V{6.5em}}{$\theta$ functions} 
         \\
      \cmidrule(lr){2-2}\cmidrule(lr){3-3}   
        Logistic    &  \log(1+e^{-v}) & \log(t) - \log(1-t)     \\
        Exponential  &  e^{-v} &  \log(t) - \log(1-t)        \\
        Least Squares   &  (v-1)^2   &   \frac{(2t-1)(1-F^2)}{(2t-1)F+1}, \; F\in [-1,1]  \\
        Modified Least Squares   &  [(1-v)_+]^2   & \frac{(2t-1)(1-F^2)}{(2t-1)F+1}, \; F\in [-1,1]  \\
        $\gamma$-robust ($\gamma > 1$) & 2^{\gamma}/(1+e^{v})^{\gamma} &   \log(t) - \log(1-t)  \\
      \bottomrule
    \end{tabular}
  \end{table}
Furthermore, we remark that the result of Theorem \ref{numerical convergence} (i) to (iii)   requires very weak conditions. Namely, the approximate minimization step \eqref{eq:7} can be inexact (by contrast, see Theorem 6 of \cite{Bartlett:07}). The weak hypothesis $h_t$ at each iteration is obtained by preserving the ``hardness'' property of the AdaBoost, and this method is applicable to non-convex functions. For a certain loss function $\phi$, $h_t$ may not apriori point to the gradient descend  direction of the empirical $\phi-$risk. Hence, we cannot use numerical methods like Newton's method as needed in the gradient descent, but provide a novel way to find a weak hypothesis that is also suitable for non-convex loss functions. Then, we show that   the direction of such weak hypothesis will indeed be a descending direction for the non-convex loss, that is, we  guarantee that $\hat{R}_{\phi,n}(F+\alpha h) \le \hat{R}_{\phi,n}(F)$ with appropriate step size $\alpha$, where $F$ is the current estimate.

\subsection{Robustness}\label{sec:robust}
In Section \ref{Arch boost}, we gave an informal explanation why non-convex losses   lead to a more robust algorithm. The weight updating rule or the first derivative of the loss functions plays an important role in their robustness. Unlike convex functions, many non-convex functions can have a diminishing first derivative $\phi^{'}(v)$ when  $v$ tends to both infinity and negative infinity. In this section, we   quantify the robustness and justify the robustness of Arch Boosting Algorithms \ref{alg:3} through the point of view of the finite sample Breakdown point, as well as that of the  Influence Functions, the population measure of robustness.

\subsubsection{An invex function view of robustness}
In this section, we will use invex function properties to show why our non-convex functions leads to more robust algorithms. We first recall some definitions \citep{Ben:86}.

\begin{definition}[Invex function]
Assume $X \subset \RR^n$ is an open set. The differentiable function $f:X \to \RR$ is invex if there exists a vector function $\eta: X \times X \to \RR^n$ such that
\begin{eqnarray}
f(x) - f(y) \ge \eta(x,y)^T \nabla f(y), \;\; \forall x,y \in X.
\end{eqnarray}
\end{definition}

It is well known that if $g:\RR^n \to \RR$ is differentiable and convex and $A:\RR^r \to \RR^n \;(r \ge n)$ is differentiable with $\nabla A$ of rank $n$, then $f = g \circ A$ is invex \citep{Mishra:08}. \cite{Craven:85} proved that a function $f$ is invex if and only if every stationary point is a global minimizer. So the second condition (ii) in Definition \ref{def:1} is equivalent to say that $\alpha \phi(v) + (1-\alpha)\phi(-v)$ is an invex function with exactly one critical point for all $\alpha \in (0,1)$. We can then show that with $\phi$ being  our two-robust loss function, for any sample $\mathcal{S}$, the empirical risk $\hat{R}_{\phi,n}$ is an invex function on the set $\mathbb{F}^n :=\{(F(x_1), \cdots, F(x_n)): F \in \mathcal{F}\} \subset \RR^n$. Now the problem is whether we can decompose this empirical risk as a composition of a convex function and a differentiable function, that is, whether we can write $\hat{R}_{\phi,n} = g \circ A$ where $g: \RR^n 
\to \RR$ is a differentiable convex function and $A:\RR^n \to \RR^n$ is a differentiable vector function with $\nabla A$ of rank $n$. We let $g$ be the empirical risk $\hat{R}_{\exp,n}$ with exponential loss function. Then we want to find a function $A(\vec{x}) = (A_1(\vec{x}), \cdots, A_n(\vec{x}))$ such that
\begin{eqnarray} \label{eq:invex}
\hat{R}_{\phi,n}({ \mathbf{F}}) = \frac{1}{n} \sum_{i=1}^n \frac{1}{1+e^{y_i F(x_i)}} = \frac{1}{n} \sum_{i=1}^n e^{-y_i A_i({ \mathbf{F}})} = \hat{R}_{\exp,n}(A({ \mathbf{F}})),
\end{eqnarray}
where ${ \mathbf{F}} := (F(x_1), \cdots, F(x_n))$ and we ignore the constant 4 in the two-robust loss. Comparing each term in \eqref{eq:invex}, we get $\frac{1}{1+e^{y_i F(x_i)}} = e^{-y_i A_i({ \mathbf{F}})}$, that is,
\begin{eqnarray} \label{eq:A}
y_i\log (1+e^{y_i F(x_i)}) = A_i({ \mathbf{F}}).
\end{eqnarray}
The equation \eqref{eq:invex} also means that minimizing empirical risk $\hat{R}_{\phi,n}$ on the set $\mathbb{F}^n$ is equivalent to minimizing the empirical risk $\hat{R}_{\exp,n}$ on the transformed set $A(\mathbb{F}^n)$, where $A$ is defined in \eqref{eq:A}. On each data point, we have $F_{\exp}(x_i) = A_i({ \mathbf{F}}) = y_i\log (1+e^{y_i F(x_i)})$. We plot the function $l(v) := \log(1+e^{v})$ together with the identity function in Figure \ref{fig:invex}. If we compare two empirical risk minimization (ERM) problems (1) $\min_{\mathbf{F} \in \mathbb{F}^n} \hat{R}_{\exp,n}(\mathbf{F})$ and (2) $\min_{\mathbf{F} \in \mathbb{F}^n} \hat{R}_{\exp,n}(A(\mathbf{F}))$, then $A$ can be viewed as an "influence trimming" function. For any current estimate $\mathbf{F}$, we observe that whenever $y_iF(x_i) \gg 0$, then $F_{\exp}(x_i) \approx F(x_i)$; otherwise if $y_iF(x_i) \ll 0$, then $F_{\exp}(x_i) \approx 0$, that is, instead of saying $F$ made a severe mistake at $(x_i,y_i)$, after the $A$-transformation, we just say $F_{\exp}$ is uncertain about this point.

\begin{figure}[h!]
\centering
  \includegraphics[width=.35\linewidth]{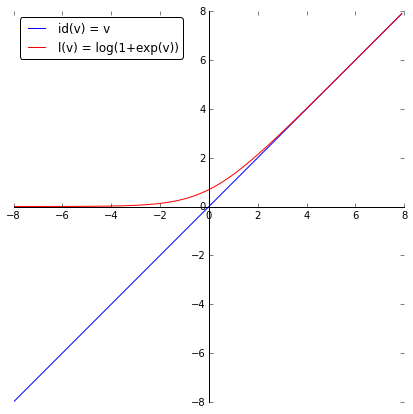}
\caption{Comparison of function $l$ and the identity function.}
\label{fig:invex}
\end{figure}

\subsubsection{Breakdown point}\label{sec:bp}
Empirical robustness properties    defined as breakdown point  in   \cite{DH83} has proved most successful in the context of location,
scale and regression problems (e.g. \cite{R84,SR92,T94}, etc.).   This success has sparked many  attempts to extend the concept
to other situations (e.g. \cite{RW01, GL03,DG05},etc.). However, very little work has been done in the classification context. 

The breakdown point, as defined in \cite{H68}, is  roughly  the smallest amount of contamination that may cause an estimator to take on arbitrarily large aberrant values. The breakdown points of $1/n$ for the
mean and $1/2$ for the median do reflect their finite-sample behavior.
However, an alternative view is desired in the classification context as the magnitude of an estimator may not relate to necessarily bad classification -- that is, 
 the size of the weak hypothesis is only marginally related to the classification boundary.  \cite{H04}  exploits  cluster stability   for  the breakdown point  analysis of estimators in the mixture models. However, in the context of boosting it is not clear if the same concept can differentiate between AdaBoost and LogisticBoost algorithms; the first is known not to be robust and the second is known to have better finite sample analysis.
  We argue  that  the case when the gradient of the classification loss takes on alternating directions  is in a certain sense  similar to  diverging  estimator values in regression setting. In this sense, the proposed breakdown study resembles Hampel's  original definition.
Hence, we look for the largest number of perturbed data points that keep the gradient of the risk minimization in the correct direction.  Moreover, we relate this concept to the weight updating rule of Arch Boost algorithms.


Let $\mathcal{S}_n = \{(X_1,Y_1), \cdots, (X_m, Y_m)\} \cup \{(X_{m+1},Y_{m+1}), \cdots, (X_n,Y_n) \}$ be a set of observed, contaminated samples among which $\mathcal{O} _n=  \{(X_{m+1},Y_{m+1}), \cdots, (X_n,Y_n) \}$ is a set of outliers. 
On the original dataset $\mathcal{S}$, let $h_t$ be the weak hypothesis at iteration $t$, and denote $ {\mathbf h}_t = (h_t(X_1), \cdots, h_t(X_n))$. We also denote $- {\mathbf {g}}_t = (-g_t(X_1), \cdots, -g_t(X_n))= (Y_1 w_t(X_1,Y_1), \cdots, Y_nw_t(X_n,Y_n))$ as the negative gradient of the empirical risk $\hat{R}_{\phi,n}$ on the sample $\mathcal{S}$. Similarly,   $- {\mathbf {g}_o} = (-g_t(X_1), \cdots, -g_t(X_m), 0, \cdots, 0)$ 
$= (Y_1 w_t(X_1,Y_1), \cdots, Y_mw_t(X_m,Y_m), 0, \cdots, 0)$ is obtained by embedding the negative gradient  of the empirical risk $\hat{R}_{\phi,m}$ on  the  sample $\mathcal{S} \setminus \mathcal{O}$ without outliers, into $\mathbb{R}^n$.
We  consider the inner product of the  weak hypothesis $ {\mathbf h}_t$ and the modified negative gradient  $- {\mathbf {g}_o}$. We have the following theorem.

\begin{thm}\label{thm:breakdown}
Using the same notations as above, 
let $\eta_j =  {\min^{-1}(p_j,1-p_j)}{|p_j-\frac{1}{2}|}$, for every region $R^j$, where $p_j \in (0,1)$ and $p_j \neq \frac{1}{2}$.
Then, if any Arch Boost algorithm, at iteration $t$, conditional on the realizations   $\left \{(X_i,Y_i)=(x_i,y_i) \right \}_{i=1}^n$, satisfies for all $R_j$    
\begin{eqnarray*}
\sum_{i : x_i \in \mathcal{O}_n \cap R^j} w_t(x_i,y_i) \le \eta_j \sum_{i : x_i \in R^j \setminus \mathcal{O}_n} w_t(x_i,y_i),
\end{eqnarray*}
 then  the gradient descent direction is preserved, that is, $- \langle  {\mathbf {g}_o} ,  {\mathbf h}_t \rangle \ge 0$.
\end{thm}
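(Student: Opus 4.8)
The plan is to unfold the inner product into a weighted margin over the clean sample, exploit that the weak hypothesis is piecewise constant on the regions $R^j$, and then reduce global nonnegativity to a per-region sign condition. First I would observe that since $-\mathbf{g}_o$ has its outlier coordinates zeroed out, only the inliers survive the contraction:
\[
-\langle \mathbf{g}_o, \mathbf{h}_t\rangle = \sum_{i=1}^m Y_i\, w_t(X_i,Y_i)\, h_t(X_i).
\]
By Theorem~\ref{numerical convergence}(ii)--(iii) there is a strictly increasing $\theta$ with $\theta(\tfrac12)=0$ and a constant $K>0$ such that $h_t$ is constant on each region $R^j$, with $h_t(x)=K\theta(p_j)$ and $p_j=\Prob_{w_t}(Y=1\mid x\in R^j)$. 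Grouping the inlier sum by regions gives
\[
-\langle \mathbf{g}_o, \mathbf{h}_t\rangle = K\sum_j \theta(p_j)\, M_j^c,
\qquad
M_j^c := \sum_{i:\,x_i\in R^j\setminus\mathcal{O}_n} Y_i\, w_t(X_i,Y_i).
\]
Since $K>0$, it suffices to show each summand $\theta(p_j)M_j^c\ge 0$; as $\theta$ is increasing through $\theta(\tfrac12)=0$, this is equivalent to the sign of the clean margin $M_j^c$ agreeing with the sign of $p_j-\tfrac12$.

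Next I would set up the weight bookkeeping inside a fixed region $R^j$. Writing $W^{\pm,c}_j$ and $W^{\pm,o}_j$ for the $w_t$-mass carried by the positively/negatively labelled inliers and outliers respectively, set $W^c_j=W^{+,c}_j+W^{-,c}_j$, $W^o_j=W^{+,o}_j+W^{-,o}_j$, $M_j^c=W^{+,c}_j-W^{-,c}_j$, and $N_j=W^{+,o}_j-W^{-,o}_j$ (so $|N_j|\le W^o_j$, using $w_t\ge 0$). A direct computation of $p_j=(W^{+,c}_j+W^{+,o}_j)/(W^c_j+W^o_j)$ yields the identity
\[
p_j-\tfrac12=\frac{M_j^c+N_j}{2\,(W^c_j+W^o_j)},
\qquad\text{equivalently}\qquad
M_j^c = 2\Bigl(p_j-\tfrac12\Bigr)\bigl(W^c_j+W^o_j\bigr)-N_j .
\]

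Finally I would run the worst-case adversarial bound. Consider $p_j>\tfrac12$ (the case $p_j<\tfrac12$ is symmetric); the least favorable configuration for the desired $M_j^c\ge0$ is when the outliers load all their mass on the positive label, i.e. $N_j=W^o_j$, inflating $p_j$ past a majority while eroding the true margin. Substituting $N_j\le W^o_j$ into the identity and rearranging, $M_j^c\ge0$ holds precisely when $2(p_j-\tfrac12)(W^c_j+W^o_j)\ge W^o_j$, which simplifies to $W^o_j\le \frac{p_j-1/2}{1-p_j}\,W^c_j$. Since $\min(p_j,1-p_j)=1-p_j$ when $p_j>\tfrac12$, this threshold is exactly $\eta_j W^c_j$, the hypothesis of the theorem; the case $p_j<\tfrac12$ is identical after flipping signs and gives the matching $W^o_j\le \frac{1/2-p_j}{p_j}\,W^c_j=\eta_j W^c_j$. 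Summing the nonnegative per-region contributions then establishes $-\langle\mathbf{g}_o,\mathbf{h}_t\rangle\ge0$.

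The main obstacle I anticipate is not the algebra but correctly pinning down that $\eta_j$ is the \emph{sharp} adversarial threshold: one must verify that the extremal attack really is ``all outlier weight on one label'' (so $|N_j|=W^o_j$ is attained) and that a single per-region budget $W^o_j\le\eta_j W^c_j$ simultaneously controls both the erosion of the clean margin and the inflation of $p_j$ past $\tfrac12$. Collapsing the two symmetric cases into the one bound $\eta_j=|p_j-\tfrac12|/\min(p_j,1-p_j)$ is where the definition of $\eta_j$ must be reverse-engineered, and one also needs the elementary fact that $w_t=-\phi'(\cdot)\ge 0$ for an Arch boosting loss so that all mass quantities are nonnegative and the inequalities retain their direction.
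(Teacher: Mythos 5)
Your proposal is correct and follows essentially the same route as the paper's own proof: reduce $-\langle \mathbf{g}_o, \mathbf{h}_t\rangle$ to a region-wise sum $\sum_j \theta(p_j) M_j^c$ via the piecewise-constant form $h_t = K\theta(p_j)$, demand per-region sign agreement between $M_j^c$ and $p_j - \tfrac12$, and bound the worst case where all outlier weight sits on the label that inflates $p_j$. The only difference is cosmetic: the paper manipulates the identity $(2\tilde p_j - 1)\sum_{R^j\setminus\mathcal{O}} w_i = (2p_j-1)\sum_{R^j} w_i - \sum_{\mathcal{O}\cap R^j} y_i w_i$ and maximizes termwise over $y_i = \pm 1$, whereas you aggregate the outlier margin into $N_j$ with $|N_j| \le W_j^o$ — the same extremal configuration and the same threshold $\eta_j$ fall out of both.
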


Theorem \ref{thm:breakdown} suggests that any Arch Boosting algorithm that satisfies conditions above, preserves the direction  of the descent of the non-contaminated empirical $\phi$-risk, hence disregarding the outliers. It is a sense it is an oracle property as it minimizes the risk on the ``clean'' data.
A few remarks are imminent.  Conditions of the above theorem are very mild. 
When $p_j = \frac{1}{2}$ -- that is, the total weight on positive labels is the same as that of the negative ones in region $R^j$ --   the elements of $ {\mathbf h}$ corresponding to the points in $R^j$ are $0$ and consequently have no influence on the sign of $- \langle  {\mathbf {g}_o} ,  {\mathbf h} \rangle$. Moreover, the case of  $p_j = 0$ or $1$ is not of the main interest  as in this case  when 
all the data in that region  have the same labels and hence it is reasonable to say there are no outliers.
For all other cases, when $p_j \in (0,1) \setminus \{ \frac{1}{2} \}$, Theorem \ref{thm:breakdown} establishes that  whenever  
the summation of weights on the outliers  is no larger than a constant $\eta_j$ times the summation of weights on the data points that are not outliers, then $ {\mathbf h}$ will also be the direction along which the empirical risk of the non-contaminated data will decrease.  Figure \ref{fig:2} shows the weight updating rule of the ArchBoost algorithm for various $\gamma$ and clearly illustrates that the condition above is more likely to be satisfied than the AdaBoost or LogitBoost algorithm.  For example, in the case of the ArchBoost,   if $y_i = -1$ and $\Prob(Y=-1|X=x_i) = 0.001$, then for Real AdaBoost, $\frac{w(x_i,y_i)}{w_b} \simeq 32$, and for ARB-2, $\frac{w(x_i,y_i)}{w_b} \simeq 0.008$ where $w_b$ is the weight for the outlying data point $(x_b,y_b)$ on the boundary, that is, the data such that  $F^*(x_b) = 0$. Hence, it can be seen that AdaBoost  puts 4000 times larger weight on this data compared to the ARB-$2$.

\subsubsection{Influence function}\label{sec:if}

The richest quantitative robustness measure is provided by the influence
function $u \to IF(u; T, G)$  of  $T$ at $G$  proposed by \cite{Hampel:74} and \cite{Hampel:86}. It is defined as
the first G$\hat{\mbox{a}}$teaux derivative of $T$ at $G$, where the point $u$ plays the role of the
coordinate in the infinite-dimensional space of probability distributions.  
In more details, 
the influence function of  any statistical functional $T$ at a distribution $\Prob$ is the special G$\hat{\mbox{a}}$teaux derivative (if it exists)
\begin{eqnarray*}
IF(z;T,\Prob) = \lim_{\epsilon \to 0^+} \frac{T((1-\epsilon)\Prob + \epsilon \Delta_z) - T(\Prob)}{\epsilon},
\end{eqnarray*}
where $\Delta_z$ is the Dirac distribution at the point $z$ such that $\Delta_z(\{z\}) = 1$. It  describes
the effect of an infinitesimal contamination at the point $u$ on the estimate, standardized
by the mass of the contamination. Additionally, it gives   the effect that an outlying
observation may have on an estimator.  If it is bounded, then the effect of an outlier is infinitesimal. 
It is straightforward to obtain the influence functions associated with the parametric 
estimators in linear regression models.  However,  the nonparametric regression models are far more complicated.

To simplify the analysis, we consider a subclass of binary classification models, in which the true boundary $F^*$ is assumed to belong to a class of functions    $H$. Here, $H$ is  defined as  a Reproducing Kernel Hilbert Space (denoted with RKHS from now on) with a bounded kernel $k$ and the induced norm $|| \cdot ||_{H}$.
Observe that ArchBoost,  much like AdaBoost and LogitBoost, belongs to a broad class of empirical risk minimization methods (see Theorem \ref{numerical convergence}) and converges only if it is properly regularized (stopped after a certain number of steps; see Theorem \ref{preconsistency}). Hence, to study its robustness properties we consider 
\begin{eqnarray*}
f_{\mathbb{P},\lambda} = \argmin_{f \in H} \left\{\E_{\Prob} \left[\phi(Y, f(X)) \right]+ \lambda ||f||_{H}^2 \right\},
\end{eqnarray*}
which can be viewed as the population quantity of interest.
Here,  the loss $\phi$ is a function of tuple $(Y,f(X))$ for convenience.
The solution $f_{\Prob, \lambda}$ is viewed as a map  that for every fixed value of the regularization parameter, $\lambda$, assigns an element of the RKHS $H$ to every distribution $\Prob$ on a given set $Z \subset \mathcal{X} \times \mathcal{Y}$ . Observe that $\PP$ is not contaminated  as the analysis herein is completed at the population level.
In the risk minimization context, $T(\Prob) = f_{\Prob, \lambda}$, and for each $\Prob$, the influence function $IF(z;T,\Prob) \in H$. We also denote the  map  $\Psi:\mathcal{X} \to H$, defined as $\Psi(x) = k(x,\cdot)$. The influence function of $f_{\PP,\lambda}$ then takes the form 
\begin{eqnarray} \label{eq:17}
IF(z;T,\Prob) = -S^{-1} \circ J,
\end{eqnarray}
where $\circ$ is defined to mean $S^{-1}$ acting on $J$ and operators $S: H \to H$ and $J \in H$  are defined as
\begin{align*}
S & =    \E_{\Prob} \left[\phi^{''} (Y, f_{\Prob,\lambda}(X)) \langle \Psi(X), \cdot \rangle \Psi(X) \right]+2\lambda id_H ,
\\
J&=  \phi^{'}(z_y, f_{\Prob,\lambda}(z_x)) \Psi(z_x)-\E_{\Prob} [\phi^{'}(Y, f_{\Prob,\lambda}(X))\Psi(X)],
\end{align*}
where $id_H:H \to H$ is the identity mapping and $z = (z_x,z_y) \in \mathcal{X} \times \mathcal{Y}$ is the contamination point.
In the above display, 
the derivative  is defined as $\phi^{'}(u, v) := \frac{\partial}{\partial v} \phi(u, v)$.
For the proof of this result, we  refer to the proof of Theorem 4 in \cite{Christmann:04} and of Theorem 15 in \cite{Christmann:07}. 
   In   the above the convexity of the loss function with respect to the second argument is required for  $S$ to be  invertible for all $\lambda > 0$.
   For a non-convex loss function $\phi$, $\phi^{''}$ is not guaranteed to be nonnegative. However,  we observe that it is sufficient to have the  non-negativity of the expectation rather than of the second derivative itself. Moreover the non-negativity   is only required in a local neighborhood of the true parameter of interest, $F^*$. 
   In the  following lemma, we show  the properties of   $\EE_{\PP} \left[\phi '' (Y, \cdot)\right]$. 

\begin{lemma}\label{lemma:hessian}
For a binary classification problem, given any distribution $\PP$, whenever $\phi$ is a twice continuous differentiable Arch boosting loss function and $F^*$ is obtained by \eqref{eq:5}, then $\E_{\Prob} \left[ \phi^{''}(Y,F^*(X))q^2(X) \right] \ge 0$ for any measurable function $q: \mathcal{X} \to \mathbb{R}$.

Furthermore, if $\PP$ and $\mathcal{X}$ are such  that $\PP(Y=1|X=x) \in [\delta, 1-\delta]$ for some $0<\delta< \frac{1}{2}$, and if $p\phi^{''}(1,v_p^*) + (1-p)\phi^{''}(-1,v_p^*) > 0$ at the global minimum $v_p^*$ for all $p \in [\delta,1-\delta]$, then there exists $r>0$ such that $\E_{\PP}\left[ \phi^{''}(Y,G(X))q^2(X)\right] \ge 0$ for all measurable function $G$ with $||G - F^*||_{\infty} < r$.
\end{lemma}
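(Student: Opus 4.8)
The plan is to condition on $X=x$ and reduce both statements to a one-dimensional fact about the conditional risk. Writing $p:=\PP(Y=1\mid X=x)$ and
$$\Phi_p(v) := p\,\phi(1,v) + (1-p)\,\phi(-1,v) = \E_Y\bigl[\phi(Y,v)\mid X=x\bigr],$$
the tower property gives, for any measurable $q$ and any candidate $G$,
$$\E_{\PP}\bigl[\phi''(Y,G(X))\,q^2(X)\bigr] = \E_X\Bigl[\Phi_{p(X)}''(G(X))\,q^2(X)\Bigr],$$
where $p(x)=\PP(Y=1\mid X=x)$ and $\Phi_{p}''(v)=p\,\phi''(1,v)+(1-p)\,\phi''(-1,v)$. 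Since $q^2\ge 0$, it suffices in each part to establish the pointwise inequality $\Phi_{p(x)}''(G(x))\ge 0$ for $\PP_X$-a.e.\ $x$. Note also that, by condition (ii) of Definition~\ref{def:1} (with $\alpha=p$), the function $\Phi_p$ has a unique critical point, namely its global minimizer $v_p^*$, and by \eqref{eq:5} one has $F^*(x)=v^*_{p(x)}$.

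For the first claim I would simply invoke the second-order necessary optimality condition. At the point $x$, $F^*(x)=v^*_{p(x)}$ is an interior (finite) local minimizer of the twice continuously differentiable function $\Phi_{p(x)}$, so $\Phi_{p(x)}''(F^*(x))\ge 0$; substituting $G=F^*$ into the displayed identity and using $q^2\ge0$ closes Part 1. No global convexity is used --- only that $F^*(x)$ is a genuine finite minimizer, which is guaranteed by condition (ii) wherever \eqref{eq:5} admits a solution.

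For the second claim the task is to upgrade this nonnegativity from $v=v_p^*$ to a uniform sup-norm neighborhood. The hypothesis supplies strict positivity $\Phi_p''(v_p^*)>0$ for every $p\in[\delta,1-\delta]$. First I would show that $p\mapsto v_p^*$ is continuous on $[\delta,1-\delta]$: since $\Phi_p'(v_p^*)=0$ and $\partial_v\Phi_p'(v_p^*)=\Phi_p''(v_p^*)>0$, with $\Phi_p'(v)$ jointly $C^1$ in $(p,v)$ because $\phi$ is $C^2$, the implicit function theorem yields a locally $C^1$ (hence continuous) solution branch $p\mapsto v_p^*$. Consequently $\{v_p^*:p\in[\delta,1-\delta]\}$ is compact, so $F^*=v^*_{p(\cdot)}$ is bounded by some $M$, and $m:=\min_{p\in[\delta,1-\delta]}\Phi_p''(v_p^*)>0$ is attained. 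Working on the fixed compact box $[\delta,1-\delta]\times[-M-1,M+1]$ containing the graph of $v_p^*$, the continuous function $(p,v)\mapsto\Phi_p''(v)$ is uniformly continuous, so there is $r\in(0,1)$ with $|\Phi_p''(v)-\Phi_p''(v_p^*)|<m/2$ whenever $|v-v_p^*|<r$; hence $\Phi_p''(v)>m/2>0$ on that tube. Finally, for any $G$ with $\|G-F^*\|_\infty<r$ we have $p(x)\in[\delta,1-\delta]$ and $|G(x)-v^*_{p(x)}|=|G(x)-F^*(x)|<r$ a.e., so $\Phi_{p(x)}''(G(x))>0$, and the displayed identity gives $\E_{\PP}[\phi''(Y,G(X))q^2(X)]\ge0$.

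The main obstacle is the uniformity in Part 2: the radius $r$ must be chosen independently of $x$, and this is exactly where the two extra hypotheses enter. Boundedness of $p(x)$ away from $\{0,1\}$ makes the parameter set $[\delta,1-\delta]$ compact, so that $F^*$ is bounded and the minimum $m$ is strictly positive and attained; the strict inequality $\Phi_p''(v_p^*)>0$ is needed both for the implicit function theorem (a non-degenerate minimizer) and to leave the $m/2$ slack. The only delicate point is establishing continuity of the minimizer branch $p\mapsto v_p^*$; one could alternatively deduce this from Berge's maximum theorem using the uniqueness of the minimizer guaranteed by condition (ii), which dispenses with treating the endpoints $p=\delta,\,1-\delta$ separately.
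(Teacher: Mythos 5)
Your proof is correct and follows essentially the same route as the paper's own argument: condition on $X=x$, apply the second-order necessary condition at the unique minimizer guaranteed by condition (ii) of Definition~\ref{def:1} for the first claim, and use a compactness/continuity argument over $p\in[\delta,1-\delta]$ for the second. If anything, your version is more complete than the paper's, which passes from the pointwise radii $r_p$ to a uniform $r$ by merely citing compactness of $[\delta,1-\delta]$, whereas you supply the ingredients that make this step rigorous --- continuity of $p\mapsto v_p^*$ via the implicit function theorem (or Berge's maximum theorem), the resulting boundedness of $F^*$, and uniform continuity of $(p,v)\mapsto p\,\phi''(1,v)+(1-p)\,\phi''(-1,v)$ on a compact box.
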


A few comments are necessary. Conditions of the above lemma are satisfied for the case of the $\gamma$-robust loss function.
For example,  for the case of the two-robust loss function, one can show that for any $x$, $\E_{Y} [\phi^{''}(Y,F^*(X))q^2(X)|X=x]
= 2p_x^2(1-p_x)^2 q^2(x) \ge 0$ where $p_x = \Prob(Y=1|X=x)$. Thus,   $\E_{\Prob} \phi^{''}(Y,F^*(X))q^2(X) \ge 0$. Furthermore, if $  p_x \in [\delta,1-\delta]$ for some $\delta \in (0,\frac{1}{2})$, then $p_x\phi^{''}(1,F^*(x)) + (1-p_x)\phi^{''}(-1,F^*(x))= 2p_x^2(1-p_x)^2  \ge 2\delta^2(1-\delta)^2 > 0$ for all $p_x \in [\delta,1-\delta]$. 
Observe that the condition of $  p_x \in [\delta,1-\delta]$ for some $\delta \in (0,\frac{1}{2})$ restricts our setting to the  ``low-noise''  setting where the true probability of the class membership is bounded away from $0$ or $1$. An example of a  model where such condition holds is    $\PP(Y=1|X=x) =(1+e^{-\gamma f(x)})^{-1}$ with $\mathcal{X}$ being a   compact space.

Recall that  $F^*$ is the solution to   the risk minimization problem defined over  all measurable functions. If we restrict the search  to the  RKHS, $H$, with bounded kernel, then $F^* \neq f_{\Prob,\lambda}$ in general. However,  as long as  $H$ is rich enough such that $f_{\Prob,\lambda}$ is close to $F^*$ in the sense that $||f_{\PP,\lambda} - F^*||_{\infty}$ is small, we have the following Theorem.
This Theorem  should be treated as a clue of the robustness of  non-convex loss functions as it 
shows  that the influence function is  bounded in $H$ and decreases when the contamination point $z$  is more of an outlier. Furthermore, this theorem again emphasizes that the robustness mainly comes from the fact that $|\phi^{'}| \to 0$, when the magnitude of the ``margin''   is large. 

\begin{thm} \label{thm:influence}
For a binary classification problem, let $\phi: \mathbb{R} \to [0,\infty)$ be a twice continuously differentiable Arch boosting loss function and let $H$ be a RKHS with bounded kernel $k$. Assume $\Prob$ is a distribution on $\mathcal{X} \times \mathcal{Y}$ such that $\PP(Y=1|X=x) \in [\delta, 1-\delta]$ for all $x \in \mathcal{X}$ and for some $0<\delta< \frac{1}{2}$ and $p \phi^{''}(1,v_p^*) + (1-p) \phi^{''}(-1,v_p^*) > 0$ at the global minimum $v_p^*$ for all $p \in [\delta,1-\delta]$. Then there exists $r > 0$ such that for all $||f_{\PP,\lambda} - F^*||_{\infty} < r$ 
\begin{eqnarray} \label{eq:18}
||IF(z;f_{\Prob,\lambda},\Prob)||_H \le  \sqrt{ \frac{C_{\phi}}{\lambda}} + \frac{M_k |\phi^{'}(z_y,f_{\PP,\lambda}(z_x))|}{2 \lambda},
\end{eqnarray}
where $M_k$ is the upper bound of the kernel $k$ and $C_{\phi} = \phi(0,0)$.
\end{thm}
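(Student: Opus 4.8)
The plan is to bound $\|IF(z;f_{\Prob,\lambda},\Prob)\|_H$ directly from the closed form \eqref{eq:17}, namely $IF = -S^{-1}\circ J$, by controlling $\|S^{-1}\|_{\mathrm{op}}$ and $\|J\|_H$ separately and then using $\|S^{-1}\circ J\|_H \le \|S^{-1}\|_{\mathrm{op}}\,\|J\|_H$. The two estimates $\|S^{-1}\|_{\mathrm{op}} \le 1/(2\lambda)$ and a triangle-inequality bound on $\|J\|_H$ will recombine into exactly the two summands of \eqref{eq:18}.

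First I would establish the operator bound on $S^{-1}$, which is the heart of the argument and the only place where the non-convexity of $\phi$ intervenes. Writing $S = A + 2\lambda\,id_H$ with $A = \E_{\Prob}[\phi''(Y,f_{\Prob,\lambda}(X))\langle\Psi(X),\cdot\rangle\Psi(X)]$, the reproducing property $\langle\Psi(X),h\rangle_H = h(X)$ gives, for any $h\in H$,
\[
\langle Ah, h\rangle_H = \E_{\Prob}\!\left[\phi''(Y,f_{\Prob,\lambda}(X))\,h(X)^2\right].
\]
Applying Lemma \ref{lemma:hessian} with $G = f_{\Prob,\lambda}$ and $q = h$ — which is legitimate precisely because we assume $\|f_{\Prob,\lambda}-F^*\|_\infty < r$ together with the low-noise condition $\PP(Y=1|X=x)\in[\delta,1-\delta]$ — shows that $\langle Ah,h\rangle_H \ge 0$, i.e. $A$ is self-adjoint and positive semidefinite. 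Hence $\langle Sh,h\rangle_H \ge 2\lambda\|h\|_H^2$, so $\|Sh\|_H \ge 2\lambda\|h\|_H$, which yields invertibility of $S$ and $\|S^{-1}\|_{\mathrm{op}} \le 1/(2\lambda)$.

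Next I would bound $\|J\|_H$ by splitting $J$ into its two terms. The first term is $|\phi'(z_y,f_{\Prob,\lambda}(z_x))|\,\|\Psi(z_x)\|_H$, and since $\|\Psi(z_x)\|_H = \sqrt{k(z_x,z_x)} \le M_k$ by boundedness of the kernel, it contributes $M_k\,|\phi'(z_y,f_{\Prob,\lambda}(z_x))|$. For the second term $\|\E_{\Prob}[\phi'(Y,f_{\Prob,\lambda}(X))\Psi(X)]\|_H$ I would use the first-order stationarity condition for the regularized minimizer: the G\^{a}teaux derivative of $f\mapsto \E_{\Prob}[\phi(Y,f(X))]+\lambda\|f\|_H^2$ vanishes at $f_{\Prob,\lambda}$, which reads $\E_{\Prob}[\phi'(Y,f_{\Prob,\lambda}(X))\Psi(X)] + 2\lambda f_{\Prob,\lambda} = 0$; hence this term equals $2\lambda\|f_{\Prob,\lambda}\|_H$. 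It then remains to bound $\|f_{\Prob,\lambda}\|_H$, which I get by comparing the regularized objective at $f_{\Prob,\lambda}$ with its value at $f\equiv 0$: using $\phi\ge 0$ gives $\lambda\|f_{\Prob,\lambda}\|_H^2 \le \E_{\Prob}[\phi(Y,0)] = \phi(0,0) = C_\phi$, so $\|f_{\Prob,\lambda}\|_H \le \sqrt{C_\phi/\lambda}$.

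Assembling the pieces,
\[
\|IF\|_H \le \frac{1}{2\lambda}\Big(M_k\,|\phi'(z_y,f_{\Prob,\lambda}(z_x))| + 2\lambda\sqrt{C_\phi/\lambda}\Big) = \sqrt{\frac{C_\phi}{\lambda}} + \frac{M_k|\phi'(z_y,f_{\Prob,\lambda}(z_x))|}{2\lambda},
\]
which is \eqref{eq:18}. I expect the main obstacle to be the positive semidefiniteness of $A$: for convex losses it is automatic, but here it holds only locally and must be extracted from Lemma \ref{lemma:hessian}. Thus the neighborhood hypothesis $\|f_{\Prob,\lambda}-F^*\|_\infty<r$ and the strict positivity $p\phi''(1,v_p^*)+(1-p)\phi''(-1,v_p^*)>0$ are exactly the ingredients that make $S$ coercive, so that $S^{-1}$ is well-defined and bounded by $1/(2\lambda)$; everything else is a routine norm computation once the optimality condition and the $\phi\ge 0$ comparison are in hand.
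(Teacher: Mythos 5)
Your proposal is correct and is essentially the paper's own argument in lightly repackaged form: both rest on Lemma \ref{lemma:hessian} (applied with $G=f_{\Prob,\lambda}$, legitimate under the neighborhood hypothesis) to get positivity of the $\phi''$ quadratic form, the stationarity identity $2\lambda f_{\Prob,\lambda}+\E_{\Prob}[\phi'(Y,f_{\Prob,\lambda}(X))\Psi(X)]=0$, the comparison bound $\lambda\|f_{\Prob,\lambda}\|_H^2\le C_\phi$, and Cauchy--Schwarz with the bounded kernel. The only cosmetic difference is that you bound $\|S^{-1}\|_{\mathrm{op}}\le 1/(2\lambda)$ and $\|J\|_H$ separately, whereas the paper tests the equation $Sg_z=-J$ against $g_z$ and bounds $2\lambda\|g_z\|_H^2$ directly; the resulting algebra and constants are identical.
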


For any non-convex Arch boosting loss function, due to  Assumption 2, we have $|\phi^{'}(z_y, f_{\Prob,\lambda}(z_x))| \to 0$ when $z_y f(z_x) \to -\infty$ or $z_y f(z_x) \to \infty$. Result of Theorem \ref{thm:influence} implies that for all smooth enough classification boundaries $F^*$,  the Arch Boosting algorithm has a bounded influence function
whenever  $\lambda >0$.
If we plot $\| IF(z;f_{\Prob,\lambda},\Prob)\|_H$ versus $z_y f_{\Prob,\lambda}(z_x)$, then it will decrease towards a constant far from the origin, just like the loss function of a redescending M-estimator \citep{H81}.
 Moreover, we observe that $\| IF(z;f_{\Prob,\lambda},\Prob)\|_H$ is unbounded for the exponential loss (AdaBoost) and bounded but not diminishing for the logistic loss (Logit Boost).  
  
\subsection{Statistical Consistency of Arch Boost} \label{sec:5}
Per Theorem \ref{numerical convergence}, we observe that the  Arch Boost Algorithm \ref{boosting}, can be formulated as  an empirical risk minimization procedure, for which the consistency has been established in the case of convex loss functions \citep{Bartlett:06, Bartlett:07, Jiang04, ZY05}. However, we extend  this framework to include non-convex losses  by exploring local curvatures.
Given any training sample $\mathcal{S}_n = \{(X_i,Y_i)\}_{i=1}^n$, 
 we compute a classifier $f_n$ on $\mathcal{S}_n$, and denote the misclassification error to be $L(f_n) = \Prob(f_n(X) \neq Y|\mathcal{S}_n)$. Moreover,   the Bayes risk is  $L^* = \inf_{f \in \mathcal{M}} L(f) = \E_X [\min(\eta(X),1-\eta(X))]$, where $\eta(X) = \Prob(Y=1|X)$. Let $\mathcal{M}$ be the family of measurable functions. The three key steps for proving consistency  \citep{Bartlett:2007} include: (I) utilizing the property of the loss that whenever the  $\phi-$risk $R_{\phi}(f_n)$ converges to the minimal risk 
 $$R_{\phi}^* = \inf_{f \in \mathcal{M}} R_{\phi}(f),$$ 
 the misclassification error  converges to the Bayes risk; (II) there exists a deterministic sequence $\tilde{f_n}$ for which the $\phi-$risk approaches  the minimal risk $R_{\phi}^*$ as $n \to \infty$ ; (III) $\phi-$risk of the estimated $f_n$ can be approximated by the $\phi$-risk of the deterministic, reference sequence $\tilde{f_n}$.
%
Observe that  $f_n$ is sample dependent, that is, $f_n$ is a function of $\mathcal{S}_n$, and hence $L(f_n)$ is again a random variable. Hence, in  step (II), we choose a deterministic reference sequence such that $F_n$ is only a function of  the sample size $n$.
The first condition (I) is automatically satisfied if we use Arch boosting loss function because of the classification calibration condition. 
%
Note that the above inequalities are true for any sample $\mathcal{S}_n$ and sample size $n$.
From now on, we will have several assumptions. The first one is about the class $\mathcal{H}$ and the distribution $\PP$.

\begin{condition1} \label{assumption:1}
Let the class of the weak hypothesis $\mathcal{H}$ and the probability distributions $\PP$ be such that  
$
\lim_{T \to \infty} \inf_{f \in \mathcal{F}^{T}} R_{\phi}(f) = R_{\phi}^*
$
and $d_{VC}\{\mathcal{H}\} <\infty$.
\end{condition1}
 For certain rich enough class $\mathcal{H}$, Assumption 1 is true for any distribution $\PP$ \citep{Bartlett:07}. One example is the class $\mathcal{T}$ of binary trees with the number of terminal nodes larger or equal to $d+1$, where $d$ is the dimension of $\mathcal{X}$ \citep{Breiman:04}. The next assumption is about the loss function $\phi$. 

\begin{condition2} \label{assumption:2}
Let the loss function $\phi$ be a bounded, decreasing, Lipschitz function that belongs to the class of Arch Boosting loss functions.
\end{condition2}
 
 If $\phi$ satisfies Assumption 2, then we know both $\lim_{v \to \infty} \phi(v)$ and $\lim_{v \to -\infty} \phi(v)$ exist in $\mathbb{R}$. A class of  loss functions that satisfy Assumption 2  incorporates
 a class of loss functions for which the
 first derivative  converges to zero away from the origin; an example is a redescending Hampel's three part loss function.    This lessens the effect of gross outliers and in turn leads to many
good robust properties of the resulting estimator. All the $\gamma$-robust loss functions ($\gamma \ge 1$) satisfy this condition. Commonly used loss functions like least squares, exponential loss and logistic loss do not satisfy this condition because they are not bounded.

Next we  state two results    important for proving the consistency of the Arch Boosting estimator.  Much like the consistency of AdaBoost, the proof hinders upon the optimal choice of stopping times. However,  the statement is not dependent on an additional truncation level of the functions $f$  (like Lemma 4 in \cite{Bartlett:07}) because of the boundedness of our loss function $\phi$. The nice decaying property of the derivative of the proposed Arch Boosting loss enables us to avoid additional parameters. The proof is given in  the Appendix A.

\begin{thm} \label{preconsistency}
Assume $\mathcal{H}$ and distribution $\PP$ satisfy Assumption 1. Let $\phi$ be an Arch boosting loss function satisfying Assumption 2. Then for any sample $\mathcal{S}_n  $ and for  a  non-negative sequence of stopping times $T_n = n^{1-\varepsilon}$ with $\varepsilon \in (0,1)$,
we have for $f^*_{n} = \argmin_{f \in \mathcal{F}^{T_n}} \hat{R}_{\phi,n}(f)$ as $n \to \infty$,
 
 \centerline{(a)  $\sup_{f \in \mathcal{F}^{T_n}} |\hat{R}_{\phi,n}(f) - R_{\phi}(f)| \to 0  \;\; a.s.$
   \ \ and \ \ (b)   {$R_{\phi}(f^{*}_n) \to R_{\phi}^* \;\; a.s.$}}

\end{thm}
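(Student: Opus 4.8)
The plan is to establish (a) as a uniform law of large numbers over the growing class $\mathcal{F}^{T_n}$, and then to derive (b) from (a) together with the approximation guarantee of Assumption 1, via the usual estimation/approximation split. Throughout, write $D_n := \sup_{f \in \mathcal{F}^{T_n}}|\hat R_{\phi,n}(f) - R_\phi(f)|$; note that (a) is precisely $D_n \to 0$ a.s., and since the bound is uniform it applies to the (sample-dependent) minimizer $f_n^*$ automatically.

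For part (a) I would control the statistical complexity of $\mathcal{F}^{T_n}$ and transfer it to the loss class $\mathcal{G}_n := \{(x,y) \mapsto \phi(yF(x)) : F \in \mathcal{F}^{T_n}\}$. Because $d_{VC}\{\mathcal{H}\} < \infty$, Sauer's lemma bounds the number of sign patterns produced by $T_n$-tuples of weak hypotheses, whence the pseudo-dimension of the real linear combinations $\mathcal{F}^{T_n}$ scales like $T_n \, d_{VC}\{\mathcal{H}\}$ up to logarithmic factors. Since $\phi$ is bounded (Assumption 2), every element of $\mathcal{G}_n$ takes values in $[0,\sup\phi]$, so, in contrast with the unbounded-loss analysis of \cite{Bartlett:07}, no truncation of $f$ is required to obtain a uniformly bounded class. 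As $\phi$ is monotone and Lipschitz, composition with $\phi$ does not inflate the complexity (a contraction argument, or invariance of pseudo-dimension under a fixed monotone map), so a standard uniform deviation bound for uniformly bounded classes gives
$$\E \, D_n \le C \sqrt{\frac{T_n \, d_{VC}\{\mathcal{H}\} \log n}{n}}.$$
Substituting $T_n = n^{1-\varepsilon}$ yields a rate of order $\sqrt{n^{-\varepsilon}\log n} \to 0$. To upgrade convergence in mean to the almost-sure statement, I would apply McDiarmid's bounded-differences inequality: changing one of the $n$ samples alters $D_n$ by at most $\sup\phi / n$, so for every fixed $\delta > 0$,
$$\Prob\bigl(D_n > \E D_n + \delta\bigr) \le \exp\bigl(-c\,n\,\delta^2\bigr),$$
which is summable in $n$. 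Borel--Cantelli combined with $\E D_n \to 0$ then delivers $D_n \to 0$ almost surely.

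For part (b) I would use the estimation/approximation decomposition. By Assumption 1 and $T_n \to \infty$, I choose a deterministic reference sequence $\bar f_n \in \mathcal{F}^{T_n}$ with $R_\phi(\bar f_n) \to R_\phi^*$. Using the optimality $\hat R_{\phi,n}(f_n^*) \le \hat R_{\phi,n}(\bar f_n)$ and inserting the population risk twice yields
$$R_\phi(f_n^*) - R_\phi^* \le 2\,D_n + \bigl(R_\phi(\bar f_n) - R_\phi^*\bigr).$$
The first term tends to $0$ almost surely by (a), and the second tends to $0$ by the choice of $\bar f_n$; since $R_\phi(f_n^*) \ge R_\phi^*$ always holds, this forces $R_\phi(f_n^*) \to R_\phi^*$ almost surely.

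The main obstacle is part (a): obtaining a complexity bound for a class $\mathcal{F}^{T_n}$ whose effective dimension grows with $n$ through $T_n$, while the coefficients $\alpha_t$ range over all of $\realR$. The two features of Assumption 2 that make this tractable are boundedness of $\phi$ (which removes the need to truncate $f$ and controls the bounded differences in the concentration step) and its Lipschitz, monotone structure (which lets the complexity of the real combinations pass unchanged through $\phi$). One must also verify that the growth $T_n = n^{1-\varepsilon}$ is simultaneously slow enough for the estimation term $D_n$ to vanish and fast enough (via $T_n \to \infty$) for the approximation term $R_\phi(\bar f_n) - R_\phi^*$ to vanish; this balance is exactly what the restriction $\varepsilon \in (0,1)$ secures.
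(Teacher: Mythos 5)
Your proposal is correct in substance, but for part (a) it takes a genuinely different route from the paper's. The paper never works with $\mathcal{F}^{T_n}$ directly: it first truncates, bounding $\sup_{f\in\pi_{\zeta_n}\circ\mathcal{F}^{T_n}}|\hat R_{\phi,n}(f)-R_\phi(f)|$ via the Rademacher-complexity bound of \cite{Bartlett:07} (Lemma \ref{lem:6}, with the truncation level $\zeta_n=\kappa\log n$ entering the rate multiplicatively), and then removes the truncation in a separate step (Lemma \ref{lem:7}), using the fact that a bounded decreasing $\phi$ has finite limits at $\pm\infty$, so that $\phi(Yf(X))$ and $\phi(Y\pi_{\zeta_n}\circ f(X))$ differ uniformly little once $\zeta_n$ is large. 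You bypass truncation altogether by measuring the complexity of the loss class combinatorially: the pseudo-dimension of $\mathcal{F}^{T_n}$ is of order $T_n\,d_{VC}\{\mathcal{H}\}$ up to logarithms, composition with the fixed decreasing $\phi$ cannot increase it, and the loss class is uniformly bounded by $\sup\phi$, so a Pollard/Haussler-type deviation bound plus McDiarmid and Borel--Cantelli give the almost-sure statement. One caution: of the two mechanisms you offer for the composition step, only the monotone-map/pseudo-dimension one is sound here; a Ledoux--Talagrand contraction argument requires control of the Rademacher complexity of the underlying class $\mathcal{F}^{T_n}$, which is infinite because the coefficients $\alpha_t$ range over all of $\realR$ --- this is precisely why the paper (following \cite{Bartlett:07}) truncates. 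So your route stands only because Assumption 2 makes $\phi$ decreasing; without monotonicity you would be forced back to the paper's truncate-then-de-truncate argument. Your part (b) is essentially identical to the paper's: your two-deviations-plus-approximation bound is the paper's Lemma \ref{lem:fundamental} combined with Assumption 1, with your near-minimizing reference sequence $\bar f_n$ playing the role of the infimum. What your approach buys is a shorter, one-pass argument for (a), with no de-truncation lemma and no spurious $\zeta_n$ factor in the rate; what the paper's buys is that it can invoke the Bartlett--Traskin lemmas as a black box and isolates exactly where boundedness and the existence of the limits of $\phi$ at $\pm\infty$ are used.
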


Theorem \ref{preconsistency} illustrates the uniform deviation between the $\phi$-risk and the empirical $\phi$-risk. Note that we want $T_n \to \infty$ as $n \to \infty$ but not too fast (slower than $\mathcal{O}(n)$) in order to make sure that the uniform deviation  converges to zero when $n \to \infty$. Moreover, from part (b) of Theorem \ref{preconsistency}, we know there exists a sequence of samples $\{S^*_n\}_{n=1}^{\infty}$ such that $R_{\phi}(\tilde{f}_n) \to R^*_{\phi}$ as $n \to \infty$ where $\tilde{f}_n$ is the optimal classifier obtained by minimizing the empirical risk on $S^*_n$. In another word, from $\omega \in \Omega$ such that $R_{\phi}(f^{*}_n(\omega)) \to R_{\phi}^*$ as $n \to \infty$, we pick a $\omega_0$ and set $\tilde{f}_n = f^*_n(\omega_0)$. We let $\{\tilde{f}_n\}$ be our reference sequence, and note that $\tilde{f}_n$ only depends on sample size $n$, that is, $\{\tilde{f}_n\}$ is a deterministic sequence. This is necessary  to decouple the dependence of the estimator and the sample. 
Next we state the intermediary lemma that connects the reference sequence, $\tilde f_n$, to the Arch Boost estimator, $F_{T_n}$.
 
\begin{lemma} \label{preconsistency2}
For the above reference sequence $\{\tilde{f}_n\}_{n=1}^{\infty}$ and a  non-negative sequences $T_n = n^{1-\varepsilon}$, $\varepsilon \in (0,1)$, we have as $n \to \infty$,
 (a) $\left(\hat{R}_{\phi,n}(\tilde{f}_n) - R_{\phi}(\tilde{f}_n)\right)_+ \to 0 \;\; a.s.$ and 
  (b)  $\left(\hat{R}_{\phi,n}(F_{T_n}) - \hat{R}_{\phi,n}(\tilde{f}_n)\right)_+ \to 0  \;\; a.s.$
 
\end{lemma}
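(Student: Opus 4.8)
The plan is to treat the two parts separately, dispatching (a) quickly and concentrating the effort on (b). For part (a), recall that $\tilde{f}_n = f^*_n(\omega_0)$ was constructed as an element of $\mathcal{F}^{T_n}$, so it is a fixed (deterministic) member of that class. Consequently the two-sided uniform deviation already established in Theorem \ref{preconsistency}(a) applies verbatim: $|\hat{R}_{\phi,n}(\tilde{f}_n) - R_{\phi}(\tilde{f}_n)| \le \sup_{f\in\mathcal{F}^{T_n}}|\hat{R}_{\phi,n}(f)-R_{\phi}(f)| \to 0$ almost surely, and taking positive parts yields (a). Equivalently, since $\phi$ is bounded under Assumption 2 and $\tilde{f}_n$ does not depend on the current sample, a Hoeffding bound for the i.i.d.\ average $\hat{R}_{\phi,n}(\tilde{f}_n)$ together with Borel--Cantelli gives the same conclusion.

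For part (b) the essential observation is that both $F_{T_n}$ and $\tilde{f}_n$ are finite linear combinations of weak hypotheses, i.e.\ both lie in $\bigcup_{T\ge 1}\mathcal{F}^{T}$. Hence $\hat{R}_{\phi,n}(\tilde{f}_n)\ge R^*_{\phi,n}$ and $\hat{R}_{\phi,n}(F_{T_n})\ge R^*_{\phi,n}$, where $R^*_{\phi,n}=\inf_{F\in\bigcup_T\mathcal{F}^{T}}\hat{R}_{\phi,n}(F)$. The first inequality gives the pointwise bound
\[
\bigl(\hat{R}_{\phi,n}(F_{T_n})-\hat{R}_{\phi,n}(\tilde{f}_n)\bigr)_+ \;\le\; \hat{R}_{\phi,n}(F_{T_n})-R^*_{\phi,n},
\]
so it suffices to show $\hat{R}_{\phi,n}(F_{T_n})-R^*_{\phi,n}\to 0$ almost surely. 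I would obtain this from the numerical convergence result applied to the current sample: for a fixed realization, Theorem \ref{numerical convergence}(i) guarantees that $\hat{R}_{\phi,n}(F_T)$ is monotonically decreasing in $T$, while Theorem \ref{numerical convergence}(iv) guarantees $\hat{R}_{\phi,n}(F_T)\to R^*_{\phi,n}$ as $T\to\infty$.

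The hard part is precisely the coupling of the iteration count $T_n=n^{1-\varepsilon}$ with the sample-dependent rate of this numerical convergence. Per-$n$ convergence as $T\to\infty$ does not by itself force the diagonal quantity $\hat{R}_{\phi,n}(F_{T_n})-R^*_{\phi,n}$ to vanish, because the burn-in needed for the empirical risk to approach its infimum could in principle grow faster than $T_n$. To close this gap I would revisit the recursion underlying the proof of Theorem \ref{numerical convergence}(iv): the telescoping descent inequality of the form $\hat{R}_{\phi,n}(F_{t+1})\le \hat{R}_{\phi,n}(F_t)-\alpha_t\hat{\mu}(h_t,w_t)+C\alpha_t^2$ produces an explicit bound $\hat{R}_{\phi,n}(F_T)-R^*_{\phi,n}\le g(T)$ whose constants depend only on the Lipschitz constant of $\phi^{'}$, the assumed bound on $\theta$, and the boundedness of $\phi$ from Assumption 2, none of which depends on $n$. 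With a deterministic step-size schedule $\alpha_t$ satisfying the summability conditions of (iv), $g(T)$ is a fixed null sequence, so evaluating at $T=T_n\to\infty$ gives $g(T_n)\to 0$ uniformly in the realization. This furnishes the uniform-in-$n$ rate that the naive invocation of (iv) lacks, and feeding it back yields $\bigl(\hat{R}_{\phi,n}(F_{T_n})-\hat{R}_{\phi,n}(\tilde{f}_n)\bigr)_+\to 0$ almost surely, completing (b).
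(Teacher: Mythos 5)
Your proposal is correct and takes essentially the same route as the paper's proof: for (a) the paper uses exactly the Hoeffding-plus-Borel--Cantelli argument you offer as an alternative (your primary route via Theorem \ref{preconsistency}(a) also works, since $\tilde{f}_n \in \mathcal{F}^{T_n}$), and for (b) the paper makes the identical reduction $\bigl(\hat{R}_{\phi,n}(F_{T_n})-\hat{R}_{\phi,n}(\tilde{f}_n)\bigr)_+ \le \hat{R}_{\phi,n}(F_{T_n})-R^*_{\phi,n}$ using $\hat{R}_{\phi,n}(\tilde{f}_n) \ge R^*_{\phi,n}$ and then invokes Theorem \ref{numerical convergence}. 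The only difference is that the paper handles the diagonal coupling you isolate by simply asserting that ``the convergence rate only depends on $T$,'' so your explicit plan to extract an $n$-free rate $g(T)$ from the descent recursion makes rigorous a step the paper leaves implicit.
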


Lemma \ref{preconsistency2} is proved easily with the help of Theorems \ref{numerical convergence} and \ref{preconsistency}. Together with the result of 
 Theorem \ref{preconsistency} we can finally state the consistency result.

\begin{cor}   \label{consistency}
Assume the class $\mathcal{H}$ and distribution $\PP$ satisfy Assumption 1 and that the Arch boosting loss function $\phi$ satisfies Assumption 2. Then, as $n \to \infty$, the sequence of classifiers $F_{T_n}$ returned by the Arch Boost  algorithm  stopped at the step $T_n$, chosen in Theorem \ref{preconsistency}, satisfies $L(sign(F_{T_n})) \to L^*$ a.s.
\end{cor}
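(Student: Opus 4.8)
The plan is to reduce the claim about the misclassification error to a statement about the $\phi$-risk, and then to show that the $\phi$-risk of the algorithm's output converges to the optimal $\phi$-risk $R_{\phi}^*$. First I would invoke the classification-calibration property: since $\phi$ is an Arch boosting loss function it satisfies Condition (iii) of Definition \ref{def:1}, so by the calibration theory of \cite{Bartlett:06} there is a continuous, nondecreasing function $\psi$ with $\psi(0)=0$ and $\psi(\theta)>0$ for all $\theta>0$ such that $\psi\big(L(sign(f))-L^*\big) \le R_{\phi}(f)-R_{\phi}^*$ for every measurable $f$. Consequently it suffices to prove $R_{\phi}(F_{T_n}) \to R_{\phi}^*$ almost surely; since $\psi$ is continuous at the origin and strictly positive away from it, $\psi\big(L(sign(F_{T_n}))-L^*\big)\to 0$ then forces $L(sign(F_{T_n}))\to L^*$. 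This is exactly step (I) of the outline.

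To control the excess $\phi$-risk of the actual Arch Boost iterate $F_{T_n}$ (which, because of the hardness-condition updates, need not coincide with the empirical minimizer $f^*_n$), I would split
\begin{align*}
R_{\phi}(F_{T_n}) - R_{\phi}^*
&= \underbrace{\big(R_{\phi}(F_{T_n})-\hat{R}_{\phi,n}(F_{T_n})\big)}_{(A)}
+ \underbrace{\big(\hat{R}_{\phi,n}(F_{T_n})-\hat{R}_{\phi,n}(\tilde{f}_n)\big)}_{(B)} \\
&\quad + \underbrace{\big(\hat{R}_{\phi,n}(\tilde{f}_n)-R_{\phi}(\tilde{f}_n)\big)}_{(C)}
+ \underbrace{\big(R_{\phi}(\tilde{f}_n)-R_{\phi}^*\big)}_{(D)},
\end{align*}
and then dispatch each piece with a result already in hand. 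Term $(A)$ is bounded by $\sup_{f\in\mathcal{F}^{T_n}}|\hat{R}_{\phi,n}(f)-R_{\phi}(f)|$ because $F_{T_n}\in\mathcal{F}^{T_n}$, and this vanishes almost surely by Theorem \ref{preconsistency}(a). Term $(B)$ is at most $\big(\hat{R}_{\phi,n}(F_{T_n})-\hat{R}_{\phi,n}(\tilde{f}_n)\big)_+\to 0$ a.s.\ by Lemma \ref{preconsistency2}(b); term $(C)$ is at most $\big(\hat{R}_{\phi,n}(\tilde{f}_n)-R_{\phi}(\tilde{f}_n)\big)_+\to 0$ a.s.\ by Lemma \ref{preconsistency2}(a); and term $(D)$ tends to $0$ by the very construction of the deterministic reference sequence $\{\tilde{f}_n\}$, whose $\phi$-risk approaches $R_{\phi}^*$ (Theorem \ref{preconsistency}(b)). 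Since $R_{\phi}(F_{T_n})\ge R_{\phi}^*$ always holds, the left-hand side is nonnegative, so intersecting the four probability-one events and squeezing yields $R_{\phi}(F_{T_n})\to R_{\phi}^*$ a.s., and the conclusion follows from the first paragraph.

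The genuinely hard work lives not in the corollary but in the lemmas it cites; the corollary is an assembly step. The main obstacle is term $(B)$, i.e.\ Lemma \ref{preconsistency2}(b), which compares the empirical risk of the actual non-gradient, hardness-based iterate $F_{T_n}$ against the reference sequence. This is precisely where non-convexity bites, and it rests on the numerical-convergence Theorem \ref{numerical convergence} (in particular the guaranteed descent $\hat{R}_{\phi,n}(F+\alpha h)\le\hat{R}_{\phi,n}(F)$ and the resulting monotone convergence of the empirical risk) together with the inexact-minimization control already packaged in Lemma \ref{preconsistency2}. A secondary, purely technical point is the bookkeeping of the almost-sure statements: each bound holds off its own null set, so I would argue on the complement of the (finite) union of these null sets. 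Finally, I would verify that the stopping schedule $T_n=n^{1-\varepsilon}$ driving $(A)$ and $(D)$ is the \emph{same} one produced by Theorem \ref{preconsistency}, so that the uniform deviation and the risk control there apply verbatim to the sequence $F_{T_n}$ output by the algorithm.
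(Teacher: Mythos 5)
Your proposal is correct and is essentially identical to the paper's own proof: the paper establishes the same telescoping chain $R_{\phi}(F_{T_n}) \le \hat{R}_{\phi,n}(F_{T_n}) + \epsilon_n^1 \le \hat{R}_{\phi,n}(\tilde{f}_n) + \epsilon_n^1 + \epsilon_n^2 \le R_{\phi}(\tilde{f}_n) + \epsilon_n^1 + \epsilon_n^2 + \epsilon_n^3$, citing Theorem \ref{preconsistency}(a), Lemma \ref{preconsistency2}(b), and Lemma \ref{preconsistency2}(a) for exactly your terms $(A)$, $(B)$, $(C)$, then uses Theorem \ref{preconsistency}(b) for your term $(D)$ and finishes with classification calibration via Theorem 3 of \cite{Bartlett:06}. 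The only cosmetic difference is that you invoke calibration at the start as a reduction (via the $\psi$-transform) while the paper applies it at the end; the substance is the same.
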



\section{Numerical Experiments} \label{experiments}

 
 
In this section, we will test the Arch Boost $(\phi)$ algorithm under different loss functions $\phi$ on binary classification problems. 

\subsection{Simulated Examples}\label{gamma}
We will generate datasets using 'make-hastie-10-2' and 'make-gaussian-quantiles'  \citep{Pedregosa:11}. For 'make-hastie-10-2', the data have 10 features that are standard independent Gaussian and the target $y_i=1$ if $\sum_{i=1}^{10} x_i^2 > \chi^2_{10,1/2}$ \citep{FHT}. For 'make-gaussian-quantiles', in our case, the dataset is constructed by taking a 20-dimensional normal distribution $\mathcal{N}(\mathbf 0, 2I_{20})$ and defining two classes  as $y = 1$ if $\sum_{i=1}^{20} x_i^2 > 4 \chi^2_{20,1/2}$. In both datasets, the classes are separated by a concentric multi-dimensional spheres with origin at $\mathbf 0$ such that roughly equal numbers of samples are in each class.

We test the robustness of ARB-$\gamma$ algorithms by adding noise to different percentages of the training samples. In the following two examples, we add independent t-distribution (df = 4) noise to the features of a \% of selected training samples. The results are summarized in Figure \ref{fig:test}. For each dataset, we generate 14000 samples using the corresponding methods. Among the $14000$ data,  we use 2000 for training, 2000 for cross validation, and the rest 10000 for testing. The number of weak classifiers is $1000$, and by cross validation, we set the step sizes, $\alpha_t$, to be $0.78$ for ARB-1.5, 0.45 for ARB-2, 0.28 for ARB-3, 0.20 for ARB-4, 0.14 for ARB-5, 0.10 for ARB-6, and 0.80 for Real AdaBoost. For RobustBoost, we tune the target parameter for each percentage of errors using bisection search. In each figure, we plot the average test errors and the corresponding $95\%$ confidence intervals.

\begin{figure}[h!]
\centering
\begin{subfigure}{.5\textwidth}
  \centering
  \includegraphics[width=.9\linewidth]{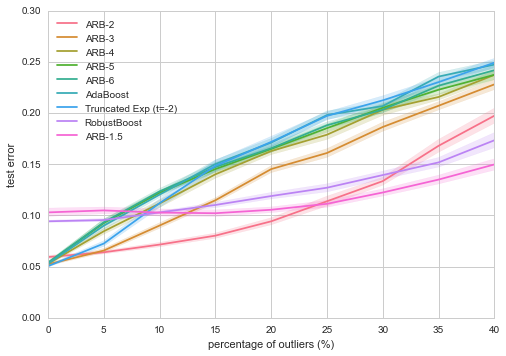}
  \caption{make-hastie-2 dataset.}
  \label{fig:sub1}
\end{subfigure}%
\begin{subfigure}{.5\textwidth}
  \centering
  \includegraphics[width=.9\linewidth]{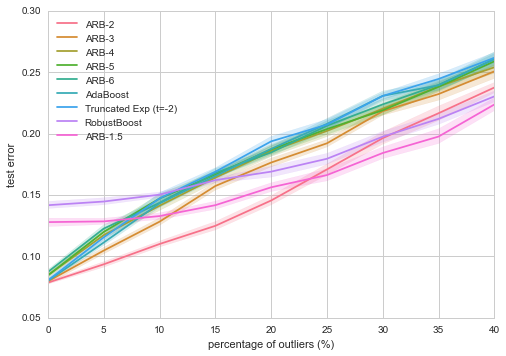}
  \caption{Gaussian quantiles dataset.}
  \label{fig:sub2}
\end{subfigure}
\caption{Comparison of average test errors of ARB-$\gamma$, AdaBoost, Arch Boost with truncated exponential loss (truncated at $t=-2$) and RobustBoost.}
\label{fig:test}
\end{figure}

From Figure \ref{fig:test}, we have several observations.
First, the test errors of the ARB-$\gamma$ algorithms are all less than that of the Real AdaBoost. Moreover, they are smaller than that of the truncated exponential loss, indicating that the introduced non-convex family of losses substantially outperforms traditional truncation losses.
Second, when the percentage of outliers is less than a certain level (around $23\%$ in Figure \ref{fig:sub1} and Figure \ref{fig:sub2}), the performances of ARB-2 is the best. When the noise level is higher, ARB-1.5 behaves the best.
Moreover,  RobustBoost has higher test error when the noise level is low.  We run the RobustBoost algorithm for $1000$ iterations and obtain an error that is larger than zero. However, if we run RobustBoost for long enough   when error rate is 0, its performance should converge to that of Real AdaBoost.
RobustBoost and ARB-1.5 have very similar performance. ARB-2 is worse than RobustBoost when the noise level is very high. However,  note that we tune a number of tuning  parameters of the RobustBoost at each noise level.  If  we were to also "tune" $\gamma$ for ARB-$\gamma$ algorithms, for example, choose ARB-2 when noise level is less than $25\%$ and ARB-1.5 otherwise in Figure \ref{fig:sub1}, then we could observe that ARB-$\gamma$ is uniformly better than the RobustBoost.

To illustrate the importance of the loss function choice and the Arch Boosting method, we implement a Gradient Descend Boosting algorithm with  a ``trimmed'' version of the exponential loss, i.e., the truncated exponential loss function. We observe that the improvement over AdaBoost is extremely minor and it disappears when the dimensionality of the problem grows. For 'make-hastie-10-2' dataset truncated exponential loss is better than $\gamma$-robust losses for $\gamma \geq 4$ and $\varepsilon <15\%$ after which point it  is very much indistinguishable from the AdaBoost. Situation is even better for 'make-gaussian-quantiles' dataset as truncated exponential is almost  identical as the AdaBoost as soon as $\varepsilon >7\%$. This suggests that the Arch Boosting framework is essential    for robust  and generalizable performance.

\subsection{Long/Servedio problem} \label{ls}
\cite{LS:10} constructed a challenging classification setting described as follows. The input $X \in \realR^{21}$ with binary features $X_i \in \{-1,+1\}$ and label $y_i \in \{-1,+1\}$. First, the label $y$ is chosen to be $-1$ or $+1$ with equal probability. Then for any given $y$, the features $X_i$ are generated according to the following mixture distribution:
\begin{itemize}
\item
\textbf{Large margin:} With probability $\frac{1}{4}$, set $X_i = y$ for all $1 \le i \le 21$.
\item
\textbf{Pullers:} With probability $\frac{1}{4}$, set $X_i = y$ for $1 \le i \le 11$ and $X_i = -y$ for $12 \le i \le 21$.
\item
\textbf{Penalizers:} With probability $\frac{1}{2}$, randomly choose 5 coordinates from the first 11 features and 6 from the last 10 to be equal to $y$. The remaining features are set to $-y$.
\end{itemize} 

The data from this distribution can be perfectly classified by $sign(\sum_i X_i)$. We  generate 800 samples from this distribution and flip each label with probability $\epsilon \in [0,0.5)$. Then we train the classifier on the noisy data and test the performance on the original clean data. We first generate 20 datasets according to the distribution, and on each of them, we randomly flip $\epsilon = 10\%$ of the labels. The result is in Table \ref{tab:LS}, where we record the average test error and also report the sample deviations in the brackets. We can see that the ARB-2 outperforms Real AdaBoost and LogitBoost \citep{FHT:00}, and is even better than RobustBoost (target parameter $\theta=0.15$) \citep{Freund:01}.

\begin{table}[h!]
\centering
  \begin{tabular}{ | p{2.7cm} | p{3cm} | p{3cm} | p{2.7cm} | p{2.7cm} | }
    \hline
    data type & Real AdaBoost  & LogitBoost  & RobustBoost ($\theta=0.15$) & ARB-2\\ \hline
    noise($\epsilon=0.1$) & $28.24\%(1.53\%)$ &$26.61\%(1.51\%)$  & $11.04\%(0.67\%)$& $\boldsymbol{9.82\%}(0.43\%)$ \\ \hline
    clean & $25.07\%(1.92\%)$  & $22.59\%(1.74\%)$ & $0.21\%(0.35\%)$ & $\boldsymbol{0.02\%}(0.04\%)$\\ \hline
  \end{tabular}
  \caption{Long/Servedio problem}
  \label{tab:LS}
\end{table}

We also compare the performance of different ARB-$\gamma$ and plot the average test errors and $95\%$ confidence intervals in Figure \ref{fig:LS}. We can see from Figure \ref{fig:LS} that ARB-1.5 behaves the best on this dataset among all these algorithms. When $\gamma$ increases, the performance of ARB-$\gamma$ is approaches that of the Real AdaBoost. The breakdown point will get higher when $\gamma \to 1^+$, implying that the smaller $\gamma$ lead to the better robustness properties. When $\gamma = 2$, then breakdown point is about $15\%$, and when $\gamma = 1.5$ and $1.3$, the breakdown point is about $20\%$. But for ARB-1.3, the test error when labels are not flipped is not zero.

\begin{figure}[h!]
\centering
\includegraphics[scale=0.50]{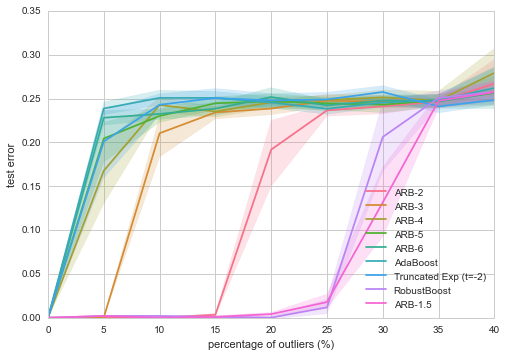}
\caption{Comparison of ARB-$\gamma$ on Long/Servedio problem with different $\epsilon$ (the probability of flipping the labels).}
\label{fig:LS}
\end{figure}

\subsection{Outlier  detection} \label{outliers}
We have shown in previous sections that ARB-$\gamma$ algorithms are more robust to the noise. Therefore, because of the robustness, ARB-$\gamma$ should be able to detect the outliers. Intuitively, if a point is an outlier, then it should be misclassified by most of the weak hypotheses of ARB-2. In this experiment, we generate $2000$ data points using 'make-hastie-10-2' and randomly shuffle them. Then we add a noise drawn from a t-distribution (df $= 4$) to each of the 10 features of the first $\epsilon$ percentage data points. After running the algorithms for $800$ iterations, we record the times that each data point is misclassified, and count the number of points that are misclassified more than $600$ times (denoted as $T$), and count how many of them (denoted as $T_o$) actually belong to the noisy set that we add noise to. Finally we calculate the ratio $\frac{T_o}{T}$. This ratio describes the chance that a data point is an outlier. By cross-validation, we set the step size  $\alpha=0.5$ for the ARB-2 and $\alpha = 0.8$ for the Real AdaBoost. The results are shown in Table \ref{tab:outliers}. The x-axis stands for the index of the training points ranging from $1$ to $2000$, and the y-axis stands for the times a point is misclassified, ranging from $0$ to $800$. We can see that when the percentage of outliers is less than $15\%$, for the ARB-2, more than $99\%$ of the points that have been misclassified for $600$ times or above, are indeed the outliers, but for the Real AdaBoost, this number is only around $31\%$. Informally, for ARB-2, when $\epsilon \le 15\%$, we have more than $99\%$ ``confidence'' to conclude that a data point, that is misclassified for more than $600$ times, is an outlier.

\begin{table}[h]
\centering
\begin{tabular}{|c|c|c|}
      \hline
      $\epsilon$ & ARB-2 & Real AdaBoost \\
      \hline
      $5\%$ &
      \addheight{\includegraphics[scale=0.35]{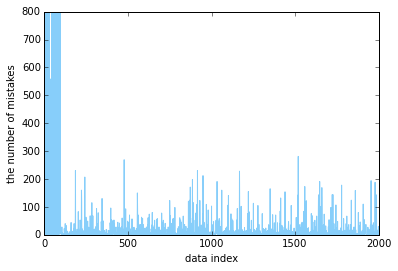}} &
      \addheight{\includegraphics[scale=0.35]{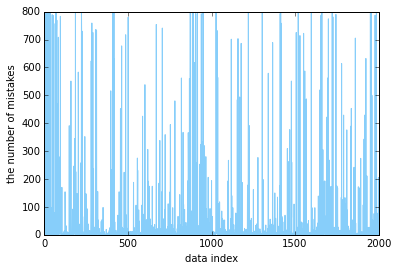}} \\
      \hline
      
      $T_o/T$ & $100\%$ & $30.49\%$ \\
      \hline
      
    $10\%$ &
      \addheight{\includegraphics[scale=0.35]{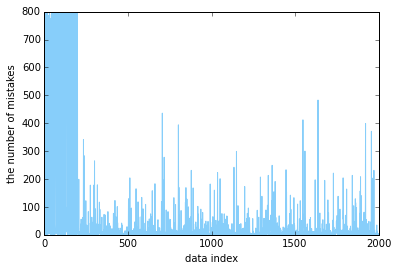}} &
      \addheight{\includegraphics[scale=0.35]{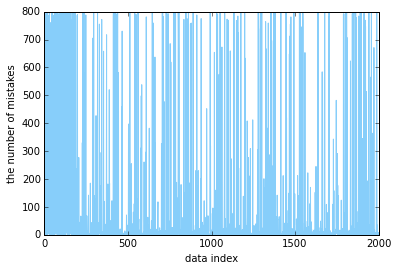}} \\
      \hline
      
      $T_o/T$ & $100\%$ & $32.22\%$ \\
      \hline
      
    $15\%$ &
      \addheight{\includegraphics[scale=0.35]{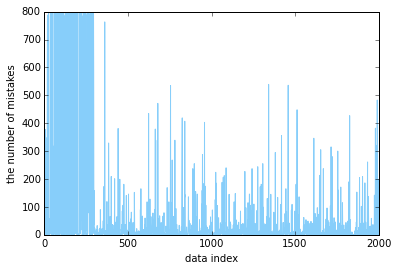}} &
      \addheight{\includegraphics[scale=0.35]{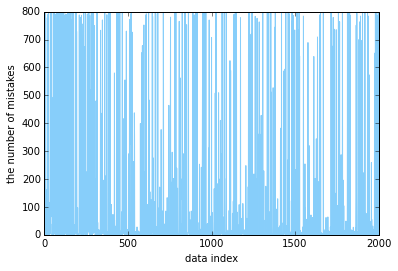}} \\
      \hline    
        
        $T_o/T$ & $99.04\%$ & $37.38\%$ \\
      \hline
      
      $20\%$ &
      \addheight{\includegraphics[scale=0.35]{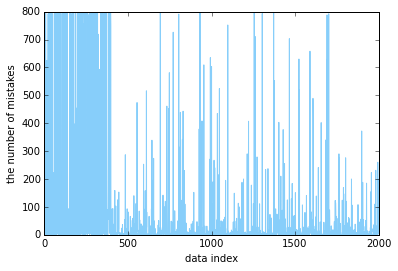}} &
      \addheight{\includegraphics[scale=0.35]{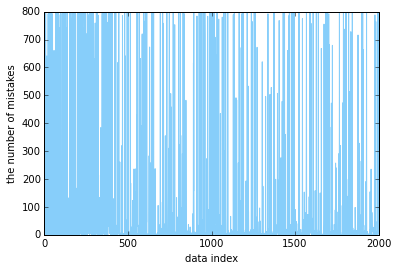}} \\
      \hline
      
      $T_o/T$ & $85.48\%$ & $44.40\%$ \\
      \hline
      
\end{tabular}
\caption{Outliers detection. The x-axis stands for the index of the training points ranging from 1 to 2000, and the y-axis stands for the times a point is misclassified, ranging from 0 to 800.}\label{tab:outliers}
\end{table}

\clearpage
\subsection{Real data application} \label{sec:realdata}
\subsubsection{Wisconsin (diagnostic) breast cancer data set}
We test our ARB-$\gamma$ algorithms on the Wisconsin (diagnositc) breast cancer data set of \cite{Street:93}, which is available on the machine learning repository website at the University of California, Irvine: \url{https://archive.ics.uci.edu/ml/datasets/Breast+Cancer+Wisconsin+(Diagnostic)}. The data set was created by taking measurements from a digitized image of a fine needle aspirate of a breast mass for each of 569 individuals, with 357 benign and 212 malignant instances. 
Ten real-valued features are computed for each cell nucleus: 
  radius, texture, perimeter, area, smoothness, compactness, concavity,  concave points, symmetry, fractal dimension.

 \begin{table}[h!]
    \centering
    \footnotesize
    \caption{Comparison of the average test errors and sample deviation of four algorithms on the Wisconsin breast cancer dataset.}
  \label{tab:wdbc}
    \begin{tabular}{@{}nd{13.1}*{15}{d{13.2}}d{14.1}d{14.2}d{13.2}@{}}
      \toprule
        \multicolumn{1}{@{}N}{Percentage of flipped labels} &
        \multicolumn{3}{N@{}}{Methods} &
        \\
      \cmidrule(lr){2-5}
        &
        \multicolumn{1}{V{6.5em}}{ARB-$2$} &
        \multicolumn{1}{V{6.5em}}{ARB-$1.5$} 
            &
        \multicolumn{1}{V{6.5em}}{Robust Boost} &
        \multicolumn{1}{V{6.5em}}{Ada Boost} 
         \\
      \cmidrule(lr){2-2}\cmidrule(lr){3-3}   \cmidrule(lr){4-4} \cmidrule(lr){5-5} 
         0$\%$   & ${\bf{3.35}}\%(1.29\%)$& $3.49\%(1.23\%)$   &  $4.52\%(1.61\%)$ & $4.06\%(1.48\%) $ \\
     5$\%$ & $4.71\%(1.76\%)$  & ${\bf{4.43\%}}(1.63\%)$ & $4.78\%(1.66\%)$ & $5.38\%(1.92\%)$\\ 
     10$\%$ & $5.71\%(1.71\%)$  & ${\bf{5.03\%}}(1.60\%)$ & $5.35\%(1.77\%)$ & $6.24\%(1.97\%)$\\ 
    15$\%$ & $6.92\%(2.06\%)$  & ${\bf{5.84\%}}(2.04\%)$ & $6.47\%(2.14\%)$ & $7.01\%(2.28\%)$\\        
      \bottomrule
    \end{tabular}
  \end{table}

We randomly split the data into an equally balanced training set with 150 benign samples and 150 malignant samples, and the rest samples were used for testing. The maximum iterations is set to to be 200, and a five-fold cross-validation is implemented on the training set to select the step size and stopping time ($\le 200$) for each algorithm. This procedure was repeated for 100 times and an average of the test error is reported in Table \ref{tab:wdbc}. 
Boxplots of the test error are presented in  Figure \ref{fig:wdbc}. 
\begin{figure}[h]
\centering
\begin{subfigure}{.3\textwidth}
  \centering
  \includegraphics[width=1.\linewidth]{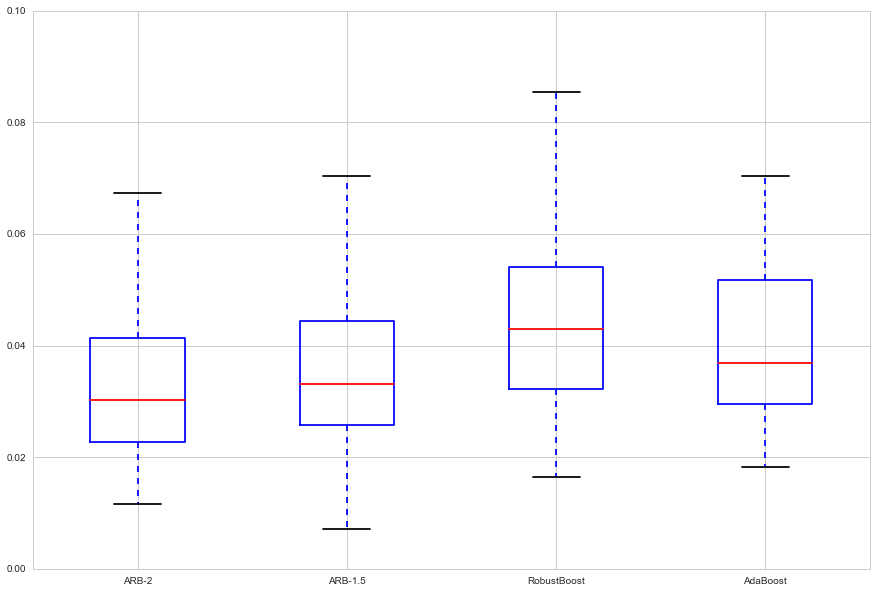}
  \caption{$0\%$.}
  \label{fig:wdbcsub1}
\end{subfigure}%
\begin{subfigure}{.3\textwidth}
  \centering
  \includegraphics[width=1.\linewidth]{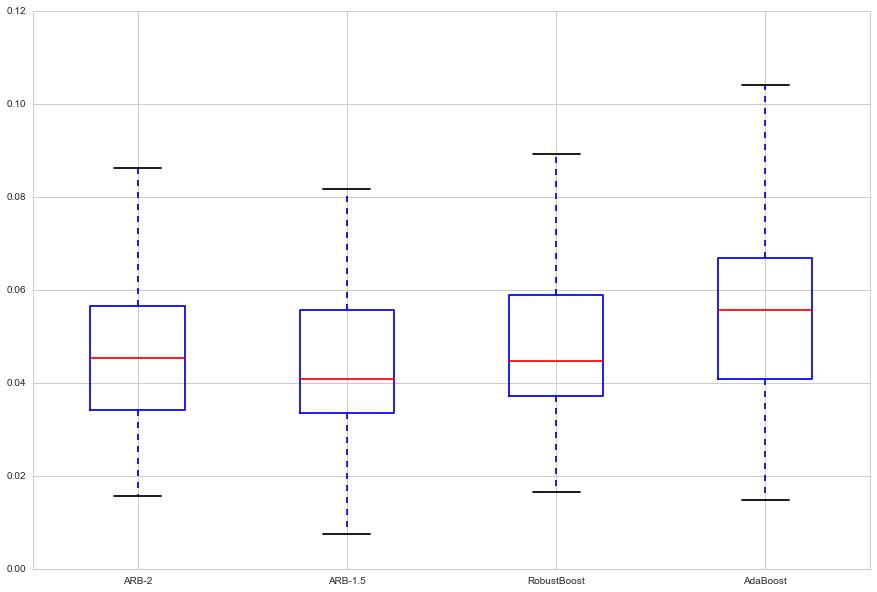}
  \caption{$5\%$.}
  \label{fig:wdbcsub2}
\end{subfigure}
\begin{subfigure}{.3\textwidth}
  \centering
  \includegraphics[width=1.\linewidth]{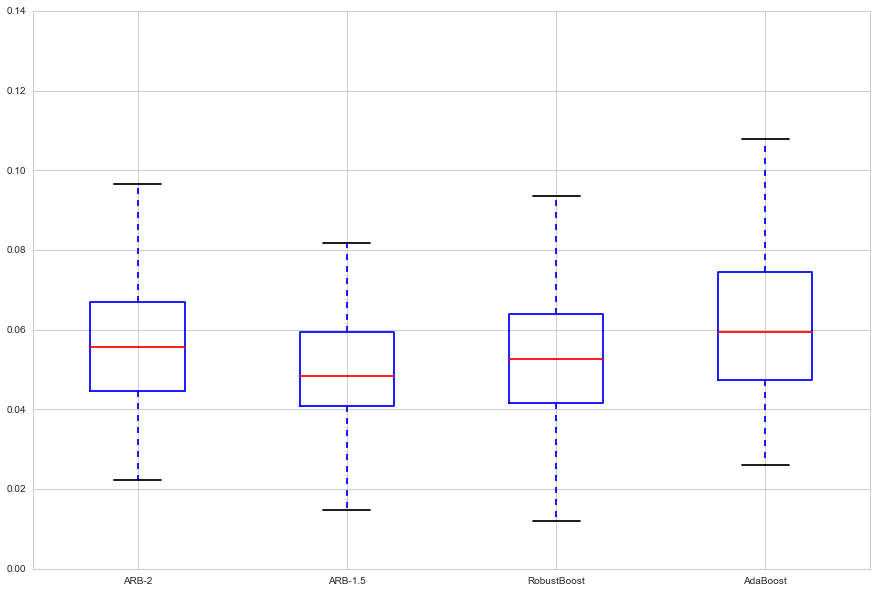}
  \caption{$10\%$.}
  \label{fig:wdbcsub3}
\end{subfigure}%
\begin{subfigure}{.3\textwidth}
  \centering
  \includegraphics[width=1.\linewidth]{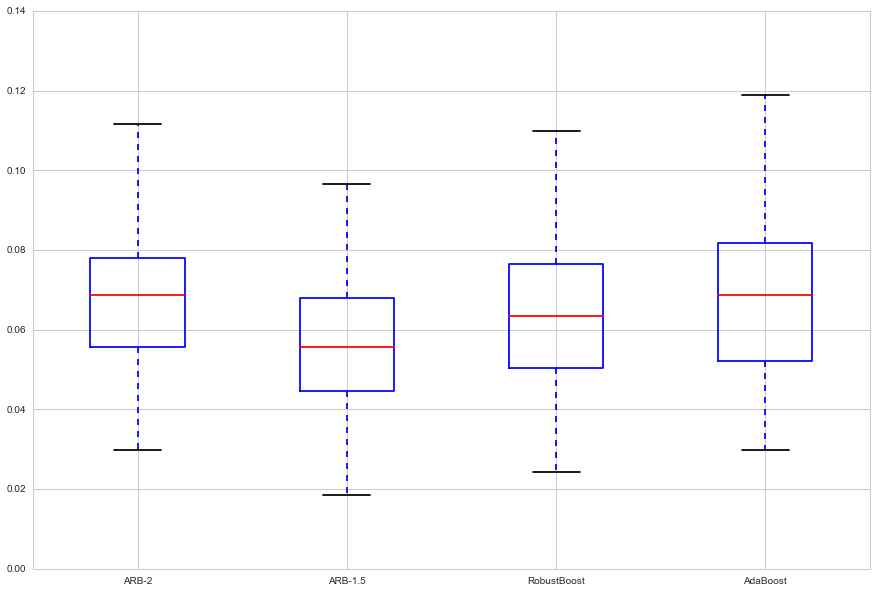}
  \caption{$15\%$.}
  \label{fig:wdbcsub4}
\end{subfigure}
\caption{Comparison of ARB-2, ARB-1.5, RobustBoost and Real AdaBoost on the UCI Wisconsin breast cancer dataset. In each subfigure, from left to right are the box plots for the test errors of ARB-2, ARB-1.5, RobustBoost and Real AdaBoost.}
\label{fig:wdbc}
\end{figure}
We observe that ARB-2 behaves the best on the original data set, and ARB-1.5 outperforms others when there is noise.
Compared to \cite{S14} who obtain the best test error rate of about $4\%$, all of our methods uniformly achieve smaller test error rate, on the clean and  comparable test error rates on the perturbed datasets.

\subsubsection{Sensorless drive diagnosis data set}
We compare ARB-2, ARB-1.5, RobustBoost and Real AdaBoost on the dataset sensorless drive diagnosis, which is also available on the UCI machine learning repository: \url{https://archive.ics.uci.edu/ml/datasets/Dataset+for+Sensorless+Drive+Diagnosis}. 

 \begin{table}[h!]
    \centering
    \footnotesize
    \caption{Comparison of the average test errors and sample deviation of four algorithms on the Sensorless drive diagnosis dataset.}
  \label{tab:real}
    \begin{tabular}{@{}nd{14.1}*{15}{d{14.2}}d{14.7}d{14.2}d{14.2}@{}}
      \toprule
        \multicolumn{1}{@{}N}{Percentage of flipped labels} &
        \multicolumn{3}{N@{}}{Methods} &
        \\
      \cmidrule(lr){2-5}
        &
        \multicolumn{1}{V{6.5em}}{ARB-$2$} &
        \multicolumn{1}{V{6.5em}}{ARB-$1.5$} 
            &
        \multicolumn{1}{V{6.5em}}{Robust Boost} &
        \multicolumn{1}{V{6.5em}}{Ada Boost} 
         \\
      \cmidrule(lr){2-2}\cmidrule(lr){3-3}   \cmidrule(lr){4-4} \cmidrule(lr){5-5} 
         0$\%$ & ${\bf{5.84\%}}(0.43\%)$ &$6.52\%(0.43\%)$  & $7.37\%(0.36\%)$& $7.04\%(0.44\%)$ \\ 
     5$\%$ & $9.68\%(0.58\%)$  & ${\bf{8.79\%}}(0.57\%)$ & $8.96\%(0.51\%)$ & $11.52\%(0.78\%)$\\ 
     10$\%$ & $12.35\%(1.03\%)$  & $10.88\%(0.94\%)$ & ${\bf{10.81\%}}(0.74\%)$ & $14.63\%(0.98\%)$\\ 
     15$\%$ & $14.16\%(0.85\%)$  & $12.50\%(0.76\%)$ & ${\bf{11.95\%}}(0.99\%)$ & $17.88\%(0.81\%)$\\ 
      \bottomrule
    \end{tabular}
  \end{table}
  This dataset contains 58509 instances and each of them has 49 features all extracted from the electric current drive signals. A range of typical defects in drive train applications are considered with  11 different classes present. We combine the data points with label $\le 6$ into one class and the rest into the other class. Then at each time, we randomly choose 14000 points and use 2000 for training, 2000 for cross validation and 10000 for testing. We use the cross validation set to choose the stopping time ($\le 1000$ iterations) and step sizes. According to the noise levels, a certain proportion of the labels of the training data points will be randomly flipped. We summarized the test errors using box plots in Figure \ref{fig:real} and calculated the mean and sample deviation in Table \ref{tab:real}. 
  
   \begin{figure}[h]
\centering
\begin{subfigure}{.3\textwidth}
  \centering
  \includegraphics[width=1.\linewidth]{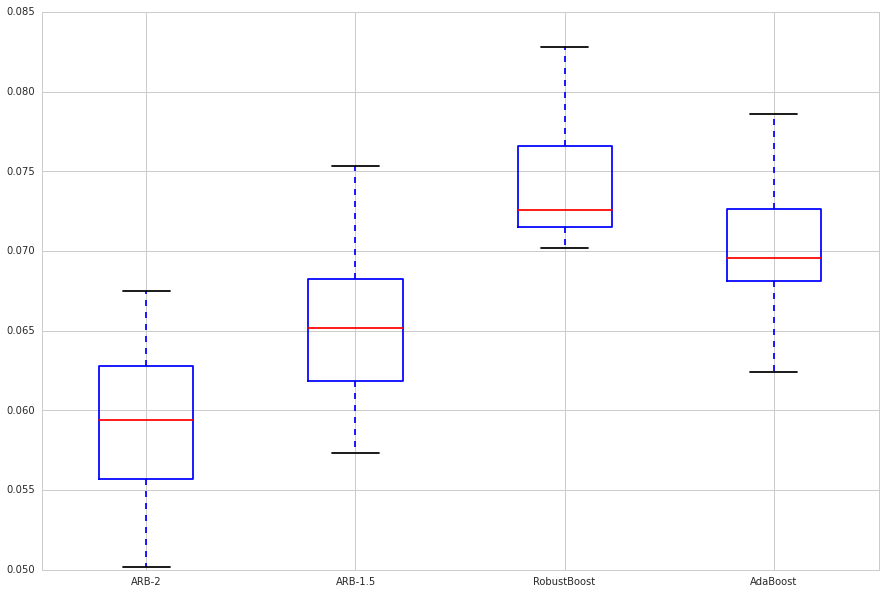}
  \caption{$0\%$.}
  \label{fig:realsub1}
\end{subfigure}%
\begin{subfigure}{.3\textwidth}
  \centering
  \includegraphics[width=1.\linewidth]{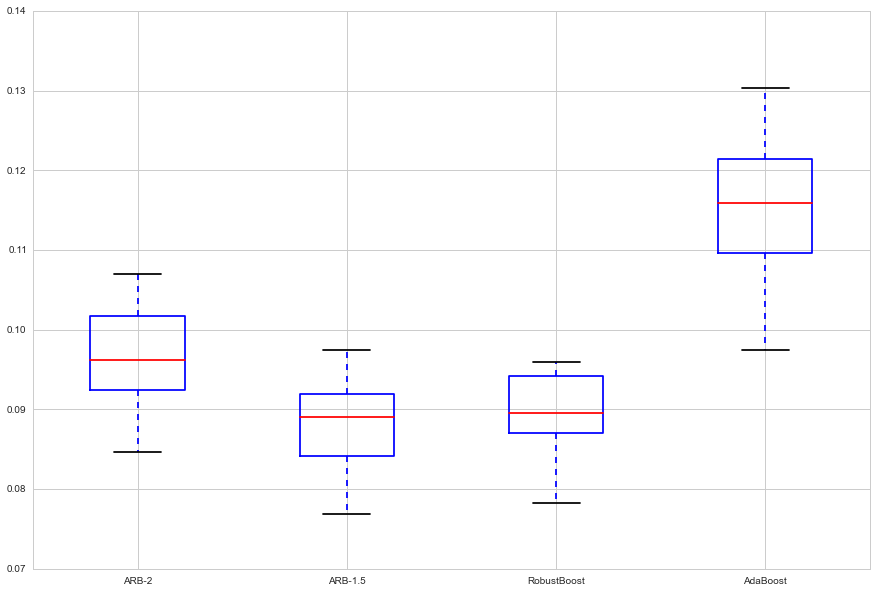}
  \caption{$5\%$.}
  \label{fig:realsub2}
\end{subfigure}

\begin{subfigure}{.3\textwidth}
  \centering
  \includegraphics[width=1.\linewidth]{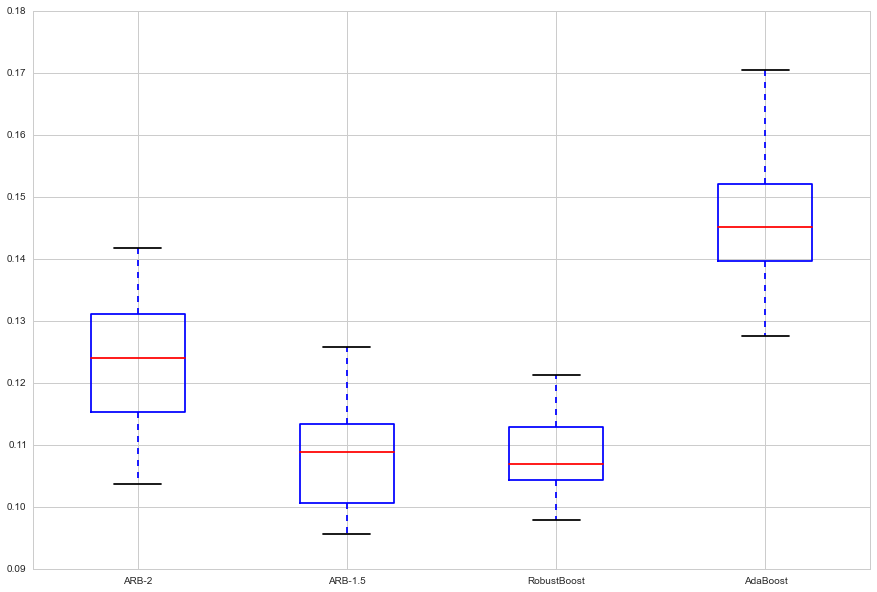}
  \caption{$10\%$.}
  \label{fig:realsub3}
\end{subfigure}%
\begin{subfigure}{.3\textwidth}
  \centering
  \includegraphics[width=1.\linewidth]{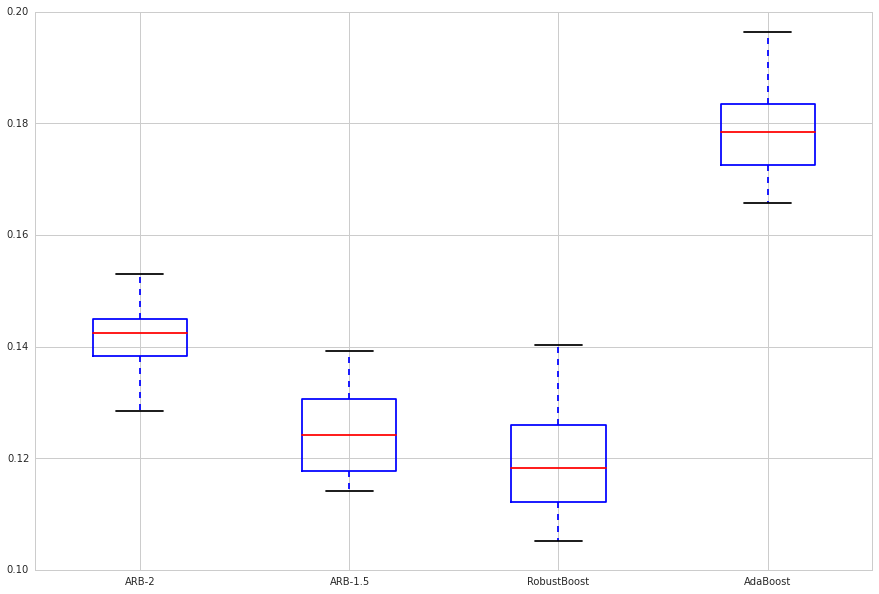}
  \caption{$15\%$.}
  \label{fig:realsub4}
\end{subfigure}

\caption{Comparison of ARB-2, ARB-1.5, RobustBoost and Real AdaBoost on the UCI sensorless drive diagnosis dataset. In each subfigure, from left to right are the box plots for the test errors of ARB-2, ARB-1.5, RobustBoost and AdaBoost.}
\label{fig:real}
\end{figure}

We observed that ARB-2 performs the best on the original data without adding any extra noise. RobustBoost again behaves worse than others on the original dataset. One reason is that it needs more time to terminate when target error is near 0, and the other reason is that it cannot distinguish outliers and hard inliers \citep{Kobe:13}. When we flipped $10\%$ of the labels, ARB-1.5 outperformed others and when we flipped $10\%$ or $15\%$ of the labels, RobustBoost behaved the best. But in all of the three cases with noise, the test errors of ARB-1.5 and RobustBoost are very close. However, ARB-1.5 does not need to fine  tune any target parameters  at each different noise levels.

\subsubsection{MAQC-II Project: human breast cancer (BR) data set}
We next test our Algorithms on a dataset that is part of the 'MicroArray quality control II' project. It is available from the gene expression omnibus database with accession number GSE20194: \url{http://www.ncbi.nlm.nih.gov/geo/query/acc.cgi?acc=GSE20194}. 
 \begin{table}[h!]
    \centering
    \footnotesize
    \caption{Comparison of the average test errors and sample deviation of four algorithms on the GSE20194 gene dataset.}
  \label{tab:GSE20194}
    \begin{tabular}{@{}nd{14.1}*{15}{d{14.2}}d{14.7}d{14.2}d{14.2}@{}}
      \toprule
        \multicolumn{1}{@{}N}{Percentage of flipped labels} &
        \multicolumn{3}{N@{}}{Methods} &
        \\
      \cmidrule(lr){2-5}
        &
        \multicolumn{1}{V{6.5em}}{ARB-$2$} &
        \multicolumn{1}{V{6.5em}}{ARB-$1.5$} 
            &
        \multicolumn{1}{V{6.5em}}{RobustBoost} &
        \multicolumn{1}{V{6.5em}}{AdaBoost} 
         \\
      \cmidrule(lr){2-2}\cmidrule(lr){3-3}   \cmidrule(lr){4-4} \cmidrule(lr){5-5} 
      
         0$\%$ & $9.66\%(1.90\%)$ &${\bf{9.45\%}}(1.93\%)$  & $10.24\%(2.10\%)$& $10.51\%(2.20\%)$ \\ 
     5$\%$ & $11.35\%(2.50\%)$  & ${\bf{11.16\%}}(2.20\%)$ & $11.82\%(2.95\%)$ & $12.40\%(2.89\%)$\\ 
     10$\%$ & $12.92\%(3.43\%)$  & ${\bf{12.17\%}}(3.50\%)$ & $12.22\%(3.04\%)$ & $14.76\%(3.54\%)$\\ 
     15$\%$ & $13.46\%(4.91\%)$  & $13.33\%(5.03\%)$ & ${\bf{13.29\%}}(3.13\%)$ & $15.85\%(6.85\%)$\\ 
      \bottomrule
    \end{tabular}
  \end{table}

 The dataset contains 278 newly diagnosed breast cancer  patients, aged from 26 to 79 years with population spanning all three major races and their mixtures. Patients received 6 months of preoperative   chemotherapy followed by surgical resection of the cancer. Estrogen-receptor status helps guide treatment for breast cancer patients because breast cancer contains many estrogen receptors. Of 278 patients,  164 had positive estrogen-receptor status and 114 have negative estrogen-receptor status. Each sample is described by 22283  biomarker probe-sets.  
 \begin{figure}[h]
\centering
\begin{subfigure}{.3\textwidth}
  \centering
  \includegraphics[width=1.\linewidth]{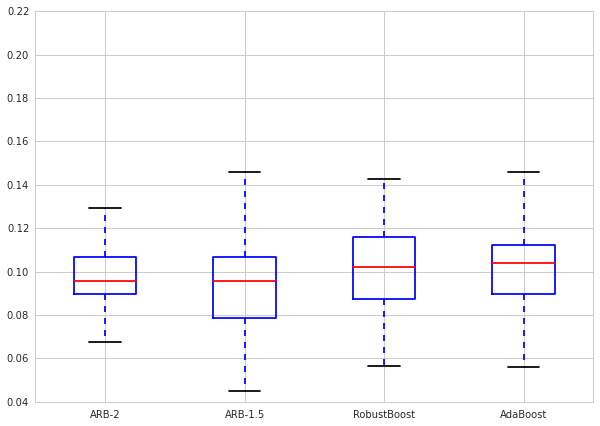}
  \caption{$0\%$.}
  \label{fig:GSEsub1}
\end{subfigure}%
\begin{subfigure}{.3\textwidth}
  \centering
  \includegraphics[width=1.\linewidth]{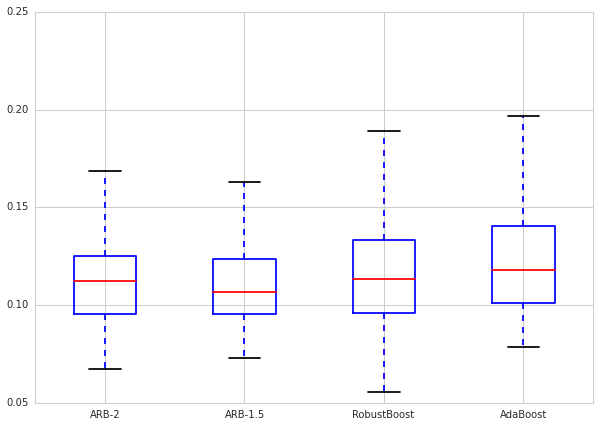}
  \caption{$5\%$.}
  \label{fig:GSEsub2}
\end{subfigure}
\begin{subfigure}{.3\textwidth}
  \centering
  \includegraphics[width=1.\linewidth]{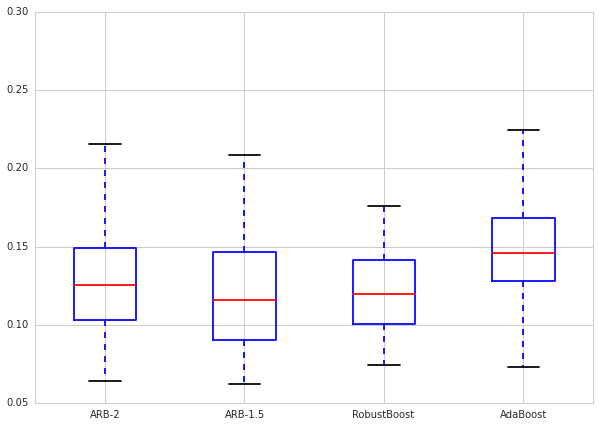}
  \caption{$10\%$.}
  \label{fig:GSEsub3}
\end{subfigure}%
\begin{subfigure}{.3\textwidth}
  \centering
  \includegraphics[width=1.\linewidth]{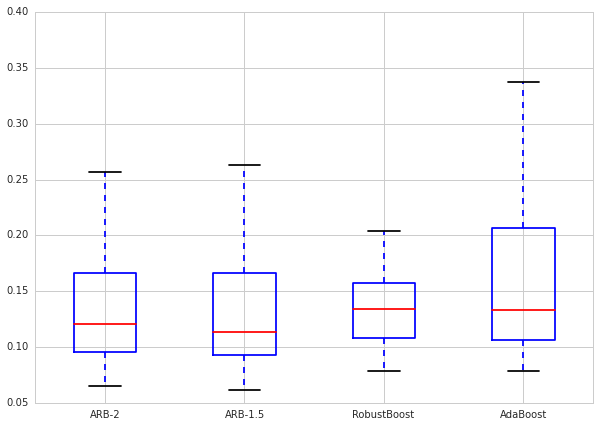}
  \caption{$15\%$.}
  \label{fig:GSEsub4}
\end{subfigure}

\caption{Comparison of ARB-2, ARB-1.5, RobustBoost and Real AdaBoost on the GSE20194 gene dataset, from left to right are the box plots for the test errors of ARB-2, ARB-1.5, RobustBoost and AdaBoost.}
\label{fig:GSE}
\end{figure}

To alleviate computational burden  we  choose 3000 probe-sets with the smallest p-values in the two-sample t-test and standardize each feature. Such simplification is  often considered in high dimensional data (e.g. \cite{ZWWL:14}). We randomly choose 50 samples with positive estrogen receptor status and 50 samples with negative estrogen receptor status for a training set and use the rest for the  testing set. We randomly flip labels of the samples in training set according to the preassigned noise level and repeat the analysis 100 times. Then a five-fold cross-validation is implemented on the training set to select the stopping time ($\le 100$) and step sizes. We summarize the results in Table \ref{tab:GSE20194} and Figure \ref{fig:GSE}. This dataset was previously analyzed in \cite{DM14} and \cite{ZWWL:14} where the best obtained test error was $15\%$ and $9\%$, respectively. However, our methods achieve  error comparable to those even when the labels were perturbed at random. This suggests that  our method is extremely stable even in high-dimensional models.


\subsection{Discussion}
We showed that ArchBoost -$\gamma$ is  a robust alternative to the popular Gradient Boost -type algorithms. 
 The algorithmic part, presented by Theorem1, works for quite
a very general class of    loss functions that  satisfy the Arch-Boost loss properties, presented in Definition 1. The differentiability condition is imposed artificially and we believe can be avoided by considering appropriate sub-differential analysis. However, the Condition (ii) is crucial for the analysis and  we believe that it cannot be relaxed. Moreover, the robustness properties depend crucially on this condition too.  Additionally,  the robustness part of the analysis, summarized in Theorems \ref{thm:breakdown} and  \ref{thm:influence}, works for quite for 
an arbitrary Lipschitz loss function. Hence, it presents novel proof of why is LogitBoost more robust than the AdaBoost, a folklore observation made by many experts in the field.   For example, Theorem \ref{thm:breakdown} is more likely to hold for LogitBoost than the AdaBoost and similarly more likely to hold for ArchBoost than the LogitBoost.

 Note that Arch Boost framework can be easily explored to define an estimate of the conditional probability $\PP (Y=1|x)$. A special case is to plug in the exponential loss function, in which case we will get Real AdaBoost algorithm, and it has conditional probability estimation $\hat P_{AdaBoost}(Y=1|X=x) = 1/(1+e^{-2 F_T(x)})$. In contrast, many of the existing boosting methods, based on the Gradient Boosting ideas, cannot be directly applied for this purpose.  In order to propose an estimator of $\PP (Y=1|x)$, we find and explore a recursive relationship between $F_t$ and $F_{t-1}$. We observe that 
 by rewriting equation \eqref{eq:9a}, we have
\begin{eqnarray*}
\frac{\phi^{'}(-F_{t}(x))}{\phi^{'}(F_{t}(x))}  = \frac{\phi^{'}(-F_{t-1}(x))}{\phi^{'}(F_{t-1}(x))} \left( \frac{\Prob_{w_t} (Y=1|x)}{\Prob_{w_t} (Y=-1|x)} \right).
\end{eqnarray*}
Suppose $F_0 \equiv 0$. By solving these equations recursively, 
 after $T$ iterations we obtain
\begin{eqnarray*}
\frac{\phi^{'}(-F_{T}(x))}{\phi^{'}(F_{T}(x))}  = \prod_{t=1}^T \frac{\Prob_{w_t} (Y=1|x)}{\Prob_{w_t} (Y=-1|x)}.
\end{eqnarray*}
Now define $\tilde\PP (Y=1|x)$ as
\begin{eqnarray} \label{eq:estP}
\tilde{\Prob}(Y=1|x)
= \frac{\prod_{t=1}^T \Prob_{w_t} (Y=1|x)}{\prod_{t=1}^T \Prob_{w_t} (Y=1|x) + \prod_{t=1}^T \Prob_{w_t} (Y=-1|x)} \in [0,1].
\end{eqnarray}
Then, we have the following relation
\begin{eqnarray} \label{eq:10}
\frac{\phi^{'}(-F_{T}(x))}{\phi^{'}(F_{T}(x))} = \frac{\tilde{\Prob}(Y=1|x)}{\tilde{\Prob} (Y=-1|x)}.
\end{eqnarray}
By comparing equation \eqref{eq:5} and \eqref{eq:10}, if $F_T(x) $ is close to  $F^*(x)$, then $\tilde{\Prob}(Y=1|x)$ will be a good approximation of the conditional probability $\PP (Y=1|x)$. Observe that due to a nature of weak classifiers, $\tilde{\Prob}(Y=1|x) $
is guaranteed to be bounded in $[0,1]$ for any differentiable loss function $\phi$. Moreover, non-convex loss functions $\phi$ seems to be better candidates for the class membership probability estimation as well as for the classifier estimation.

 The statistical consistency proof is centered around ``tilted'' loss functions that are  non-convex in particular.
We believe that non-convex losses have great and unexplored potential for robust high dimensional statistics. The framework of ``tilted'' loss functions is very general and can very well be explored for robust variable selection and estimation, through an appropriate penalization scheme.  
Moreover, it is very well known that the impact of outliers is multiplied in case of inferential problems, such are confidence intervals and testing. 
By screening out many large outliers, ``tilted'' losses may significantly improve upon asymptotic efficiency of existing procedures.


\begin{appendices}

\section{Proofs}

\subsection{Derivation of ARB-$\gamma$ algorithms} \label{A:1}
Note that $\phi_{a,\gamma}^{'}(v) = \frac{-a\gamma 2^\gamma e^{av}}{(1+e^{av})^{\gamma+1}}$, and $\frac{\phi_{a,\gamma}^{'}(-v)}{\phi_{a,\gamma}^{'}(v)} = e^{a(\gamma-1)v}$, and when $\gamma > 1$,

\begin{itemize}

\item [(i)]
for any data $(x,y)$, the optimal $F^*(x)$ satisfies
\begin{eqnarray*}
\frac{\phi_{a,\gamma}^{'}(-F^*(x))}{\phi_{a,\gamma}^{'}(F^*(x))} = e^{a(\gamma-1)F^*(x)} = \frac{\Prob(Y=1|x)}{\Prob(Y=-1|x)},
\end{eqnarray*}
that is,
\begin{eqnarray*}
F^*(x) = \frac{1}{a(\gamma-1)} \log \frac{\Prob(Y=1|x)}{\Prob(Y=-1|x)}, \; \gamma>1.
\end{eqnarray*}

\item [(ii)]
After iteration $t$, we will update the weights to be
\begin{eqnarray*}
w_{t+1}(x,y) = -\phi_{a,\gamma}^{'}(yF(x) + yh_t(x))
= \frac{a\gamma 2^\gamma e^{a(yF(x) + yh_t(x))}}{(1+e^{a(yF(x) + yh_t(x))})^{\gamma+1}}.
\end{eqnarray*}

Since the constant $a\gamma 2^\gamma$ will not influence the normalized weights, we can just update the weights to be
\begin{eqnarray} \label{eq:weight}
w_{t+1}(x,y) = \frac{e^{a(yF(x) + yh_t(x))}}{(1+e^{a(yF(x) + yh_t(x))})^{\gamma+1}}.
\end{eqnarray}

\item [(iii)]
At iteration $t$, we will update the hypothesis $F_{t-1}(x)$ to be $F_{t-1}(x) + h_t(x)$ such that
\begin{eqnarray*}
\frac{\phi_{a,\gamma}^{'} (F_{t-1}(x)) \phi_{a,\gamma}^{'}(-F_{t-1}(x)-h_t(x))}{\phi_{a,\gamma}^{'}(-F_{t-1}(x)) \phi_{a,\gamma}^{'} (F_{t-1}(x)+h_t(x))} = e^{a(\gamma-1)h_t(x)} 
= \frac{\Prob_w(Y=1|x)}{\Prob_w(Y=-1|x)},
\end{eqnarray*}
that is,
\begin{eqnarray*}
h_t(x) = \frac{1}{a(\gamma-1)} \log \frac{\Prob_w(Y=1|x)}{\Prob_w(Y=-1|x)}, \; \gamma > 1.
\end{eqnarray*}

\end{itemize}

The algorithms for different $\gamma$ (set $a=1$) is given in Algorithm \ref{alg:3}. Since $\frac{1}{\gamma-1}$ is just a constant, we can simply absorb it into $\alpha_t$ and leave $h_t = \log \frac{\Prob(Y=1|x)}{\Prob(Y=-1|x)}$. So intuitively, the larger $\gamma$ is, the smaller the step size $\alpha_t$ will be. If we use constant step size, then a rule of thumb is to set $\alpha_t = \frac{\alpha}{\gamma-1}$ for ARB-$\gamma$ where $\alpha$ is a tuning parameter for the step size of ARB-2.

\subsection{Proof of Lemma \ref{lemma:2}}
For any $\alpha \in (0,1)$, let $D(v) = \frac{d}{dv} [\alpha \phi(v) + (1-\alpha)\phi(-v)] = \alpha \phi^{'}(v) - (1-\alpha)\phi^{'}(-v)$. Since $\phi$ is convex, $\phi^{'}(v)$ is monotone increasing, and $\phi^{'}(-v)$ is monotone decreasing. So $D(\cdot)$ is monotone increasing.

Then note that there exists $v^*$ such that $\phi^{'}(v^*) \neq \phi^{'}(-v^*)$. Otherwise, $\phi^{'}$ will be both increasing and decreasing, that is, a constant. Contradiction to our assumption. Therefore, $D(v^*)D(-v^*) < 0$, and combined with the monotonicity and continuity of $D(\cdot)$, we know $D(v) = 0$ has one and only one solution, and it is a global minimum of $\phi$. \\

\subsection{Proof of Remark 2}
Note that $\phi_{a,1}(v) = 1-\tanh(\frac{1}{2}v)$ is a sigmoid function. Let $C_{\eta}(v) = \eta \phi(v) + (1-\eta)\phi(-v) = 1+ (1-2\eta)\tanh(\frac{1}{2}v)$, then $\inf_{v\in \realR} C_{\eta}(v) = 2 \min(\eta, 1-\eta) < 1 = \inf_{v:v(2\eta-1)\le 0} C_{\eta}(v)$ for all $\eta \neq \frac{1}{2}$. So condition (iii) is satisfied.

But since $C_{\eta}(v) = 1 + (1-2\eta)\tanh(v)$, we know if $\eta > \frac{1}{2}$, then $C_{\eta}(v)$ is monotone decreasing. And when $\eta < \frac{1}{2}$, $C_{\eta}(v)$ is monotone increasing. When $\eta = \frac{1}{2}$, $C_{\frac{1}{2}}(v) \equiv 1$. Hence, for every $\eta \in [0,1]$, there is no unique global minimum in $\realR$.\\

\subsection{Proof of Lemma \ref{lem:iii}}
Define $f(v) = a\phi(v) + (1-a)\phi(-v)$, then $f^{'}(v) = a\phi^{'}(v) - (1-a)\phi^{'}(v)$ and $f^{''}(v) = a\phi^{''}(v) + (1-a)\phi^{''}(-v)$. For any $a \in (\frac{1}{2},1)$, $f^{'}(v) = 0$ if and only if $g(v) = \frac{1-a}{a} \in (0,1)$. Since $g:(0,\infty) \to (0,1)$ is a bijection, there exists one and only one $v^* \in \mathbb{R}^*$ such that $v^* = g^{-1}(\frac{1-a}{a})$. Note that if $v > v^*$, then $g(v) < g(v^*)$, that is $f^{'}(v) > f^{'}(v^*)$; if $v < v^*$, then similarly we have $f^{'}(v) < f^{'}(v^*)$. So $v^*$ is a minimum for $\phi$.

For $a \in (0,\frac{1}{2})$, since $g(-v) = \frac{1}{g(v)}$, we only need to solve $g(v) = \frac{a}{1-a}$ for $v^*$, then $-v^*$ will be solution for $f^{'}(v) = 0$ for this $a$. And the minimum claim follows similarly as above.

\subsection{Solution of \eqref{eq:14}}
Here, we show the derivation of one possible family of solutions. We do so by employing integrating factor method and adapting it to the nonlinear ordinary differential equation \eqref{eq:14}.
 Since we have $\frac{\phi^{'}(-v)}{\phi^{'}(v)} = e^{(\gamma-1)v}$, we made a reasonable guess that $\phi(v) = f(e^v)$. After plugging this into \eqref{eq:14} and let $x = e^v$, we have
\begin{eqnarray} \label{eq:lem3}
\frac{f^{'}(1/x)}{f^{'}(x)} = x^{\gamma+1}.
\end{eqnarray}
Furthermore, for a non-convex function $\phi$ satisfying Assumption 2, we know $\phi^{'}(v) \to 0$ as $v \to \pm \infty$, thereafter $x f^{'}(x) \to 0$ as $x \to \infty$, that is, $f^{'}(x) = o(\frac{1}{x})$. By rewriting equation \eqref{eq:lem3}, we have
\begin{eqnarray} \label{eq:lem3-2}
\left(\frac{1}{x}\right)^{\frac{\gamma+1}{2}} f^{'}(\frac{1}{x}) = x^{\frac{\gamma+1}{2}} f^{'}(x).
\end{eqnarray}
Let $G(x):= x^{\frac{\gamma+1}{2}}f^{'}(x)$, from \eqref{eq:lem3-2}, we have $G(\frac{1}{x}) = G(x)$ and $\lim_{x \to \infty} G(x) = \lim_{x \to \infty} G(\frac{1}{x}) = \lim_{x\to \infty} \left(\frac{1}{x}\right)^{\frac{\gamma+1}{2}} f^{'}(\frac{1}{x}) = 0$ provided $f^{'}(0) < \infty$ and $f^{'}(x)$ is continuous at $0$. One such choice is 
$$G(x) = \frac{x}{(1+x)^2}.$$
Then by integrating $f^{'}(x)$ and substitution of parameter, we get one solution to \eqref{eq:14} is
$$\phi(v) = \frac{c}{(1+e^v)^{\gamma}},$$
for any positive constant $c$.
The numerator $2^{\gamma}$ in \eqref{eq:15} is just introduced to make the function $\phi$ an upper bound of the $0-1$ loss and $\phi(0) = 1$.

\subsection{Proof of Lemma \ref{lemma:3.3}}
 
(i) and (ii) are easy to verify.

For (iii), given any $\alpha \in (0,1)$, let $D(v) = \frac{\alpha 2^\gamma}{(1+e^{av})^\gamma} + \frac{(1-\alpha)2^\gamma}{(1+e^{-av})^\gamma}$. Let $D^{'}(v) = 0$, we have the only solution $v^*=\frac{1}{a(\gamma-1)} \log \frac{\alpha}{1-\alpha}$. Note that $D^{'}(v) =  (-\gamma a)\left( \frac{\alpha 2^\gamma}{(1+e^{av})^\gamma} \frac{1}{1+e^{-av}}+ \frac{(1-\alpha)2^\gamma}{(1+e^{-av})^\gamma} \frac{-1}{1+e^{av}} \right) = (-\gamma a)\left(\alpha \phi_{a,\gamma}(v) \frac{1}{1+e^{-av}}+(1-\alpha) \phi_{a,\gamma}(-v) \frac{-1}{1+e^{av}} \right) $. Since $\phi_{a,\gamma}$ is decreasing, when $v > v^*$, we have $\left(\alpha \phi_{a,\gamma}(v) \frac{1}{1+e^{-av}}+(1-\alpha) \phi_{a,\gamma}(-v) \frac{-1}{1+e^{av}} \right) < 0$, and hence $D^{'}(v) > 0$. Similarly, when $v < v^*$, $D^{'}(v) < 0$. Therefore, $v^*$ is indeed a global minimum point.

For (iv), let $C_{\eta}(\alpha) = 2^{\gamma}\left( \eta \frac{1}{(1+e^{\alpha})^{\gamma}} + (1-\eta)\frac{1}{(1+e^{-\alpha})^{\gamma}} \right)$. Then by setting $C_{\eta}^{'}(\alpha) = 0$, we get the minimum point $\alpha^* = \frac{1}{\gamma-1} \log \frac{\eta}{1-\eta}$, and $C_{\eta}(\alpha^*) = \eta\left( \frac{2}{(1+(\frac{\eta}{1-\eta})^{\gamma}} \right)^{\gamma} + (1-\eta)\left( \frac{2}{(1+(\frac{1-\eta}{\eta})^{\gamma}} \right)^{\gamma}$ which can be shown to attain the global maximum when $\gamma = \frac{1}{2}$, and $C_{1/2}(\alpha^*) = 1$. We also have $\inf_{\alpha:\alpha(2\eta-1)\le 0} C_{\eta}(\alpha) = C_{\eta}(0) = 1 > C_{\eta}(\alpha^*)$ when $\eta \neq \frac{1}{2}$. By \cite{Bartlett:06}, we have (iv) holds.

\subsection{Proof of Theorem \ref{numerical convergence}}
 
The proof is completed by showing that  at each iteration $t$, as long as  the empirical margin $\hat{\mu}(w_t,h_t)$ is positive,   the empirical risk  decreases by adding the weak hypotheses $h_t$ to the current estimate. Then, we show that the weak hypothesis returned by our Arch Boost algorithm,  always has a positive empirical margin before convergence.
\begin{enumerate} [(i)]
\item
On the sample $\mathcal{S}_n = \{(X_1,Y_1), \cdots, (X_n,Y_n)\}$, at each iteration $t$, denote
\begin{eqnarray*}
{ \mathbf{F}}_{t-1} = \left( F_{t-1}(x_1), \cdots, F_{t-1}(x_n) \right).
\end{eqnarray*}
Recall that the empirical risk $\hat{R}_{\phi,n}(F) = \frac{1}{n} \sum_{i=1}^n \phi(Y_i F(X_i))$ and it can be viewed as a multivariate function of ${ \mathbf{F}} = (F(X_1), \cdots, F(X_n))$. Denote the partial derivative w.r.t. $F(X_i)$ at iteration $t$ as
$$g_t(X_i) = \left[\frac{\partial \hat{R}_{\phi,n}({ \mathbf{F}})}{\partial F(X_i)}\right]_{F(X_i) = F_{t-1}(X_i)}
= \frac{1}{n} Y_i \phi^{'}(Y_i F_{t-1}(X_i)).$$

Then the gradient of $\hat{R}_{\phi,n}$ at ${ \mathbf{F}}_{t-1}$ is
\begin{eqnarray*}
\nabla \hat{R}_{\phi,n}({ \mathbf{F}}_{t-1}) = \frac{1}{n}
\begin{pmatrix}
g_t(X_1) \\
\vdots \\
g_t(X_n)
\end{pmatrix}.
\end{eqnarray*}

At iteration $t$, the weight on $(X_i,Y_i)$ is updated to be
\begin{eqnarray*}
w_t(X_i,Y_i) = -\phi^{'}(Y_iF_{t-1}(X_i)), \; i=1, \cdots, n.
\end{eqnarray*}
Suppose we choose a weak hypothesis $h_t$ with positive empirical margin w.r.t. weights $w_t$, that is, $\hat \mu(h_t, w_t) > 0$, and denote $ {\mathbf h}_t = (h_t(X_1), \cdots, h_t(X_n))$. Note that

\begin{eqnarray*}
\langle -\nabla \hat{R}_{\phi,n}({ \mathbf{F}}_{t-1}),  {\mathbf h}_t \rangle 
&=& \frac{1}{n} \sum_{i=1}^n Y_i h_t(X_i) (-\phi^{'}(Y_iF_{t-1}(X_i))) \\
&=& \frac{1}{n} \sum_{i=1}^n Y_i h_t(X_i) w_t(X_i,Y_i) \\
&=& \hat{\mu}(h_t,w_t) > 0,
\end{eqnarray*}
where $\langle \cdot, \cdot \rangle$ is the standard inner product in $\mathbb{R}^n$. Therefore, we know
\begin{eqnarray*}
\langle -\nabla \hat{R}_{\phi,n}({ \mathbf{F}}_{t-1}),  {\mathbf h}_t \rangle >0 \Longleftrightarrow \hat\mu(w_t,h_t) > 0.
\end{eqnarray*}

But if $\langle -\nabla \hat{R}_{\phi,n}({ \mathbf{F}}_{t-1}),  {\mathbf h}_t \rangle >0$, then
$\begin{pmatrix}
h_t(X_1) \\
\vdots \\
h_t(X_n)
\end{pmatrix}$
is a descending direction of $\hat R_{\phi,n}({ \mathbf{F}})$ at ${ \mathbf{F}}_{t-1}$, therefore
\begin{eqnarray*}
\hat R_{\phi,n} \left[
\begin{pmatrix}
F_{t}(X_1) \\
\vdots \\
F_{t}(X_n)
\end{pmatrix}
\right]
=
\hat R_{\phi,n} \left[
\begin{pmatrix}
F_{t-1}(X_1) \\
\vdots \\
F_{t-1}(X_n)
\end{pmatrix}
+ \alpha_t
\begin{pmatrix}
h_t(X_1) \\
\vdots \\
h_t(X_n)
\end{pmatrix}
\right]
<
\hat R_{\phi,n} \left[
\begin{pmatrix}
F_{t-1}(X_1) \\
\vdots \\
F_{t-1}(X_n)
\end{pmatrix}
\right]
\end{eqnarray*}
with an appropriate step size $\alpha_t$ which can be found by line search
$$ \alpha_t = \argmin_{\alpha}
\hat R_{\phi,n} \left[
\begin{pmatrix}
F_{t-1}(X_1) \\
\vdots \\
F_{t-1}(X_n)
\end{pmatrix}
+ \alpha
\begin{pmatrix}
h_t(X_1) \\
\vdots \\
h_t(X_n)
\end{pmatrix}
\right].
$$

In summary, we have
\begin{eqnarray} \label{eq:emp}
\hat R_{\phi,n}(F_t) < \hat R_{\phi,n}(F_{t-1})
\end{eqnarray}
if at step $t$, we choose a base learner $h_t$ such that $\hat \mu(w_t,h_t) > 0$ and choose a suitable step size $\alpha_t$ either by line search or set to be appropriately small. Therefore, $\hat{R}_{\phi,n}$ will converge in $\mathbb{R}$.

\item
In any region $R_t^j$, we know $h_t \equiv \gamma_t^j$. Then $\begin{pmatrix}
-g_t(X_1) \\
\vdots \\
-g_t(X_N)
\end{pmatrix}
\begin{pmatrix}
h_t(X_1) \\
\vdots \\
h_t(X_N)
\end{pmatrix}
$ is equal to 
\begin{eqnarray*}
  && \sum_{j=1}^{J_t} \sum_{i\in R_t^j} Y_i w_t(X_i,Y_i) \gamma_t^j \\
&=& \sum_{j=1}^{J_t} \gamma_t^j  \left( \mathbb{P}_{w_t}(Y=1|X\in R_t^j) - \mathbb{P}_{w_t}(Y=-1|X\in R_t^j)\right)\sum_{i\in R_t^j} w_t(X_i,Y_i) \\
&=& \sum_{j=1}^{J_t} \theta(\mathbb{P}_{w_t}(Y=1|X\in R_t^j))  \left( 2 \mathbb{P}_{w_t}(Y=1|X\in R_t^j) - 1 \right)\sum_{i\in R_t^j} w_t(X_i,Y_i) \\
&\stackrel{(i)}{\ge}& 0.
\end{eqnarray*}
The last inequality $(i)$ is because $\theta(\mathbb{P}_{w_t}(Y=1|X\in R_t^j))$ is strictly increasing and has the only root at $\frac{1}{2}$, and hence always has the same sign as $2 \mathbb{P}_{w_t}(Y=1|X\in R_t^j) - 1$,    and ``='' holds if and only if
$\mathbb{P}_{w_t}(Y=1|X\in R_t^j) = \frac{1}{2}$ for all $j = 1, \cdots, J_t$.

\item
From \eqref{eq:9a}, we have
\begin{eqnarray*}
\frac{\phi^{'}(-F_t(x))}{\phi^{'}(F_t(x))} = \frac{\Prob_{w_t}(Y=1|x)}{\Prob_{w_t}(Y=-1|x)} \frac{\phi^{'}(-F_{t-1}(x))}{\phi^{'}(F_{t-1}(x))}.
\end{eqnarray*}
If $\Prob_{w_t}(Y=1|x) > \Prob_{w_t}(Y=-1|x)$, then $\frac{\phi^{'}(-F_t(x))}{\phi^{'}(F_t(x))} > \frac{\phi^{'}(-F_{t-1}(x))}{\phi^{'}(F_{t-1}(x))}$. By Lemma \ref{lem:iii}, $F_t(x) > F_{t-1}(x)$, that is, $h_t(x) > 0$.

\item
Here, we develop   ideas much similar to the proof of Lemma 4.1 and Lemma 4.2 in \cite{ZY05}. There are two differences  here in comparison to \cite{ZY05}. First, the loss is  non-convex  function  and second,  the optimal hypothesis is chosen differently. For $f_1, f_2 \in \cup_{T=1}^{\infty} \mathcal{F}^T$, let $H_f \subset \mathcal{H}$ be the set that contains all weak hypotheses in $f_1$ and $f_2$. For example, $f_1 = \sum_{h \in H_f} \alpha_1^{h} h$ and $f_2 = \sum_{h \in H_f} \alpha_2^{h} h$. Then denote 
$$ ||f_1 - f_2 ||_2^2 := \sum_{h \in H_f} ( \alpha_1^{(h)} - \alpha_2^{(h)})^2 \le \frac{1}{|H_f|} \left( \sum_{h \in H_f} | \alpha_1^{(h)} - \alpha_2^{(h)}| \right)^2 =: \frac{1}{|H_f|} ||f_1-f_2||_1^2.$$ Now let $\bar{f}_t$ be any reference function in $\cup_{T=1}^{\infty} \mathcal{F}^T$ satisfying 
$$\hat{R}_{\phi,n}(\bar{f}_t) < \inf_{f \in \cup_{T=1}^{\infty} \mathcal{F}^T} \hat{R}_{\phi,n} (f) + \frac{1}{t}$$ and $F_t$ be the classifier returned by Arch Boost at step $t$. Moreover, denote 
$$\bar{f}_t = \sum_{h \in H_t} \omega_t^{h} h, \qquad F_t = \sum_{h \in H_t} \alpha_t^{h} h.$$
 For notation simplicity, we denote $R = \hat{R}_{\phi,n}$ since we have fixed a loss function $\phi$ and sample size $n$. Let $s^h = sign(\omega_t^h - \alpha_t^h)$. By Taylor expansion, we have
$$
R(F_t + \alpha_{t+1} s^h h) \le R(F_t) + \alpha_{t+1} s^h \langle \nabla R(F_t), h \rangle + \frac{\alpha_{t+1}^2}{2} \sup_{\xi \in [0,1]} R^{''}_{F_t, h}(\xi \alpha_{t+1} s^h),
$$
where $R_{F_t,h}(\alpha) := R(F_t + \alpha h)$. Since $\theta$ is bounded, from part (ii) we know there exists $M > 0$ s.t. $ \sup_{\xi \in [0,1]} R^{''}_{F_t, h}(\xi \alpha_{t+1} s^h) < M$ if we choose $h$ by \eqref{eq:9a}. Therefore,
$$
R(F_t + \alpha_{t+1} s^h h) \le R(F_t) + \alpha_{t+1} s^h \langle \nabla R(F_t), h \rangle + \frac{\alpha_{t+1}^2}{2} M.
$$

By Algorithm \ref{boosting} we know that $R (F_{t+1})= R(F_t + \alpha_{t+1} h_{t+1})$. Moreover, by \eqref{eq:9a}, $h_{t+1}$ is chosen as the $\arg \min_{h \in \mathcal{H}_t} \EE_w \left[R(F_{t} + \alpha_{t+1} h)\right]$. Hence,  for any $h \in \mathcal{H}_t$, $\EE_w \left[R(F_{t} + \alpha_{t+1} h_{t+1})\right] \leq \EE_w \left[R(F_{t} + \alpha_{t+1} h)\right]$. 
Moreover,  for any bounded random variable $Z$, $\left| \EE_w[Z] - \EE [Z]\right| \leq K$ for a positive constant $K$.
Combining the above, we have  \[
R (F_{t+1})  \leq  R(F_{t} + \alpha_{t+1} s^h h) + 2 \epsilon_t + 2K,
\]
for 
\[
\epsilon_t =\sup_{h \in \mathcal{H}_t} \biggl|R(F_{t} + \alpha_{t+1} s^h h) - \EE \left[ R(F_{t} + \alpha_{t+1}s^h h)\right] \biggl|.
\]
By the arguments very much similar to Lemmas \ref{lem:6} and \ref{lem:7}, it  easy to obtain $\epsilon_t = o_P(1)$.

%

 Since $||\bar{f}_t - F_t||_1 = o(\log t)$, and $||\bar{f}_t - F_T||_2^2 \le \frac{||\bar{f}_t - F_t||_1^2}{t^{c_t}}$ where $c_t \in (0,1)$ and $c_t \to 0$ as $t \to \infty$, we have $\frac{||\bar{f}_t - F_t||_1^2}{t^{c_t}} = o(\frac{\log t}{t^{c_t}}||\bar{f}_t - F_t||_1)$. Hence,

\begin{eqnarray} \label{eq:W}
&&||\bar{f}_t - F_t||_2^2 (R(F_{t+1}) - 2\epsilon_t - 2K) \nonumber \\
&=& o\left[\frac{\log t}{t^{c_t}}\sum_{h \in H_t} |\alpha_t^h - \omega_t^h| R(F_t + \alpha_{t+1}s^h h)\right] \nonumber \\
&=& o\left[ \frac{\log t}{t^{c_t}} \sum_{h \in H_t} |\alpha_t^h - \omega_t^h| \left( R(F_t) +  \alpha_{t+1}s^h \langle \nabla R(F_t), h \rangle + \frac{\alpha_{t+1}^2}{2} M \right) \right] \nonumber \\
&=& o\left[ \frac{\log t}{t^{c_t}} ||\bar{f}_t - F_t||_1 R(F_t) + \frac{\alpha_{t+1}\log t}{t^{c_t}} \langle \nabla R(F_t), \bar{f}_t - F_t \rangle + \frac{M\alpha_{t+1}^2 \log t}{2 t^{c_t}} ||\bar{f}_t - F_t||_1 \right]
\end{eqnarray}

Now we look at the situation when $\hat{\mu}(h_k,w_k) = 0$. From part (ii), we know this happens if and only if $\PP_{w_k}(Y=1| X \in R_k^j) = \frac{1}{2}$ in every region $j$. In another word, $\nabla R(F_k) \perp \mathcal{H}$. Now since $\hat{\mu}(h_t, w_t) \to 0$, $\nabla R(F_t)$ is more and more perpendicular to $\mathcal{H}$ and hence perpendicular to $\cup_{T=1}^{\infty} \mathcal{F}^T$, and $\langle \nabla R(F_t) - \nabla R(\bar{f}_t), \bar{f}_t - F_t \rangle \to 0$ since $\bar{f}_t - F_t \in \cup_{T=1}^{\infty} \mathcal{F}^T$.

Since $\phi$ is Lipschitz differentiable, we know there exists $L > 0$ ($L$ is the Lipschitz constant s.t. $||\nabla R(f_1) - \nabla R(f_2)||_2 \le L ||f_1 - f_2 ||_2$ for all $f_1$ and $f_2$) s.t.
\begin{eqnarray*}
R(F_t) - R(\bar{f}_t) \le \langle \nabla R(\bar{f}_t), F_t - \bar{f}_t \rangle + \frac{L}{2} || \bar{f}_t - F_t ||_2^2.
\end{eqnarray*}
Then $\langle \nabla R(\bar{f}_t), \bar{f}_t - F_t \rangle \le R(\bar{f}_t) - R(F_t) + \frac{L}{2} || \bar{f}_t - F_t ||_2^2$. When $t$ is large enough, we know there exists sequence $\tilde \epsilon_t \to 0$ s.t. 
$$\langle \nabla R(F_t), \bar{f}_t - F_t \rangle \le R(\bar{f}_t) - R(F_t) + \frac{L}{2} || \bar{f}_t - F_t ||_2^2 + \tilde \epsilon_t.$$ Then by \eqref{eq:W},
\begin{eqnarray} \label{eq:eta}
&&||\bar{f}_t - F_t||_2^2 (R(F_{t+1}) - 2\epsilon_t -2K) \nonumber \\
&=& o\left[\frac{\log t}{t^{c_t}} ||\bar{f}_t - F_t||_1 R(F_t) + \frac{\alpha_{t+1}\log t}{t^{c_t}} \langle \nabla R(F_t), \bar{f}_t - F_t \rangle + \frac{\alpha_{t+1}^2 \log t}{2 t^{c_t}} ||\bar{f}_t - F_t||_1 M \right] \nonumber \\
&=& o\left[\frac{\log t}{t^{c_t}} ||\bar{f}_t - F_t||_1 R(F_t) + \frac{\alpha_{t+1}\log t}{t^{c_t}} \left( R(\bar{f}_t) - R(F_t) + \frac{L}{2} || \bar{f}_t - F_t ||_2^2 + \tilde \epsilon_t \right) + \frac{\alpha_{t+1}^2 \log t}{2 t^{c_t}} ||\bar{f}_t - F_t||_1 M \right]\nonumber \\
&=& o\left[ \frac{\log t}{t^{c_t}} ||\bar{f}_t - F_t||_1 R(F_t) + \frac{\alpha_{t+1}\log t}{t^{c_t}} \left( R(\bar{f}_t) - R(F_t) \right) + \eta_t \right],
\end{eqnarray}
where $\eta_t :=  \frac{\alpha_{t+1}\log t}{t^{c_t}} \left( \frac{L}{2} || \bar{f}_t - F_t ||_2^2 + \tilde \epsilon_t \right) + \frac{\alpha_{t+1}^2\log t}{2 t^{c_t}} ||\bar{f}_t - F_t||_1 M$.

Then by dividing $||\bar{f}_t - F_t||_2^2$ on both sides of \eqref{eq:eta}, we get
\begin{eqnarray*}
R(F_{t+1}) &=& o\left[ \frac{\log t}{t^{c_t}} \frac{||\bar{f}_t - F_t||_1}{||\bar{f}_t - F_t||_2^2} R(F_t) + \frac{\alpha_{t+1}\log t}{t^{c_t} ||\bar{f}_t - F_t||_2^2} \left( R(\bar{f}_t) - R(F_t) \right) + \bar{\eta}_t +2 \epsilon_t +2K \right] \\
&=& o\left[ \frac{\log t}{t^{c_t/2}||\bar{f}_t - F_t||_2} R(F_t) + \frac{\alpha_{t+1}\log t}{t^{c_t} ||\bar{f}_t - F_t||_2^2} \left( R(\bar{f}_t) - R(F_t) \right) + \bar{\eta}_t + 2\epsilon_t + 2K \right],
\end{eqnarray*}
where $\bar{\eta}_t :=  \frac{\alpha_{t+1} \log t}{t^{c^t}} \left( \frac{L}{2} + \frac{\tilde \epsilon_t}{|| \bar{f}_t - F_t ||_2^2} \right) + \frac{\alpha_{t+1}^2 \log t}{2t^{c_t/2}||\bar{f}_t - F_t||_2} M$.

Therefore,
\begin{eqnarray*}
R(F_{t+1}) - R(\bar{f}_t) &=& o\left[ \frac{\log t}{t^{c_t}||\bar{f}_t - F_t||_2} R(F_t) + \frac{\alpha_{t+1}\log t}{t^{c_t} ||\bar{f}_t - F_t||_2^2} \left( R(\bar{f}_t) - R(F_t) \right) + \bar{\eta}_t +2 \epsilon_t + 2K \right] \\
&\le& \frac{\xi_t \log t}{t^{c_t}||\bar{f}_t - F_t||_2} R(F_t) + \frac{\alpha_{t+1}\xi_t\log t}{t^{c_t} ||\bar{f}_t - F_t||_2^2} \left( R(\bar{f}_t) - R(F_t) \right) + \xi_t\bar{\eta}_t +2 \xi_t\epsilon_t + 2 K\xi_t,
\end{eqnarray*}
for some sequence $\xi_t \to 0$ as $t \to \infty$.
Now if we assume $c_t \to 0$ slowly enough, then by choosing $\alpha_t$ s.t. $\sum_{t=1}^{\infty} \alpha_t = \infty$, $\sum_{t=1}^{\infty} \alpha_t^2 < \infty$ and $\sum_{t=1}^{\infty} \frac{\alpha_{t+1}\xi_t \log t }{ t^{c_t}} < \infty$, and by Lemma 4.2 in \cite{ZY05}, we have $R(F_{t+1}) - R(\bar{f}_t) \to 0$ as $t \to \infty$, and hence $R(F_t) \to R^*_{\phi,n}$ as $t \to \infty$. 
\end{enumerate}

\subsection{Proof of Theorem \ref{thm:breakdown}}

The proof of Theorem consists of a careful decomposition of the inner product between the gradient vector and the weak hypothesis obtained on the complete data. The decomposition is done in such a manner that one of the factors is the inner product between the gradient computed on the noise-free data and the weak hypothesis. 
Then, the proof is completed by showing that the signs of the two inner products above match.
 
On the original dataset $\mathcal{S}$, suppose after some iteration we obtain a weak hypothesis $h$ such that $h(x) = \theta(\Prob_{w}(Y=1|x))$ where $\theta$ is defined in Theorem \ref{numerical convergence}. In the rest of the proof we will exploit the decomposition  proved in Theorem \ref{numerical convergence},
\begin{eqnarray*}
- \langle  {\mathbf {g}_o} ,  {\mathbf h} \rangle &=& \sum_{j=1}^{J} \theta(\mathbb{P}_{w}(Y=1|X\in R^j))  \left( 2 \mathbb{P}_{w}(Y=1|X\in R^j) - 1 \right)\sum_{i\in R^j} w(x_i,y_i) \\
&=& \sum_{j=1}^{J} \theta(p_j)  \left( 2 p_j - 1 \right)\sum_{i\in R^j} w(x_i,y_i)
\end{eqnarray*}
where $R^j, j=1, \cdots, J$ are the regions corresponding to terminal nodes, and as long as $p_j = \mathbb{P}_{w}(Y=1|X\in R^j)$ is not $\frac{1}{2}$ in some region, $- \langle  {\mathbf {g}} ,  {\mathbf h} \rangle > 0$, that is, $ {\mathbf h}$ points to the descending direction of the empirical risk.

For $ {\mathbf {g}_o}$, we get $- \langle  {\mathbf {g}_o} ,  {\mathbf h} \rangle= \sum_{j=1}^{J} \theta(p_j)  \left( 2 \tilde{p_j} - 1 \right)\sum_{i\in R^j\setminus \mathcal{O}} w(x_i,y_i)$ with a different conditional probability estimation $\tilde{p_j}$. Recall that
\begin{eqnarray*}
2p_j-1 = \Prob_{w}(Y=1|X\in R^j) - \Prob_{w}(Y=-1|X\in R^j) = \frac{\sum_{i \in R^j} y_i w(x_i,y_i)}{\sum_{i \in R^j} w(x_i,y_i)},
\end{eqnarray*}
\begin{eqnarray*}
2\tilde{p_j}-1 = \frac{\sum_{i \in R^j\setminus \mathcal{O}} y_i w(x_i,y_i)}{\sum_{i \in R^j \setminus \mathcal{O}} w(x_i,y_i)}.
\end{eqnarray*}
Then we have that $\left( 2 \tilde{p_j} - 1 \right)\sum_{i\in R^j\setminus \mathcal{O}} w(x_i,y_i)
=  \sum_{i \in R^j\setminus \mathcal{O}} y_i w(x_i,y_i) = (2p_j-1)\sum_{i\in R^j} w(x_i,y_i) - \sum_{i\in \mathcal{O} \cap R^j} y_i w(x_i,y_i)$. Therefore,
\begin{eqnarray*}\label{rob:35}
- \langle  {\mathbf {g}_o} ,  {\mathbf h} \rangle &=&- \langle  {\mathbf {g}} ,  {\mathbf h} \rangle - \sum_{j=1}^J \theta(p_j) \sum_{i\in \mathcal{O} \cap R^j} y_i w(x_i,y_i) \nonumber \\
&=& \sum_{j=1}^J \theta(p_j) \left[ (2p_j-1) \sum_{i\in R^j} w(x_i,y_i) - \sum_{i \in \mathcal{O} \cap R^j} y_i w(x_i,y_i) \right].
\end{eqnarray*}
From previous equation, a sufficient condition for $- \langle  {\mathbf {g}_o} ,  {\mathbf h} \rangle \ge 0$ is that each summand reminds non-negative.
Since $\theta(p_j) \ge 0$ if and only if $p_j \ge \frac{1}{2}$,  the sufficient condition becomes equivalent to
$$
(p_j - \frac{1}{2}) \left[ (2p_j-1) \sum_{i\in R^j} w(x_i,y_i) - \sum_{i \in \mathcal{O} \cap R^j} y_i w(x_i,y_i) \right] \ge 0.
$$
Furthermore,  this inequality can be reformulated as
\begin{eqnarray*} \label{rob:38}
2(p_j-\frac{1}{2})^2 \sum_{i \in R^j \setminus \mathcal{O}} w(x_i,y_i) \ge (p_j-\frac{1}{2}) \sum_{i \in \mathcal{O}\cap R^j} (1+y_i -2p_j) w(x_i,y_i).
\end{eqnarray*}
When $p_j \neq \frac{1}{2}$, a sufficient condition for the inequality above  is
\begin{eqnarray*} \label{rob:39}
2|p_j-\frac{1}{2}| \sum_{i \in R^j \setminus \mathcal{O}} w(x_i,y_i) &\ge&  \sum_{i \in \mathcal{O}\cap R^j}  \max_{y_i = \pm 1} \left[ sign(p_j-\frac{1}{2})(1+y_i -2p_j) \right] w(x_i,y_i) \nonumber \\
&=& \sum_{i \in \mathcal{O}\cap R^j} 2\min (p_j,1-p_j) w(x_i,y_i).
\end{eqnarray*}

\subsection{Proof of Lemma \ref{lemma:hessian}}
 
Since $\phi$ is an Arch boosting loss function, we know for any $p \in (0,1)$, $p \phi(v) + (1-p)\phi(-v)$ has only one critical point $v^*$ that is the global minimum. Hence, $p_x \phi^{''}(F^*(x)) + (1-p_x) \phi^{''}(-F^*(x)) \ge 0$ where $p_x = \PP(Y=1|X=x)$. Note that here we treat $Yf(X)$ as the input of $\phi$, and by the chain rule, we know $p_x \phi^{''}(1,F^*(X)) + (1-p_x) \phi^{''}(-1,F^*(X)) \ge 0$, where the derivative now is w.r.t. the second argument. Then $\E_{\PP}\left[ \phi^{''}(Y,F^*(X))q^2(X) | X = x\right] = q^2(x)[\PP(Y=1|X=x)\phi^{''}(1,F^*(x)) + \PP(Y=-1|X=x)\phi^{''}(-1,F^*(x))] \ge 0$ since $\PP(Y=1|X=x)\phi^{''}(1,F^*(x)) + \PP(Y=-1|X=x)\phi^{''}(-1,F^*(x)) \ge 0$ for all $\PP(Y=1|X=x) \in (0,1)$ and is also nonnegative at the end points $\{0,1\}$ because of the continuity of $\phi^{''}$.

Now if $\PP$ and $\mathcal{X}$ satisfy that $\PP(Y=1|X=x) = p_x \in [\delta,1-\delta]$ for all $x \in \mathcal{X}$ for some $\delta \in (0,\frac{1}{2})$, and for all $p \in [\delta,\1-\delta]$, $p\phi^{''}(v^*) + (1-p)\phi^{''}(-v^*) > 0$ , that is, it is locally convex near $v^*$, then by the continuity of $\phi^{''}$, for each $p\in [\delta,1-\delta]$, there exists $r_p > 0$ such that $p\phi^{''}(v) + (1-p)\phi^{''}(-v) \ge 0$ for all $v \in \mathbb{R}$ such that $|v - v^*| < r_p$. But since $[\delta,1-\delta]$ is compact, there exists $r > 0$ such that $p\phi^{''}(v) + (1-p)\phi^{''}(-v) \ge 0$ for all $|v - v^*| < r$ and for all $p \in [\delta,1-\delta]$.

Now for any $x \in \mathcal{X}$, we know the corresponding $p_x \in [\delta,1-\delta]$, and for this $p_x$, we have $p_x\phi^{''}(G(x)) + (1-p_x)\phi^{''}(-G(x)) \ge 0$ for all measurable function $G$ with $|G(x) - F^*(x)| < r$. Therefore, if we take measurable function $G$ s.t. $||G - F^*||_{\infty} < r$, then $p_x\phi^{''}(G(x)) + (1-p_x)\phi^{''}(-G(x)) \ge 0$ for all $x \in \mathcal{X}$, that is, $\E_{\PP} \left[ \phi^{''}(Y,G(X))q^2(X)|X=x \right] \ge 0$ for all $x \in \mathcal{X}$. Taking expectation w.r.t. $X$, we get $\E_{\PP} \left[ \phi^{''}(Y,G(X))q^2(X)\right] \ge 0$.

\subsection{Proof of Theorem \ref{thm:influence}}
 
By Lemma \ref{lemma:hessian}, there exists $r > 0$ such that if $||f_{\PP,\lambda} - F^*|| < r$, then $\E_{\Prob} \phi^{''}(Y,f_{\PP,\lambda}(X))q^2(X) \ge 0$ for any measurable function $q$. Therefore one can show that the influence function still exists in form of \eqref{eq:17} by the Theorem 4 in \cite{Christmann:04} since in that Theorem, the only place one needs convexity is to show $\E_{\PP} \phi^{''}(Y,f_{\PP,\lambda}(X))q^2(X) \ge 0$.
Rewriting \eqref{eq:17}, and let $IF(z;T,\Prob) = g_z \in H$, we have
\begin{eqnarray*}
2\lambda g_z + \E_{\Prob} \phi^{''}(Y,f_{\PP,\lambda}(X)) g_z(X) \Psi(X) = \E_{\Prob} \phi^{'}(Y,f_{\PP,\lambda}(X)) \Psi(X) - \phi^{'}(z_y,f_{\PP,\lambda}(z_x)) \Psi(z_x).
\end{eqnarray*}
By taking inner product $\langle \cdot, \cdot \rangle_H$ with $g_z$ itself, we have
\begin{eqnarray} \label{eq:A29}
2\lambda ||g_z||_H^2 + \E_{\Prob} \phi^{''}(Y,f_{\PP,\lambda}(X)) g_z^2(X) = \E_{\Prob} \phi^{'}(Y,f_{\PP,\lambda}(X)) g_z(X) - \phi^{'}(z_y,f_{\PP,\lambda}(z_x)) g_z(z_x).
\end{eqnarray}
Assume $\lambda ||f||^2_{H} + R_{\phi}(f)$ has only one global minimum $f_{\PP,\lambda}$ in $H$, then at the minimum, we know the Frechet derivative at $f_{\PP,\lambda}$ is a zero mapping, that is,
\begin{eqnarray*}
2\lambda \langle f_{\PP,\lambda}, \cdot \rangle_{H} + \E_{\PP} \phi^{'}(Y,f_{\PP,\lambda}(X)) \Psi(X) = \b{0},
\end{eqnarray*}
where $\b{0}: H \to \mathbb{R}$ is the zero functional. And hence
\begin{eqnarray} \label{eq:g}
2\lambda \langle f_{\PP,\lambda}, g_z \rangle_{H} + \E_{\PP} \phi^{'}(Y,f_{\PP,\lambda}(X)) g_z(X) = 0.
\end{eqnarray}
We also note that since $f_{\PP,\lambda}$ is the global minimum, then $\lambda ||f_{\PP,\lambda}||_H^2 + R_{\phi}(f_{\PP,\lambda}) \le \lambda ||0_H||_H^2 + R_{\phi}(0_H) = C_{\phi}$ where $C_{\phi} = R_{\phi}(0_H)= \phi(0,0)$ is a constant, that is,
\begin{eqnarray} \label{eq:fC}
\lambda ||f_{\PP,\lambda}||_{H}^2 \le \lambda ||f_{\PP,\lambda}||_{H}^2 + \E_{\PP} \phi(Y,f_{\PP,\lambda}(X)) \le C_{\phi}.
\end{eqnarray}

Finally, we have
\begin{eqnarray*}
2\lambda ||g_z||_H^2 &\le& 2\lambda ||g_z||_H^2 + \E_{\Prob} \phi^{''}(Y,f_{\PP,\lambda}(X)) g_z^2(X) \\
&\stackrel{(i)}{=}& \E_{\Prob} \phi^{'}(Y,f_{\PP,\lambda}(X)) g_z(X) - \phi^{'}(z_y,f_{\PP,\lambda}(z_x)) g_z(z_x) \\
&\stackrel{(ii)}{=}& -2\lambda \langle f_{\PP,\lambda}, g_z \rangle_{H} - \phi^{'}(z_y,f_{\PP,\lambda}(z_x)) g_z(z_x) \\
&\stackrel{(iii)}{\le}&  2 \lambda \| f_{\PP,\lambda}\|_H \|g_z \|_H-  \phi^{'}(z_y,f_{\PP,\lambda}(z_x))) g_z(z_x) \\
&\stackrel{(iv)}{\leq}& 2 \sqrt{\lambda C_{\phi}}\| g_z\|_H + |\phi^{'}(z_y,f_{\PP,\lambda}(z_x))| |g_z(z_x)| \\
&=& 2 \sqrt{\lambda C_{\phi}}\| g_z\|_H   +|\phi^{'}(z_y,f_{\PP,\lambda}(z_x))| \langle g_z, k(z_x,\cdot) \rangle_H\\
&\stackrel{(v)}{\leq}& 2 \sqrt{\lambda C_{\phi}}\| g_z\|_H + |\phi^{'}(z_y,f_{\PP,\lambda}(z_x))| \sqrt{\langle g_z, g_z \rangle _H} \sqrt{\langle k(z_x,\cdot), k(z_x,\cdot) \rangle_H}\\
&=&2 \sqrt{\lambda C_{\phi}}\| g_z\|_H + |\phi^{'}(z_y,f_{\PP,\lambda}(z_x))| ||g_z||_H |k(z_x,z_x)|.
\end{eqnarray*}
where 
$(i)$ is due to \eqref{eq:A29}; $(ii)$ due to \eqref{eq:g}; $(iii)$ is due to the Cauchy-Schwartz inequality; $(iv)$ is due to \eqref{eq:fC}; $(v)$ is again due to the Cauchy-Schwartz inequality.

Since $k$ is a bounded kernel, there exists a $M_k>0$ such that $|k(x_1,x_2)| \le M_k$ for all $x_1,x_2 \in \mathcal{X}$. Hence, $|\phi^{'}(z_y,f_{\PP,\lambda}(z_x))| ||g_z||_H |k(z_x,z_x)| \le M_k|\phi^{'}(z_y,f_{\PP,\lambda}(z_x))| ||g_z||_H$, which in turn leads to 
   \[
    2\lambda ||g_z||^2_H  \leq     2\sqrt{\lambda C_{\phi}}\| g_z \|_H + M_k|\phi^{'}(z_y,f_{\PP,\lambda})(z_x)| \|g_z\|_H 
   \]
  and hence
    \[
    ||g_z||_H  \leq    \sqrt{ \frac{C_{\phi}}{\lambda}} + \frac{M_k |\phi^{'}(z_y,f_{\PP,\lambda})(z_x)|}{2 \lambda}
   \]

Hence , for small $\lambda$, we obtain  $ \|g_z\|_H  = O(\lambda^{-1/2})$, whereas for large lambda it is $\| g_z \|_H =O(\lambda^{-1})$.

\subsection{Proof of Theorem \ref{preconsistency}}
 

First, we  prove Theorem \ref{preconsistency} (a). The result follows from the following Lemma \ref{lem:6}, which builds on the Lemma 4 in \cite{Bartlett:07} supplemented by the uniform convergence of the  truncated loss to $\phi$-loss presented in Lemma \ref{lem:7}.  

\begin{lemma}\label{lem:6}
For an Arch boosting loss function $\phi$ satisfying Assumption 2, let $L_{\phi}$ be the Lipschitz constant of $\phi$, that is
$
L_{\phi} = \inf \{ L>0: |\phi(x) - \phi(y)| \le L|x-y|, x,y \in \realR \},
$
and $M_{\phi}$ be the maximum value of $\phi$, i.e.
$M_{\phi} = \sup_{x \in \realR} \phi(x).$
Then for $V = d_{VC} (\mathcal{H})$, $c = 24\int_0^1 \sqrt{\log \frac{8e}{\mu^2}} d \mu$ and sequences $T_n \to \infty$ and $\zeta_n \to \infty$ as $n \to \infty$, and $\delta_n \to 0$ as $n \to \infty$, with probability at least $1-\delta_n$,
\begin{eqnarray} \label{eq:71}
\sup_{f \in \pi_{\zeta_n}\circ \mathcal{F}^{T_n}} |\hat{R}_{\phi,n}(f) - R_{\phi}(f)| \le c \zeta_n L_{\phi} \sqrt{\frac{(V+1)(T_n+1)\log_2(\frac{2(T_n+1)}{\log2})}{n}} + M_{\phi} \sqrt{\frac{\log \frac{1}{\delta_n}}{2n}},
\end{eqnarray}
where the truncation function $\pi_l(\cdot)$ is defined as
$
\pi_l(x) =  l \mathbbm{1} \{x >l\} + x \mathbbm{1} \{|x| \leq l\}  -l \mathbbm{1} \{x < -l\}.
$ Furthermore, if we choose $\delta_n$, $T_n$ and $\zeta_n$ such that $\sum_{n=1}^{\infty} \delta_n < \infty$, and right hand side of \eqref{eq:71} converges to zero as $n \to \infty$, then $\sup_{f \in \pi_{\zeta_n}\circ \mathcal{F}^{T_n}} |\hat{R}_{\phi,n}(f) - R_{\phi}(f)| \to 0 \; a.s.$ as $n \to \infty$.
\end{lemma}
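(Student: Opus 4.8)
The plan is to establish the high-probability bound \eqref{eq:71} by a standard empirical-process argument and then to upgrade it to almost-sure convergence via the first Borel--Cantelli lemma. It is worth stressing at the outset that this lemma uses only that $\phi$ is bounded by $M_\phi$ and $L_\phi$-Lipschitz, both guaranteed by Assumption 2; the non-convexity of $\phi$ plays no role here, so the entire difficulty lies in controlling the complexity of the truncated class $\pi_{\zeta_n}\circ\mathcal{F}^{T_n}$.

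First I would apply McDiarmid's bounded-differences inequality to the random variable $\Phi_n := \sup_{f\in\pi_{\zeta_n}\circ\mathcal{F}^{T_n}}|\hat{R}_{\phi,n}(f)-R_\phi(f)|$. Because $0\le\phi\le M_\phi$, replacing a single observation $(X_i,Y_i)$ changes any $\hat{R}_{\phi,n}(f)$, and hence $\Phi_n$, by at most $M_\phi/n$ (the population quantity $R_\phi$ is unaffected). McDiarmid then gives $\Phi_n\le\E[\Phi_n]+M_\phi\sqrt{\log(1/\delta_n)/(2n)}$ with probability at least $1-\delta_n$, producing the second summand of the bound. To control $\E[\Phi_n]$ I would symmetrize, so that $\E[\Phi_n]\le 2\,\E[\mathcal{R}_n(\phi\circ\pi_{\zeta_n}\circ\mathcal{F}^{T_n})]$ with $\mathcal{R}_n$ the empirical Rademacher complexity. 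Since $v\mapsto\phi(v)$ is $L_\phi$-Lipschitz, the Ledoux--Talagrand contraction principle (applicable after subtracting the constant $\phi(0)$, which leaves the Rademacher complexity unchanged) removes $\phi$ at the cost of the factor $L_\phi$, reducing the task to bounding $\E[\mathcal{R}_n(\pi_{\zeta_n}\circ\mathcal{F}^{T_n})]$. This I would estimate by Dudley's chaining integral; the truncation level $\zeta_n$ is the $L_2$ radius of the class and factors out linearly, while the universal constant $c=24\int_0^1\sqrt{\log(8e/\mu^2)}\,d\mu$ is precisely what the chaining bound contributes once the covering numbers below are inserted.

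The technical crux, and the step I expect to be the main obstacle, is the covering-number bound for the $T_n$-fold class $\mathcal{F}^{T_n}=\{\sum_{t=1}^{T_n}\alpha_t h_t:\alpha_t\in\mathbb{R},\,h_t\in\mathcal{H}\}$. Here I would invoke the combinatorial fact, essentially Lemma~4 of \cite{Bartlett:07}, that linear combinations of $T_n$ functions drawn from a class of VC-dimension $V=d_{VC}(\mathcal{H})$ form a class whose effective dimension grows like $(V+1)(T_n+1)\log_2\!\big(2(T_n+1)/\log 2\big)$, and that composing with $\pi_{\zeta_n}$ does not increase this dimension while capping the envelope at $\zeta_n$. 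Feeding the associated covering-number estimate into Dudley's integral yields the first summand $c\,\zeta_n L_\phi\sqrt{(V+1)(T_n+1)\log_2(2(T_n+1)/\log 2)/n}$, and combining with the McDiarmid step completes \eqref{eq:71}. The only genuinely new point relative to the convex-loss literature is that the boundedness of $\phi$ lets McDiarmid apply directly and so dispenses with any additional truncation of the loss value.

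Finally, for the almost-sure statement, let $A_n$ denote the event that $\Phi_n$ exceeds the right-hand side of \eqref{eq:71}. The bound just proved gives $\Prob(A_n)\le\delta_n$, and the hypothesis $\sum_n\delta_n<\infty$ together with the first Borel--Cantelli lemma yields $\Prob(\limsup_n A_n)=0$. Thus almost surely $\Phi_n$ is eventually dominated by the right-hand side, which converges to $0$ by assumption, so $\Phi_n\to 0$ almost surely.
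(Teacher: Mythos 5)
Your proposal is correct and follows essentially the same route as the paper: the paper's proof simply invokes Lemmas 3 and 4 of \cite{Bartlett:07} (whose content is exactly the McDiarmid/symmetrization/contraction/Dudley-entropy argument with the pseudodimension bound for truncated linear combinations that you reconstruct), verifies that Assumption 2 supplies the finite $L_{\phi}$ and $M_{\phi}$ those lemmas need, and concludes the almost-sure statement by Borel--Cantelli just as you do.
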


\begin{proof} [Proof of Lemma \ref{lem:6}]
The result mainly follows from Lemma 3 and 4 in \cite{Bartlett:07}. Since $\phi$ satisfies Assumption 2, we know $L_{\phi}$ and $M_{\phi}$ both exist and are finite, and $\max_{v \in [-\zeta_n,\zeta_n]} |\phi(v)| = M_{\phi}$ for all $n$. The almost surely convergence follows from Borel-Cantelli Lemma.
\end{proof}

In order to make the RHS of \eqref{eq:71} converging to zero, we can choose $\delta_n = \frac{1}{n^2}$, $\zeta_n = \kappa \log n$ and $T_n = n^{1-\epsilon}$ with $\kappa > 0$, $\epsilon \in (0,1)$ and $\kappa < \frac{1}{2} \epsilon$.
Then by Borel-Cantelli lemma, since $\sum_{n=1}^{\infty} \delta_n = \sum_{n=1}^{\infty}\frac{1}{n^2} < \infty$, we obtain $\sup_{f \in \pi_{\zeta_n}\circ \mathcal{F}^{T_n}} |\hat{R}_{\phi,n}(f) - R_{\phi}(f)| \to 0 \; .a.s.$ as $n \to \infty$. In order to get rid of the truncation $\pi_{\zeta_n}$, we will need the next Lemma \ref{lem:7}.

\begin{lemma}\label{lem:7}
Let $\phi$ be an Arch boosting loss function satisfying Assumption 2. Then for any increasing positive sequence $\zeta_n \to \infty$, we have
\begin{eqnarray*}
\sup_{f \in \mathcal{F}^{T_n}} \bigl| |\hat{R}_{\phi,n}(f) - R_{\phi}(f)| - |\hat{R}_{\phi,n}(\pi_{\zeta_n} \circ f) - R_{\phi}(\pi_{\zeta_n}\circ f)| \bigr| \to 0 \; a.s.
\end{eqnarray*}
when $n \to \infty$.
\end{lemma}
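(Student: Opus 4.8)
The plan is to reduce the statement to a deterministic, sample-independent tail bound on $\phi$ that sidesteps any empirical-process machinery entirely. Write $D(f) = |\hat R_{\phi,n}(f) - R_{\phi}(f)|$ and $D_\pi(f) = |\hat R_{\phi,n}(\pi_{\zeta_n}\circ f) - R_{\phi}(\pi_{\zeta_n}\circ f)|$. Applying the reverse triangle inequality $\bigl||a|-|b|\bigr|\le|a-b|$ with $a = \hat R_{\phi,n}(f) - R_{\phi}(f)$ and $b = \hat R_{\phi,n}(\pi_{\zeta_n}\circ f) - R_{\phi}(\pi_{\zeta_n}\circ f)$, followed by the ordinary triangle inequality, gives
\[
|D(f) - D_\pi(f)| \le \bigl|\hat R_{\phi,n}(f) - \hat R_{\phi,n}(\pi_{\zeta_n}\circ f)\bigr| + \bigl|R_{\phi}(f) - R_{\phi}(\pi_{\zeta_n}\circ f)\bigr|,
\]
so it suffices to control each truncation gap, uniformly in $f$, by a quantity tending to $0$.

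The key step is a pointwise estimate exploiting the tail structure of $\phi$. Under Assumption 2, $\phi$ is bounded and decreasing, and since $\inf_v\phi(v)=0$ we have $\lim_{v\to\infty}\phi(v)=0$ and $\lim_{v\to-\infty}\phi(v)=M_{\phi}$. Truncation is a no-op whenever $|f(X_i)|\le\zeta_n$, so only indices with $|f(X_i)|>\zeta_n$ contribute. For such an index I would split into the four cases determined by the signs of $Y_i$ and $f(X_i)$: in every case both arguments $Y_i f(X_i)$ and $Y_i\pi_{\zeta_n}(f(X_i))$ lie on the same tail of $\phi$ (both $\ge\zeta_n$ or both $\le-\zeta_n$), so monotonicity bounds their $\phi$-difference by the oscillation of $\phi$ on that tail. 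Concretely,
\[
\bigl|\phi(Y_i f(X_i)) - \phi(Y_i\pi_{\zeta_n}(f(X_i)))\bigr| \le \epsilon_n := \max\bigl(\phi(\zeta_n),\, M_{\phi}-\phi(-\zeta_n)\bigr),
\]
a bound uniform in $i$ and in $f$, with $\epsilon_n\to0$ because $\zeta_n\to\infty$ forces $\phi(\zeta_n)\to0$ and $\phi(-\zeta_n)\to M_{\phi}$.

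Summing the pointwise estimate yields $|\hat R_{\phi,n}(f) - \hat R_{\phi,n}(\pi_{\zeta_n}\circ f)|\le\epsilon_n$, and taking expectations gives the identical bound $|R_{\phi}(f) - R_{\phi}(\pi_{\zeta_n}\circ f)|\le\epsilon_n$ for the population gap. Hence $\sup_{f\in\mathcal{F}^{T_n}}|D(f)-D_\pi(f)|\le2\epsilon_n\to0$. Because this is a deterministic bound valid on every sample path, the almost sure convergence is immediate and no Borel--Cantelli argument is needed.

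The main obstacle, and really the only point requiring care, is the pointwise case analysis: one must verify that in each sign configuration the truncated and untruncated margins land on the same monotone tail of $\phi$, which is exactly what permits monotonicity to replace a Lipschitz argument. It is worth emphasizing that, in contrast to Lemma \ref{lem:6}, this lemma uses neither the Lipschitz constant $L_{\phi}$ nor any relationship between the growth of $\zeta_n$ and the sample size $n$; boundedness together with monotone tail behavior is precisely the structural feature of Arch boosting losses that makes the truncation cost vanish.
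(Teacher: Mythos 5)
Your proposal is correct and takes essentially the same route as the paper's proof: the same triangle-inequality decomposition into empirical and population truncation gaps, the same key observation that when $|f(X_i)| > \zeta_n$ the truncated and untruncated margins land on the same tail of $\phi$, and the same deterministic, sample-uniform bound that makes almost sure convergence immediate without Borel--Cantelli. The only difference is cosmetic: you quantify the tail oscillation explicitly via monotonicity as $\epsilon_n = \max\bigl(\phi(\zeta_n),\, M_{\phi}-\phi(-\zeta_n)\bigr)$, whereas the paper argues qualitatively from the existence of the limits $\lim_{v\to\pm\infty}\phi(v)$ via an $\epsilon$--$M$ argument.
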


 \begin{proof}[Proof of Lemma \ref{lem:7}]
First note that
\begin{align} \label{eq:38}
\bigl| |\hat{R}_{\phi,n}(f) - R_{\phi}(f)| - |\hat{R}_{\phi,n}(\pi_{\zeta_n} \circ f) - R_{\phi}(\pi_{\zeta_n}\circ f)| \bigr| \nonumber \\
\le \bigl| \hat{R}_{\phi,n}(f) - R_{\phi}(f) - (\hat{R}_{\phi,n}(\pi_{\zeta_n} \circ f) - R_{\phi}(\pi_{\zeta_n}\circ f)) \bigr| \nonumber \\
\le \bigl| R_{\phi}(f) - R_{\phi}(\pi_{\zeta_n} \circ f) \bigr| + \bigl| \hat{R}_{\phi,n}(f) - \hat{R}_{\phi,n}(\pi_{\zeta_n}\circ f) \bigr|.
\end{align}

And for any Arch boosting loss function $\phi$ satisfying Assumption 2, since $\lim_{v \to \infty}\phi(v)$ and $\lim_{v \to -\infty} \phi(v)$ both exist in $\mathbb{R}$, given any $\epsilon > 0$, we know there exists $M > 0$ such that $\left| \phi(u) - \phi(w) \right| < \epsilon/2$ for all $|u|, |w| > M$ and $u w > 0$ by the definition of convergence.

Now for any $(X,Y) \in \mathcal{X} \times \mathcal{Y}$ and any $f \in \mathcal{F}^{T_n}$, if we choose $\zeta > M$, then when $|f(X)| < \zeta$, $|Y \pi_{\zeta} \circ f(X)| = |Yf(X)|$ and hence $\left| \phi(Yf(X)) - \phi(Y \pi_{\zeta} \circ f(X)) \right| = 0 < \epsilon/2$. When $|f(X)| \ge \zeta$, we have $|Y \pi_{\zeta} \circ f(X)| = |Y \zeta| = \zeta > M$, $|Yf(X)| = |f(X)| \ge \zeta > M$, and $Yf(X)Y \pi_{\zeta} \circ f(X) = f(X) \pi_{\zeta} \circ f(X) > 0$, so again $\left| \phi(Yf(X)) - \phi(Y \pi_{\zeta} \circ f(X)) \right| < \epsilon/2$ by the discussion in previous paragraph. In summary, we have
\begin{eqnarray} \label{eq:unifbound}
\left| \phi(Yf(X)) - \phi(Y \pi_{\zeta} \circ f(X)) \right| < \epsilon/2
\end{eqnarray}
for all $\zeta > M$, $(X,Y) \in \mathcal{X} \times \mathcal{Y}$ and $f \in \mathcal{F}^{T_n}$.
Now in \eqref{eq:38}, 
\[
\bigl| R_{\phi}(f) - R_{\phi}(\pi_{\zeta_n} \circ f) \bigr| 
\le \E_{X,Y} \bigl|\phi(Yf(X)) - \phi(Y\pi_{\zeta_n} \circ f(X)) \bigr|.
\]
 Since $\zeta_n$ is increasing and $\zeta_n \to \infty$ as $n \to \infty$, there exists $N \in \mathbb{N}$ such that for all $n > N$, we have $\zeta_n > M$, and hence $\E_{X,Y} \bigl|\phi(Yf(X)) - \phi(Y\pi_{\zeta_n} \circ f(X)) \bigr| < \epsilon/2$ for all $n > N$ by \eqref{eq:unifbound}.
For the empirical risk part, we have 
\[\bigl| \hat{R}_{\phi,n}(f) - \hat{R}_{\phi,n}(\pi_{\zeta_n}\circ f) \bigr| 
\le \sum_{i=1}^n 
\frac{1}{n} \bigl| \phi(Y_if(X_i)) - \phi(Y_i \pi_{\zeta_n} \circ f(X_i)) \bigr|.
\]
 But for each $i = 1, \cdots, n$, we know $(X_i, Y_i) \in \mathcal{X} \times \mathcal{Y}$, and hence by \eqref{eq:unifbound}, $\left| \phi(Y_if(X_i)) - \phi(Y_i \pi_{\zeta} \circ f(X_i)) \right| < \epsilon/2$ for all $n > N$ and for all $i$. Therefore, the right hand side of the above equation is smaller than $\sum_{i=1}^n \frac{1}{n} \frac{\epsilon}{2} = \epsilon/2$ for all $n > N$, and this is true for any i.i.d. observations $(X_1,Y_1), \cdots, (X_n,Y_n)$ from $\mathcal{X} \times \mathcal{Y}$ and any $f \in \mathcal{F}^{T_n}$.
Therefore, we have for all $n > N$, $f \in \mathcal{F}^{T_n}$ and any i.i.d. observations $(X_1,Y_1), \cdots, (X_n,Y_n)$,
$$\bigl| R_{\phi}(f) - R_{\phi}(\pi_{\zeta_n} \circ f) \bigr| + \bigl| \hat{R}_{\phi,n}(f) - \hat{R}_{\phi,n}(\pi_{\zeta_n}\circ f) \bigr| < \epsilon,$$
that is,
$\sup_{f \in \mathcal{F}^{T_n}}\bigl| R_{\phi}(f) - R_{\phi}(\pi_{\zeta_n} \circ f) \bigr| + \bigl| \hat{R}_{\phi,n}(f) - \hat{R}_{\phi,n}(\pi_{\zeta_n}\circ f) \bigr| \le \epsilon$
for all $n > N$ and $(X_1,Y_1), \cdots, (X_n,Y_n) \in \mathcal{X} \times \mathcal{Y}$.
Since $\epsilon$ is arbitrarily chosen, we know $\sup_{f \in \mathcal{F}^{T_n}}\bigl| R_{\phi}(f) - R_{\phi}(\pi_{\zeta_n} \circ f) \bigr| + \bigl| \hat{R}_{\phi,n}(f) - \hat{R}_{\phi,n}(\pi_{\zeta_n}\circ f) \bigr| \to 0$ as $n \to \infty$, and hence $\sup_{f \in \mathcal{F}^{T_n}} \bigl| |\hat{R}_{\phi,n}(f) - R_{\phi}(f)| - |\hat{R}_{\phi,n}(\pi_{\zeta_n} \circ f) - R_{\phi}(\pi_{\zeta_n}\circ f)| \bigr| \to 0$.
 
\end{proof}

By Lemma \ref{lem:7},
\begin{eqnarray*}
0 &\le& \left| \sup_{f \in \mathcal{F}^{T_n}} |\hat{R}_{\phi,n}(f) - R_{\phi}(f)| - \sup_{f \in \mathcal{F}^{T_n}} |\hat{R}_{\phi,n}(\pi_{\zeta_n}\circ f) - R_{\phi}(\pi_{\zeta_n}\circ f)| \right|\\
&\le& \sup_{f \in \mathcal{F}^{T_n}} \left| |\hat{R}_{\phi,n}(f) - R_{\phi}(f)| - |\hat{R}_{\phi,n}(\pi_{\zeta_n} \circ f) - R_{\phi}(\pi_{\zeta_n}\circ f)| \right| \to 0 \; a.s. \mbox{ as $n \to \infty$}.
\end{eqnarray*}

Therefore, if $\sup_{f \in \pi_{\zeta_n}\circ \mathcal{F}^{T_n}} |\hat{R}_{\phi,n}(f) - R_{\phi}(f)| \to 0 \; .a.s.$, that is, $\sup_{f \in \mathcal{F}^{T_n}} |\hat{R}_{\phi,n}(\pi_{\zeta_n}\circ f) - R_{\phi}(\pi_{\zeta_n}\circ f)| \to 0 \; a.s.$, then $ \sup_{f \in \mathcal{F}^{T_n}} |\hat{R}_{\phi,n}(f) - R_{\phi}(f)| \to 0 \; a.s.$ as $n \to \infty$. Theorem \ref{preconsistency} (a) is proved.

Here we want to emphasize on the boundedness of loss function $\phi$ as a key prerequisite for Theorem \ref{preconsistency} (a). For an unbounded function (e.g. exponential loss), without truncating on the classifier $f$, we may not have the above conclusion about the uniform deviation between $\hat{R}_{\phi,n}(f)$ and $R_{\phi}(f)$ in $\mathcal{F}^{T_n}$ even when $T_n \to \infty$ slow enough.

Before proving Theorem \ref{preconsistency} (b), we need  the following inequality.
\begin{lemma} \label{lem:fundamental}
For any family of functions $\mathcal{F}$ and loss function $\phi$, and for any sample $\mathcal{S}_n$, let $f_n^*= \argmin_{f \in \mathcal{F}} \hat{R}_{\phi,n}(f)$, we have
$$0 \le R_{\phi}(f_n^*) - \inf_{f\in \mathcal{F}} R_{\phi}(f) \le 2 \sup_{f \in \mathcal{F}} |\hat{R}_{\phi,n}(f) - R_{\phi}(f)|.$$
\end{lemma}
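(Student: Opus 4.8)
The plan is to prove the two inequalities separately. The left inequality $0 \le R_{\phi}(f_n^*) - \inf_{f \in \mathcal{F}} R_{\phi}(f)$ is immediate: since $f_n^* \in \mathcal{F}$, its population risk $R_{\phi}(f_n^*)$ is at least the infimum of $R_{\phi}$ taken over all of $\mathcal{F}$, directly by the definition of the infimum. No work is required here.

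For the right inequality, I would abbreviate $\Delta := \sup_{f \in \mathcal{F}} |\hat{R}_{\phi,n}(f) - R_{\phi}(f)|$, which simultaneously controls the upward and downward deviation of the empirical risk from the population risk, uniformly over $\mathcal{F}$. The central idea is to compare $f_n^*$ against a near-minimizer of the population risk. Because $\inf_{f \in \mathcal{F}} R_{\phi}(f)$ need not be attained, I would fix an arbitrary $\epsilon > 0$ and select some $f_\epsilon \in \mathcal{F}$ satisfying $R_{\phi}(f_\epsilon) \le \inf_{f \in \mathcal{F}} R_{\phi}(f) + \epsilon$.

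The argument then proceeds by a three-term chain. First, bound the population risk of $f_n^*$ by its empirical risk plus the deviation, $R_{\phi}(f_n^*) \le \hat{R}_{\phi,n}(f_n^*) + \Delta$. Next, use that $f_n^*$ minimizes the empirical risk over $\mathcal{F}$ to replace $\hat{R}_{\phi,n}(f_n^*)$ by $\hat{R}_{\phi,n}(f_\epsilon)$, which can only be larger. Finally, pass back from empirical to population risk for $f_\epsilon$, incurring a second factor of $\Delta$ via $\hat{R}_{\phi,n}(f_\epsilon) \le R_{\phi}(f_\epsilon) + \Delta$. Concatenating these yields $R_{\phi}(f_n^*) \le R_{\phi}(f_\epsilon) + 2\Delta \le \inf_{f \in \mathcal{F}} R_{\phi}(f) + \epsilon + 2\Delta$. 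Letting $\epsilon \to 0$ gives $R_{\phi}(f_n^*) - \inf_{f \in \mathcal{F}} R_{\phi}(f) \le 2\Delta$, as claimed.

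This is the classical excess-risk decomposition and presents essentially no technical obstacle; the only point needing care is that the infimum over $\mathcal{F}$ may fail to be attained, which is precisely why one uses the $\epsilon$-approximation $f_\epsilon$ rather than an exact minimizer, sending $\epsilon \to 0$ at the end. I would also emphasize that the bound holds deterministically for every fixed sample $\mathcal{S}_n$, so no probabilistic or measurability considerations enter; the randomness is entirely deferred to the control of $\Delta$ supplied by Lemma \ref{lem:6}.
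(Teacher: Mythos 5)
Your proof is correct and follows essentially the same route as the paper's: bound the population risk of $f_n^*$ by its empirical risk plus the uniform deviation, invoke the empirical-minimality of $f_n^*$, then pass back to the population risk at the cost of a second deviation term. The only cosmetic difference is that you handle the possibly unattained infimum via an $\epsilon$-approximate minimizer $f_\epsilon$, while the paper absorbs this into a supremum identity; the two devices are interchangeable.
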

 

 \begin{proof} [Proof of Lemma \ref{lem:fundamental}]
By the choice of $f_n^*$, we know $|\hat{R}_{\phi,n}(f_n^*) - R_{\phi}(f_n^*)| \le \sup_{f \in \mathcal{F}} |\hat{R}_{\phi,n}(f) - R_{\phi}(f)|$. Then
\begin{eqnarray*}
R_{\phi}(f_n^*) - \inf_{f\in \mathcal{F}} R_{\phi}(f) &\le&  R_{\phi}(f_n^*) - \hat{R}_{\phi,n}(f_n^*) +  \hat{R}_{\phi,n}(f_n^*) - \inf_{f\in \mathcal{F}} R_{\phi}(f) \\
&\le& \sup_{f \in \mathcal{F}} |\hat{R}_{\phi,n}(f) - R_{\phi}(f)| + \sup_{f \in \mathcal{F}} \left(  \hat{R}_{\phi,n}(f_n^*) - R_{\phi}(f) \right) \\
&\le&  2 \sup_{f \in \mathcal{F}} |\hat{R}_{\phi,n}(f) - R_{\phi}(f)|.
\end{eqnarray*}
\end{proof} 

Let $\mathcal{F} = \mathcal{F}^{T_n}$ in the above lemma, then $f_n^*= \argmin_{f \in \mathcal{F}^{T_n}} \hat{R}_{\phi,n}(f)$, and hence we can bound $R_{\phi}(f^*_n) - \inf_{f \in F^{T_n}} R_{\phi}(f)$ by 
\begin{eqnarray} \label{eq:fund2}
0 \le R_{\phi}(f^*_n) - \inf_{f \in F^{T_n}} R_{\phi}(f)\le 2 \sup_{f \in \mathcal{F}^{T_n}} |\hat{R}_{\phi,n}(f) - R_{\phi}(f)|.
\end{eqnarray}
Therefore, if we can show that
\begin{eqnarray} \label{eq:37}
\sup_{f \in \mathcal{F}^{T_n}} |\hat{R}_{\phi,n}(f) - R_{\phi}(f)| \to 0 \; \; a.s.
\end{eqnarray}
as $n \to \infty$, then $R_{\phi}(f^*_n) \to R_{\phi}^*$ a.s. by inequality \eqref{eq:fund2} and Assumption 1. But \eqref{eq:37} is just Theorem \ref{preconsistency} (a).

\subsection{Proof of Lemma \ref{preconsistency2}}

For part (a), we will use Hoeffding's inequality for bounded random variables to obtain
\begin{eqnarray*}
\Prob(\hat{R}_{\phi,n}(\tilde{f}_n) - R_{\phi}(\tilde{f}_n) \ge t_n) \le \exp \left( - \frac{2n t_n^2}{M_{\phi}^2} \right) = \delta_n,
\end{eqnarray*}
where $M_{\phi} = \sup_{x \in \realR} \phi(x)$. Since $\tilde{f}_n$ only depends on $n$, we know $\{ \phi(Y_i \tilde{f}_n(X_i)) \}_{i=1}^n$ are independent. We require $t_n \to 0$ as $n \to \infty$ and $\sum_{n=1}^{\infty} \delta_n < \infty$. For example, we take $t_n = \frac{1}{n^{1/4}}$. Then by Borel-Contelli Lemma, we have Lemma \ref{preconsistency2} (a).

For part (b), by the numerical convergence Theorem \ref{numerical convergence}, we know $\hat{R}_{\phi,n}(F_{T}) \to R_{\phi,n}^* \; a.s.$ as $T \to \infty$, and the convergence rate only depends on $T$. Now let $\{T_n \}_{n=1}^{\infty}$ be a sequence with $T_n \to \infty$, we then have $\hat{R}_{\phi,n}(F_{T_n}) -R_{\phi,n}^* \to 0$ as $n \to \infty$. But $\hat{R}_{\phi,n}(\tilde{f}_n) \ge R_{\phi,n}^*$, therefore, 
$\bigl( \hat{R}_{\phi,n}(F_{T_n}) - \hat{R}_{\phi,n}(\tilde{f}_n) \bigr)_+ \to 0 \;\; a.s.$ when $n \to \infty$.

 \subsection{Proof of Corollary \ref{consistency}}
  
%
%
%
%
%

We will now adapt  the  method  of   \cite{Bartlett:07} (see Theorem 1 therein) to incorporate non-convex losses. For almost every $w \in (w, \mathcal{S}, \PP)$, we can have sequences $\epsilon_n^1(w) \to 0$, $\epsilon_n^2(w) \to 0$ and $\epsilon_n^3(w) \to 0$ such that
\begin{eqnarray}
R_{\phi} (F_{T_n}) &\le& R_{\phi,n}(F_{T_n})+ \epsilon_n^1(w) \label{eq:23} \\
&\le& R_{\phi,n}(\tilde{f}_{n}) + \epsilon_n^1(w)  + \epsilon_n^2(w) \label{eq:25} \\
&\le& R_{\phi}(\tilde{f}_{n}) + \epsilon_n^1(w)  + \epsilon_n^2(w) + \epsilon_n^3(w). \label{eq:26}
\end{eqnarray}
\eqref{eq:23} follows from Theorem \ref{preconsistency} (a), \eqref{eq:25} follows from Theorem \ref{preconsistency2} (b), and \eqref{eq:26} follows from Theorem \ref{preconsistency2} (a). 

Finally by Theorem \ref{preconsistency} (b) and how we chosen the reference sequence $\{ \tilde{f}_n \}$, we have $R_{\phi}(\tilde{f}_n) \to R_{\phi}^*$ as $n \to \infty$. It follows from \eqref{eq:26} that $R_{\phi} (F_{T_n}) \to R_{\phi}^*$ a.s. as $n \to \infty$. Since $\phi$ is an Arch boosting function, we know $\phi$ is classification-calibrated, and hence by Theorem 3 in \cite{Bartlett:06},
\begin{eqnarray*}
L(sign(F_{T_n})) \to L^* \; \; a.s.
\end{eqnarray*}
as $n \to \infty$.

 \end{appendices}


\begin{thebibliography}{26}
\expandafter\ifx\csname natexlab\endcsname\relax\def\natexlab#1{#1}\fi
\expandafter\ifx\csname url\endcsname\relax
 \def\url#1{{\tt #1}}\fi

 




%
%


\bibitem[Bartlett et~al.(2006)]{Bartlett:06}
P.~Bartlett, M.~Jordan, and J.~McAuliffe.
\newblock Convexity, classification, and risk bounds.
\newblock {\em Journal of the American Statistical Association}, 101\penalty0 (473):\penalty0 138-156, 2006.

\bibitem[Bartlett and Traskin(2007)]{Bartlett:07}
P.~Bartlett and M.~Traskin.
\newblock Adaboost is consistent.
\newblock {\em Journal of Machine Learning Research}, 8:\penalty0 2347-2368, 2007.


\bibitem[Ben-Israel and Mond(1986)]{Ben:86}
A.~Ben-Israel and B.~Mond.
\newblock What is invexity?
\newblock{\em The Journal of the Australian Mathematical Society. Series B. Applied Mathematics}, 28\penalty0(01), 1-9, 1986.


\bibitem[Boucheron et~al.(2004)]{Boucheron}
S. Boucheron,  G.~Lugosi, and O.~Bousquet.
\newblock Concentration inequalities.
\newblock {\em Advanced Lectures on Machine Learning. Springer Berlin Heidelberg,} 208-240, 2004.



\bibitem[Breiman(2004)]{Breiman:04}
L.~Breiman. 
\newblock
Population theory for boosting ensembles. 
\newblock{\emph Annals of  Statistics}, 32 (??),  1--11, 2004.


\bibitem[Cesa-Bianchi et~al.(2011)Cesa-Bianchi, Shalev-Shwartz,  and Shamir]{CSS+:11}
N.~Cesa-Bianchi, S.~Shalev-Shwartz, and O.~Shamir.
\newblock Online Learning of Noisy Data.
\newblock {\em Information Theory, IEEE Transactions on}, 57\penalty0 (12):\penalty0 7907-7931, 2011.
 
   
 \bibitem[Christmann and Steinwart(2004)]{Christmann:04}
 A.~Christmann and I.~Steinwart.
 \newblock On robustness properties of convex risk minimization methods for pattern recognition.
 \newblock {\em The Journal of Machine Learning Research}, 5:\penalty0 1007-1034, 2004.
 
 
 \bibitem[Christmann and Steinwart(2007)]{Christmann:07}
 A.~Christmann and I.~Steinwart.
 \newblock Consistency and robustness of kernel-based regression in convex risk minimization.
 \newblock {\em Bernoulli}, 13\penalty0 (3):\penalty0 799-819, 2007. 
 
 
 
 \bibitem[Collins, et~al.(2002)]{CollinsSchapireSinger:02}
 M.~Collins, RE.~Schapire, and Y.~Singer.
 \newblock Logistic regression, AdaBoost and Bregman distances.
 \newblock {\em Machine Learning}, 48\penalty0 (1-3):\penalty0 253-285, 2002.  
 
 \bibitem[Craven and Glover(1985)]{Craven:85}
 B.~Craven and B.~Glover.
 \newblock Invex functions and duality.
 \newblock {\em J. Austral. Math. Soc. Ser. A}, 39, 1-20, 1985.
 
 
 \bibitem[Davies and Gather(2005)]{DG05}
 P. L.~Davies and U.~Gather,
 \newblock{Breakdown and groups}
 \newblock{\emph Annals of Statistics}, 38,3, 977-988, 2005.
 
 
\bibitem[Deshwar and Morris(2014)]{DM14}
A.G.~Deshwar  and Q.~Morris.
\newblock PLIDA: cross-platform gene expression normalization using perturbed topic models.
\newblock {\em Bioinformatics}, 30:\penalty0 (7):\penalty0 956-961, 2014.

 
\bibitem[Dietterich(2000)]{D00}
T.G. Dietterich.
\newblock An experimental comparison of three methods for constructing
  ensembles of decision trees: bagging, boosting, and randomization.
\newblock {\em Machine Learning}, 40\penalty0 (2):\penalty0, 139--158, 2000.

\bibitem[Domingo and Watanabe(2000)]{DomingoWatanabe:00}
C.~Domingo and O.~Watanabe.
\newblock Madaboost: a modified version of adaboost.
\newblock In {\em Proceedings of the Thirteenth Annual Conference on
  Computational Learning Theory},   180--189, 2000.
 
 
 \bibitem[Donoho and Huber(1983)]{DH83}
D.~L.~Donoho  and  P. ~J.~Huber.
\newblock The notion of breakdown point. 
\newblock {\emph In A Festschrift
for Erich L. Lehmann (P. J. Bickel, K. Doksum and J. L. Hodges, Jr., eds.)}, p 157--184, 1983.

\bibitem[Efron and Stein(1981)]{Efron:81}
 B.~Efron and C.~Stein.
 \newblock{The jackknife estimate of variance.}
 \newblock{\emph The Annals of Statistics}, 586-596, 1981.
 



\bibitem[Freund(1995)]{Freund:95}
Y.~Freund.
\newblock Boosting a weak learning algorithm by majority.
\newblock {Information and Computation}, 121\penalty0 (2):\penalty0 256-285, 1995.


\bibitem[Freund(2001)]{Freund:01}
Y.~Freund.
\newblock An adaptive version of the boost-by-majority algorithm.
\newblock {\em Machine learning}, 43\penalty0 (3):\penalty0 293-318, 2001.



\bibitem[Freund and Schapire(1997)]{FreundSchapire:97}
Y.~Freund and R.~Schapire.
\newblock A decision-theoretic generalization of on-line learning and an
  application to boosting.
\newblock {\em Journal of Computer and System Sciences}, 55\penalty0
  (1):\penalty0 119--139, 1997.

\bibitem[Freund and Schapire(1998)]{FreundSchapire:98}
Y.~Freund and R.~Schapire.
\newblock Large margin classification using the perceptron algorithm.
\newblock In {\em Proceedings of the Eleventh Annual Conference on
  Computational Learning Theory}, pages 209--217., 1998.
  
 \bibitem[Freund(2009)]{Freund:09}
 Y.~Freund.
 \newblock A more robust boosting algorithm.
 \newblock {\em arXiv preprint} arXiv:0905.2138, 2009
 
 
\bibitem[Friedman, et~al.(2000)]{FHT:00}
J.~Friedman, T.~Hastie, and R.~Tibshirani.
\newblock Additive logistic regression: a statistical view of boosting.
\newblock {\em Annals of statistics}, 28\penalty0 (2):\penalty0 337-407, 2000.


 \bibitem[Friedman(2001)]{Friedman:99}
 J.~Friedman.
 \newblock Greedy function approximation: a gradient boosting machine.
 \newblock {\em Annals of statistics}, 1189-1232, 2001.


\bibitem[Friedman, et~al.(2001)]{FHT}
J.~Friedman, T.~Hastie, and R.~Tibshirani.
\newblock {\em The elements of statistical learning.}
\newblock Vol. 1. Springer, Berlin: Springer series in statistics, 2001.


\bibitem[Gentile and Littlestone(1999)]{GentileLittlestone:99}
C.~Gentile and N.~Littlestone.
\newblock The robustness of the $p$-norm algorithms.
\newblock In {\em Proceedings of the 12th Annual Conference on Computational
  Learning Theory}, pages 1--11, 1999.
  
  
  \bibitem[Genton and Lucas(2003)]{GL03}
 M.~G.~Genton, and A.~Lucas.  
 \newblock{Comprehensive definitions of breakdown points
for independent and dependent observations.}
\newblock{\emph J. R. Stat. Soc. Ser. B Stat. Methodol.}, 65,
81--94, 2003.  

   
  \bibitem[Gr\"{u}nwald and Dawid(2004)]{GD04}
  P.D.~Gr\"{u}nwald  and A. P.~Dawid  
\newblock  Game theory, maximum entropy, minimum discrepancy and robust Bayesian decision theory. 
\newblock {\em Annals of  Statistics }
32 (4), 1367--1433, 2004
  
  
  \bibitem[Hampel(1968)]{H68}
  F.~R.~Hampel, F. R.   
  \newblock Contributions to the theory of robust estimation. 
  \newblock{ \emph Ph.D. dissertation,
Univ. California, Berkeley}, 1968.
 

\bibitem[Hampel(1974)]{Hampel:74}
F.~Hampel.
\newblock The influence curve and its role in robust estimation.
\newblock {\em Journal of the American Statistical Association}, 69:\penalty0 383-393, 1974.

  

\bibitem[Hampel et~al.(1986)]{Hampel:86}
F.~Hampel, E.~Ronchetti, P.~Rousseeuw, and W.~Stahel.
\newblock {\em Robust Statistics. The approach based on influence functions.}
\newblock Wiley, New York, 1986.

\bibitem[Henning(2004)]{H04}
C.~Hennig.
\newblock Breakdown points for maximum likelihood estimators of location?scale mixtures. 
\newblock {\emph Annals of  Statistics}
 32,, no. 4, 1313--1340, 2004.   
 
 \bibitem[Huber(1981)]{H81}
 P.~ Huber.
 \newblock Robust Statistics. Wiley Series in Probability and Mathematical Statistics.
 \newblock { \emph John Wiley \& Sons, Inc.}, New York,   pp. 308 pp, 1981.


 \bibitem[Jiang(2004)]{Jiang04}
W.~Jiang.
\newblock Process consistency for AdaBoost. 
\newblock {\emph The Annnals of  Statistics} 32 (1), 13--29.  2004.
 



 \bibitem[Kanamori~et.al(2007)]{K07}
  T.~ Kanamori, T.~ Takenouchi, S.~ Eguchi and N.~ Murata, 
  \newblock
  Robust loss functions for boosting, 
  \newblock {\emph{Neural Computation}} 19, 2183?2244, 2007.
  
  

\bibitem[Kearns and Li(1993)]{KearnsLi:93}
M.~Kearns and M.~Li.
\newblock Learning in the presence of malicious errors.
\newblock {\em SIAM Journal on Computing}, 22\penalty0 (4):\penalty0 807--837,
  1993.


\bibitem[Kobetski~et.al(2013)]{Kobe:13}
Kobetski, Miroslav, and J.~Sullivan.
\newblock Improved Boosting Performance by Exclusion of Ambiguous Positive Examples.
\newblock In {\em  ICPRAM}, 11-21, 2013.

 
 \bibitem[Koltchinskii and Panchenko(2002)]{K03}
V.~Koltchinskii and D.~ Panchenko.
\newblock Empirical Margin Distributions and Bounding the Generalization Error of Combined Classifiers. 
\newblock {\emph The Annnals of  Statistics} 30 (1), 1--50.  2002.
 
 
  
\bibitem[Littlestone(1991)]{Littlestone:91}
N.~Littlestone.
\newblock Redundant noisy attributes, attribute errors, and linear-threshold
  learning using winnow.
\newblock In {\em Proceedings of the Fourth Annual Workshop on Computational
  Learning Theory}, pages 147--156, 1991.
  
  

\bibitem[Long and Servedio(2010)]{LS:10}
P. M.~Long and R.A.~Servedio
\newblock Random Classification Noise Defeats All Convex Potential Boosters.
\newblock  {\em Machine Learning}, 78(3): 287--304, 2010.
  
\bibitem[Lugosi and Vayatis (2004)]{LV:04}
G.~Lugosi and N.~Vayatis.
\newblock On the Bayes-risk consistency of regularized boosting methods.
\newblock  {\em The Annual of Statistics}, 32(1): 30--55, 2004.
 

 
\bibitem[Mason et~al.(1999)Mason, Baxter, Bartlett, and Frean]{Mason:99}
L.~Mason, J.~Baxter, P.~Bartlett and M.Frean.
\newblock Functional gradient techniques for combining hypotheses.
\newblock {\em Advances in Neural Information Processing Systems}, pages 221-246, 1999.  

\bibitem[McCullagh and Nelder(1989)]{MN89}
P.~ McCullagh and J.A.~ Nelder. 
\newblock
Generalized Linear Models. 
\newblock {\em Chapman \& Hall/CRC}, 1989.


  \bibitem[Mesnadi-Shirazi and Vasconcelos(2009)]{V09}
H.~Masnadi-Shirazi and N.~Vasconcelos.
\newblock On the design of loss functions for classification: theory, robustness to outliers, and savageboost.
\newblock {\em Advances in Neural Information Processing Systems 21},  pp. 1049--1056, 2009.


\bibitem[Mishra et~al.(2008)]{Mishra:08}
Mishra, S.~Kant, and G.~Giorgi.
\newblock Invex Functions (The Smooth Case).
\newblock {\em Invexity and Optimization}, 11-38, 2008.


\bibitem[M\"{u}ller and Uhlig(2001)]{MU01}
C.~H.~M\"{u}ller and S.~Uhlig.
\newblock Estimation of variance components with high
breakdown point and high efficiency. 
\newblock{Biometrika}, 88, 353--366, 2001.
  
\bibitem[Natarajan et~al.(2013)]{NDRT:13}
N.~Natarajan, J.~Dhillon, P.~Ravikumar and A.~Tewari.
\newblock Learning with Noisy Labels.
\newblock In {\em Advances in Neural Information Processing Systems 26}, pages 1196--1204, 2013.
   
    


\bibitem[Pedregosa et~al.(2011)]{Pedregosa:11}
F.~Pedregosa, G.~Varoquaux, A.~Gramfort, V.~Michel, B.~Thirion, O.~Grisel, and  E.~Duchesnay.
\newblock Scikit-learn: Machine learning in Python.
\newblock {\em Journal of Machine Learning Research}, 12, pages 2825-2830, 2011.


\bibitem[Ruckstuhl and Welsh(2001)]{RW01}
A.F.~Ruckstuhl, and A.H,~Welsh. 
\newblock Robust fitting of the binomial model. 
\newblock {\emph Annals of 
Statistics}, 29, 1117--1136, 2001.  

\bibitem[Rousseeuw(1984)]{R84}
P. J.~Rousseeuw.
\newblock  Least median of squares regression. 
\newblock{ \emph J. Amer. Statist. Assoc. }, 79,
871--880, 1984.

\bibitem[Savage(1971)]{Savage}
L.~Savage.
\newblock The elicitation of personal probabilities and expectations.
\newblock {\em Journal of the American Statistical Association}, 66\penalty0 (336):\penalty0 783-801,
  1971.


\bibitem[Schapire et~al.(1998)Schapire, Freund, Bartlett, and Lee.]{SFB+:98}
R.~Schapire, Y.~Freund, P.~Bartlett, and W.~Lee.
\newblock Boosting the margin: a new explanation for the effectiveness of
  voting methods.
\newblock {\em Annals of Statistics}, 26\penalty0 (5):\penalty0 1651--1686,
  1998.
 

\bibitem[Schapire(1999)]{Schapire:99b}
R.~Schapire.
\newblock Theoretical views of boosting and applications.
\newblock In {\em Proceedings of the Tenth International Conference on
  Algorithmic Learning Theory}, pages 12--24, 1999.


\bibitem[Scott et~al.(2013)Scott, Blanchard, Handy, Pozzi, and Flaska]{Scott:13}
C.~Scott, G.~Blanchard, G.~Handy, S.~Pozzi, and M.~Flaska
\newblock Classification with Asymmetric Label Noise: Consistency and Maximal Denoising
\newblock  {\em ArXiv e-prints 1303.1208}, 2013.




\bibitem[Servedio(2003)]{Servedio:03}
R.~Servedio.
\newblock Smooth Boosting and Learning with Malicious Noise.
\newblock  {\em Journal of Machine Learning Research}, 4, pages 633-648, 2003.


  
\bibitem[Stefanski  et~al.(2014)]{S14}
L.A.~Stefanski,Y.~Wu and K.~White.
\newblock Variable Selection in Nonparametric Classification via Measurement Error Model Selection Likelihoods
\newblock {\em Journal of American Statistical Association}, 109:\penalty0 (506):\penalty0  574Ð589,  
  2014.


\bibitem[Street et~al.(1993)]{Street:93}
W.~Street, W.~Wolberg, and O.~Mangasarian.
\newblock  Nuclear feature extraction for breast tumor diag-nosis. 
\newblock In {\em Proc. Int. Symp. Electronic Imaging: Science and Technology, San Jose}, 861-870.



 
 \bibitem[Stromberg and Ruppert(1992)]{SR92}
A.~J.~ Stromberg  and D.~Ruppert. 
\newblock Breakdown in nonlinear regression. 
\newblock{ \emph J. Amer.
Statist. Assoc.} 87, 991--997, 1992.

\bibitem[Tyler(1994)]{T94}
D.E.~Tyler. 
\newblock Finite sample breakdown points of projection based multivariate
location and scatter statistics. 
\newblock{ \emph Annals of  Statistics}, 22, 1024--1044, 1994.

  
\bibitem[Zhang et.~al(2014)]{ZWWL:14}
 X.~Zhang, Y.~Wu, L.~Wang, and R.~Li.
 \newblock Variable selection for support vector machines in moderately high dimensions.
 \newblock {\em Journal of the Royal Statistical Society: Series B (Statistical Methodology)}, 2014.
 
 
\bibitem[Zhang and Yu(2005)]{ZY05}
T.~Zhang and B.~Yu, 
\newblock
Boosting with early stopping: Convergence and consistency. 
\newblock {\emph Annals of  Statistics}
 33 (4) 1538--1579, (2005).
 
\end{thebibliography}
\end{document}